\newtheorem{definition}{Definition}
\newtheorem{proposition}[definition]{Proposition}
\newtheorem{lemma}[definition]{Lemma}
\newtheorem{theorem}[definition]{Theorem}
\newtheorem{corollary}[definition]{Corollary}
\newcommand{\calB}{\mathcal{B}}
\newcommand{\calA}{\mathcal{A}}
\newcommand{\calF}{\mathcal{F}}
\newcommand{\calH}{\mathcal{H}}
\newcommand{\one}{\mathds{1}}
\newcommand{\R}{\mathbb{R}}  
\newcommand{\indicator}[1]{\one\left[#1 \right]} 
\newcommand{\ip}[2]{\left\langle #1, #2 \right\rangle} 
\newcommand{\e}{\mathbf{e}}
\newcommand{\zero}{\mathbf{0}}
\DeclareMathOperator{\B}{B}
\DeclareMathOperator*{\argmax}{argmax}
\DeclareMathOperator*{\argmin}{argmin}
\DeclareMathOperator*{\Exp}{\mathbf{E}}  
\DeclareMathOperator*{\polylog}{polylog}
\DeclareMathOperator*{\Vol}{Vol}
\definecolor{darkgreen}{rgb}{0,0.5,0}
\definecolor{darkred}{rgb}{0.7,0,0}
\definecolor{teal}{rgb}{0.3,0.8,0.8}
\definecolor{orange}{rgb}{1.0,0.5,0.0}
\definecolor{purple}{rgb}{0.8,0.0,0.8}
\definecolor{blue}{rgb}{0.0,0.0,1.0}
\newcounter{protocol}
\newenvironment{protocol}[1][htb]{%
  \let\c@algorithm\c@protocol
  \renewcommand{\ALG@name}{Protocol}
  \begin{algorithm}[#1]%
  }{\end{algorithm}
}
\begin{document}



\icmltitlerunning{Bandit Multiclass Linear Classification: Efficient Algorithms for the Separable Case}

\twocolumn[
\icmltitle{Bandit Multiclass Linear Classification: \\ Efficient Algorithms for the Separable Case}

\begin{icmlauthorlist}
\icmlauthor{Alina Beygelzimer\textsuperscript{*}}{yahoo}
\icmlauthor{D\'avid P\'al\textsuperscript{*}}{yahoo}
\icmlauthor{Bal\'azs Sz\"or\'enyi\textsuperscript{*}}{yahoo}
\icmlauthor{Devanathan Thiruvenkatachari\textsuperscript{*}}{nyu}
\icmlauthor{Chen-Yu Wei\textsuperscript{*}}{usc}
\icmlauthor{Chicheng Zhang\textsuperscript{*}}{microsoft}

\end{icmlauthorlist}

\icmlaffiliation{yahoo}{Yahoo Research, New York, NY, USA}
\icmlaffiliation{nyu}{New York University, New York, NY, USA}
\icmlaffiliation{usc}{University of Southern California, Los Angeles, CA, USA}
\icmlaffiliation{microsoft}{Microsoft Research, New York, NY, USA}

\icmlcorrespondingauthor{D\'avid P\'al}{davidko.pal@gmail.com}

\icmlkeywords{multi-armed bandits, contextual bandits, online classification, linear separability}

\vskip 0.3in
]

\printAffiliationsAndNotice{\textsuperscript{*}The authors are listed in alphabetical order.}

\begin{abstract}
We study the problem of efficient online multiclass linear classification with
bandit feedback, where all examples belong to one of $K$ classes and lie in the
$d$-dimensional Euclidean space. Previous works have left open the challenge of
designing efficient algorithms with finite mistake bounds when the data is
linearly separable by a margin $\gamma$. In this work, we take a first step
towards this problem. We consider two notions of linear separability,
\emph{strong} and \emph{weak}.

\begin{enumerate}
\item Under the strong linear separability condition, we design an efficient
algorithm that achieves a near-optimal mistake bound of
$O\left( K/\gamma^2 \right)$.

\item Under the more challenging weak linear separability condition, we design
an efficient algorithm with a mistake bound of $\min (2^{\widetilde{O}(K \log^2
(1/\gamma))}, 2^{\widetilde{O}(\sqrt{1/\gamma} \log K)})$.\footnote{We use the
notation $\widetilde{O}(f(\cdot)) = O(f(\cdot) \polylog(f(\cdot)))$.} Our
algorithm is based on kernel Perceptron and is inspired by the work of
\citet{Klivans-Servedio-2008} on improperly learning intersection of halfspaces.
\end{enumerate}
\end{abstract}

\section{Introduction}
\label{section:introduction}

We study the problem of \textsc{Online Multiclass Linear Classification with
Bandit Feedback}~\citep{Kakade-Shalev-Shwartz-Tewari-2008}. The problem can be
viewed as a repeated game between a learner and an adversary. At each time step
$t$, the adversary chooses a labeled example $(x_t, y_t)$ and reveals the
feature vector $x_t$ to the learner. Upon receiving $x_t$, the learner makes a
prediction $\widehat{y}_t$ and receives feedback. In contrast with the standard
full-information setting, where the feedback given is the correct label $y_t$,
here the feedback is only a binary indicator of whether the prediction was
correct or not. The protocol of the problem is formally stated below.

\begin{protocol}[h]
\caption{\textsc{Online Multiclass Linear Classification with Bandit Feedback}
\label{algorithm:game-protocol}}
\textbf{Require:} Number of classes $K$, number of rounds $T$. \\
\textbf{Require:} Inner product space $(V,\ip{\cdot}{\cdot})$. \\
\For{$t=1,2,\dots,T$}{
Adversary chooses example $(x_t, y_t) \in V \times \{1,2,\dots,K\}$ where $x_t$ is revealed to the learner.\\
Predict class label $\widehat y_t \in \{1,2,\dots,K\}$.\\
Observe feedback $z_t = \indicator{\widehat y_t \neq y_t} \in \{0,1\}$.
}
\end{protocol}

The performance of the learner is measured by its cumulative number of
mistakes $\sum_{t=1}^T z_t = \sum_{t=1}^T \indicator{\widehat y_t \neq y_t}$,
where $\one$ denotes the indicator function.

In this paper, we focus on the special case when the examples chosen by the
adversary lie in $\R^d$ and are linearly separable with a margin. We introduce
two notions of linear separability, \emph{weak} and \emph{strong}, formally
stated in \autoref{definition:linear-separability}. The standard notion of
multiclass linear separability~\citep{Crammer-Singer-2003} corresponds to the
weak linear separability. For multiclass classification with $K$ classes, weak
linear separability requires that all examples from the same class lie in an
intersection of $K-1$ halfspaces and all other examples lie in the complement of
the intersection of the halfspaces. Strong linear separability means that
examples from each class are separated from the remaining examples by a
\emph{single} hyperplane.

In the full-information feedback setting, it is well known
\citep{Crammer-Singer-2003} that if all examples have norm at most $R$ and are
weakly linearly separable with a margin $\gamma$, then the \textsc{Multiclass
Perceptron} algorithm makes at most $\lfloor 2(R/\gamma)^2 \rfloor$ mistakes. It
is also known that any (possibly randomized) algorithm must make $\frac{1}{2}
\left\lfloor (R/\gamma)^2 \right \rfloor$ mistakes in the worst case. The
\textsc{Multiclass Perceptron} achieves an information-theoretically optimal
mistake bound, while being time and memory efficient.\footnote{We call an algorithm
 computationally efficient, if its running time is polynomial in $K$, $d$, $1/\gamma$
 and $T$.}
\footnote{For
completeness, we present these folklore results along with their proofs in
Appendix~\ref{section:multiclass-perceptron-proofs} in the supplementary
material.}

The bandit feedback setting, however, is much more challenging. For the
strongly linearly separable case, we are not aware of any
prior efficient algorithm with a finite mistake bound.
~\footnote{Although~\citet{Chen-Chen-Zhang-Chen-Zhang-2009}
claimed that their Conservative OVA algorithm with PA-I update has a finite
mistake bound under the strong linear separability condition, their
Theorem~2 is incorrect: first, their Lemma~1 (with $C = +\infty$) along with their Theorem~1
implies a mistake upper bound of
$(\frac{R}{\gamma})^2$, which contradicts the lower
bound in our Theorem~\ref{theorem:strongly-separable-examples-mistake-lower-bound};
second, their Lemma~1 cannot be directly applied to the bandit feedback setting.}
We design a simple and
efficient algorithm
(Algorithm~\ref{algorithm:algorithm-for-strongly-linearly-separable-examples})
that makes at most $O(K (R/\gamma)^2)$ mistakes in expectation. Its memory
complexity and per-round time complexity are both $O(dK)$. The algorithm can be
viewed as running $K$ copies of the \textsc{Binary Perceptron} algorithm, one
copy for each class. We prove that any (possibly randomized) algorithm must make
$\Omega(K (R/\gamma)^2)$ mistakes in the worst case. The extra $O(K)$
multiplicative factor in the mistake bound, as compared to the full-information
setting, is the price we pay for the bandit feedback, or more precisely, the
lack of full-information feedback.

For the case when the examples are weakly linearly separable, it was open for a
long time whether there exist \textit{efficient} algorithms with finite mistake
bound~\citep{Kakade-Shalev-Shwartz-Tewari-2008, Beygelzimer-Orabona-Zhang-2017}.
Furthermore, \citet{Kakade-Shalev-Shwartz-Tewari-2008} ask the question:
Is there \textit{any} algorithm with a finite mistake bound that has no explicit
dependence on the dimensionality of the feature vectors? We answer both
questions affirmatively by providing an efficient algorithm with finite
dimensionless mistake bound (Algorithm~\ref{algorithm:kernelized}).\footnote{An
inefficient algorithm was given by~\cite{Daniely-Helbertal-2013}.}

The strategy used in Algorithm~\ref{algorithm:kernelized} is to construct a
non-linear feature mapping $\phi$ and associated positive definite kernel
$k(x,x')$ that makes the examples \emph{strongly} linearly separable in a
higher-dimensional space. We then use the kernelized version of
Algorithm~\ref{algorithm:algorithm-for-strongly-linearly-separable-examples} for
the strongly separable case. The kernel $k(x,x')$ corresponding to the feature
mapping $\phi$ has a simple explicit formula and can be computed in $O(d)$ time,
making Algorithm~\ref{algorithm:kernelized} computationally efficient. For
details on kernel methods see e.g.~\citep{Scholkopf-Smola-2002} or
\citep{Shawe-Taylor-Cristianini-2004}.

The number of mistakes of the kernelized algorithm depends on the margin in the
corresponding feature space. We analyze how the mapping $\phi$ transforms the
margin parameter of weak separability in the original space $\R^d$ into a margin
parameter of strong separability in the new feature space. This problem is
related to the problem of learning intersection of halfspaces and has been
studied previously by \citet{Klivans-Servedio-2008}. As a side result, we
improve on the results of \citet{Klivans-Servedio-2008} by removing the
dependency on the original dimension $d$.

The resulting kernelized algorithm runs in time polynomial in the
original dimension of the feature vectors $d$, the number of classes $K$, and
the number of rounds $T$. We prove that if the examples lie in the unit ball of
$\R^d$ and are weakly linearly separable with margin $\gamma$,
Algorithm~\ref{algorithm:kernelized} makes at
most $\min (2^{\widetilde{O}(K \log^2
(1/\gamma))}, 2^{\widetilde{O}(\sqrt{1/\gamma} \log K)})$ mistakes.

In Appendix~\ref{section:nearest-neighbor-algorithm}, we propose and analyze a
very different algorithm for weakly linearly separable data. The algorithm is
based on the obvious idea that two points that are close enough must have the
same label.

Finally, we study two questions related to the computational and
information-theoretic hardness of the problem. Any algorithm for the bandit
setting collects information in the form of so called \emph{strongly labeled}
and \emph{weakly labeled} examples. Strongly labeled examples are those for
which we know the class label. Weakly labeled example is an example for which we
know that class label can be anything except for one particular class. In
Appendix~\ref{section:np-hardness-of-weak-labeling-problem}, we show that the
offline problem of finding a multiclass linear classifier consistent with a set
of strongly and weakly labeled examples is NP-hard. In
Appendix~\ref{section:mistake-lower-bound-for-ignorant-algorithms}, we prove a
lower bound on the number of mistakes of any algorithm that uses only
strongly-labeled examples and ignores weakly labeled examples.

\section{Related work}
\label{section:related-work}

The problem of online bandit multiclass learning was initially formulated in the
pioneering work of~\citet{Auer-Long-1999} under the name of ``weak reinforcement
model''. They showed that if all examples agree with some classifier from a
prespecified hypothesis class $\calH$, then the optimal mistake bound in the
bandit setting can be upper bounded by the optimal mistake bound in the full
information setting, times a factor of $(2.01 + o(1))K \ln K$. \citet{Long-2017}
later improved the factor to $(1 + o(1)) K \ln K$ and showed its
near-optimality. \citet{Daniely-Helbertal-2013} extended the results to the
setting where the performance of the algorithm is measured by its regret, i.e.
the difference between the number of mistakes made by the algorithm and the
number of mistakes made by the best classifier in $\calH$ in hindsight. We
remark that all algorithms developed in this context are computationally
inefficient.

The linear classification version of this problem is initially studied
by~\citet{Kakade-Shalev-Shwartz-Tewari-2008}. They proposed two computationally
inefficient algorithms that work in the weakly linearly separable setting, one
with a mistake bound of $O(K^2 d \ln(d/\gamma))$, the other with a mistake bound
of $\widetilde{O}((K^2/\gamma^2) \ln T)$. The latter result was later improved
by \citet{Daniely-Helbertal-2013}, which gives a computationally inefficient
algorithm with a mistake upper bound of $\widetilde{O}(K/\gamma^2)$. In
addition,~\citet{Kakade-Shalev-Shwartz-Tewari-2008} propose the
\textsc{Banditron} algorithm, a computationally efficient algorithm that has a
$O(T^{2/3})$ regret against the multiclass hinge loss in the general setting,
and has a $O(\sqrt{T})$ mistake bound in the $\gamma$-weakly linearly separable
setting. In contrast to mild dependencies on the time horizon for mistake bounds
of computationally inefficient algorithms, the polynomial dependence of
\textsc{Banditron}'s mistake bound on the time horizon is undesirable for
problems with a long time horizon, in the weakly linearly separable setting. One
key open question left by~\citet{Kakade-Shalev-Shwartz-Tewari-2008} is whether
one can design computationally efficient algorithms that achieve mistake bounds
that match or improve over those of inefficient algorithms. In this paper, we
take a step towards answering this question, showing that efficient algorithms
with mistake bounds quasipolynomial in $1/\gamma$ (for constant $K$) and
quasipolynomial in $K$ (for constant $\gamma$) can be obtained.



The general problem of linear bandit multiclass learning has received considerable attention~\citep{Abernethy-Rakhlin-2009, Wang-Jin-Valizadegan-2010,
Crammer-Gentile-2013, Hazan-Kale-2011, Beygelzimer-Orabona-Zhang-2017,
Foster-Kale-Luo-Mohri-Sridharan-2018}. \citet{Chen-Lin-Lu-2014,
Zhang-Jung-Tewari-2018} study online bandit multiclass boosting under bandit
feedback, where one can view boosting as linear classification by treating each
base hypothesis as a separate feature.
In the weakly linearly separable setting, however,
these algorithms can only guarantee a mistake
bound of ${O}(\sqrt{T})$ at best.

The problem considered here is a special case of the
contextual bandit problem~\citep{Auer-2003, Langford-Zhang-2008}.
In this general problem, there is a hidden cost vector $c_t$ associated with every prediction in round $t$.  Upon receiving $x_t$ and predicting $\widehat{y}_t \in \{1,\ldots,K\}$, the learner gets to observe the incurred cost $c_t(\widehat{y}_t)$.
The goal of the learner is to minimize its regret with respect to the best predictor in some predefined policy class $\Pi$, given by $\sum_{t=1}^T c_t(\widehat{y}_t) -
\min_{\pi \in \Pi} \sum_{t=1}^T c_t(\pi(x_t))$.
Bandit multiclass learning is a special case
where the cost $c_t(i)$ is the classification error $\indicator{i \neq y_t}$ and
the policy class is the set of linear classifiers $\cbr{x \mapsto \argmax_y (Wx)_y: W \in \R^{K \times d}}$.
There has been significant progress on the general contextual bandit problem
assuming access to an optimization oracle that returns a policy in $\Pi$ with the smallest total cost on any given set of cost-sensitive examples~\citep{Dudik-Hsu-Kale-Karampatziakis-Langford-Reyzin-Zhang-2011,
Agarwal-Hsu-Kale-Langford-Li-Schapire-2014, Rakhlin-Sridharan-2016,
Syrgkanis-Krishnamurthy-Schapire-2016,
Syrgkanis-Luo-Krishnamurthy-Schapire-2016}.
However, such an oracle abstracting efficient search through $\Pi$ is generally not available
in our setting due to computational hardness results~\citep{Arora-Babai-Stern-Sweedyk-1997}.



Recently, \citet{Foster-Krishnamurthy-2018} developed a rich theory of
contextual bandits with surrogate losses, focusing on regrets of the form
$\sum_{t=1}^T c_t(\widehat{y}_t) - \min_{f \in \calF} \sum_{t=1}^T \frac{1}{K}
\sum_{i=1}^K c_t(i) \phi( f_i(x_t) )$, where $\calF$ contains score functions
$f = (f_1, \ldots, f_K)$ such that $\sum_{i=1}^K f_i(\cdot) \equiv 0$, and $\phi(s) = \max(1 - \frac
s \gamma, 0)$ or $\min(1, \max(1 - \frac s \gamma, 0))$. On one hand, it gives
information-theoretic regret upper bounds for various settings of $\calF$. On
the other hand, it gives an efficient algorithm with an $O(\sqrt{T})$ regret
against the benchmark of $\calF = \cbr{x \mapsto W x: W \in \R^{K \times d},
\one^T W = 0}$. A direct application of this result to \textsc{Online Bandit
Multiclass Linear Classification} gives an algorithm with $O(\sqrt{T})$ mistake
bound in the strongly linearly separable case.




\section{Notions of linear separability}
\label{section:notions-of-linear-separability}
Let $[n]=\{1,2,\ldots,n\}$.
We define two notions of linear separability for multiclass classification. The
first notion is the standard notion of linear separability used in the proof of
the mistake bound for the \textsc{Multiclass Perceptron} algorithm \citep[see e.g.][]{Crammer-Singer-2003}. The second
notion is stronger, i.e. more restrictive. 

\begin{definition}
[Linear separability]
\label{definition:linear-separability}
Let $(V,\ip{\cdot}{\cdot})$ be an inner product space, $K$ be a positive
integer, and $\gamma$ be a positive real number.
We say that labeled examples $(x_1, y_1),
(x_2, y_2), \dots, (x_T, y_T) \in V \times [K]$ are
\\ \vskip .01in
\emph{weakly linearly separable with a margin $\gamma$} if there exist vectors
$w_1, w_2, \dots, w_K \in V$ such that
\begin{align}
\label{equation:weak-linear-separability-1}
\sum_{i=1}^K \norm{w_i}^2 & \le 1, \\
\label{equation:weak-linear-separability-2}
\ip{x_t}{w_{y_t}} & \ge \ip{x_t}{w_i} + \gamma \quad
\forall t \in [T]\ \forall i \in [K] \setminus \{y_t\}, 
\end{align}

and \emph{strongly linearly separable with a margin $\gamma$} if there exist vectors
$w_1, w_2, \dots, w_K \in V$ such that
\begin{align}
\label{equation:strong-linear-separability-1}
\sum_{i=1}^K \norm{w_i}^2 & \le 1, \\
\label{equation:strong-linear-separability-2}
\ip{x_t}{w_{y_t}} & \ge \gamma/2 \quad  \forall t \in [T], \\
\label{equation:strong-linear-separability-3}
\ip{x_t}{w_i} & \le - \gamma/2 \quad
\forall t \in [T]\ \forall i \in [K] \setminus \{y_t\}.
\end{align}
\end{definition}


The notion of strong linear separability has appeared in the literature; see
e.g.~\citep{Chen-Chen-Zhang-Chen-Zhang-2009}. Intuitively, strong linear
separability means that, for each class $i$, the set of examples belonging to
class $i$ and the set of examples belonging to the remaining $K-1$ classes are
separated by a linear classifier $w_i$ with margin $\frac{\gamma}{2}$.

It is easy to see that if a set of labeled examples is strongly linearly
separable with margin $\gamma$, then it is also weakly linearly separable with
the same margin (or larger). Indeed, if $w_1, w_2, \dots, w_K \in V$ satisfy
\eqref{equation:strong-linear-separability-1},
\eqref{equation:strong-linear-separability-2},
\eqref{equation:strong-linear-separability-3} then they satisfy
\eqref{equation:weak-linear-separability-1} and
\eqref{equation:weak-linear-separability-2}.

In the special case of $K=2$, if a set of labeled examples is weakly
linearly separable with a margin $\gamma$, then it is also strongly linearly
separable with the same margin. Indeed, if $w_1, w_2$ satisfy
\eqref{equation:weak-linear-separability-1} and
\eqref{equation:weak-linear-separability-2} then $w_1' = \frac{w_1 - w_2}{2}$,
$w_2' = \frac{w_2 - w_1}{2}$ satisfy
\eqref{equation:strong-linear-separability-1},
\eqref{equation:strong-linear-separability-2},
\eqref{equation:strong-linear-separability-3}.
Equation~\eqref{equation:strong-linear-separability-1} follows from
$\norm{w_i'}^2 \le (\frac{1}{2} \norm{w_1} + \frac{1}{2} \norm{w_2})^2 \le
\frac{1}{2}\norm{w_1}^2 + \frac{1}{2}\norm{w_2}^2 \le \frac{1}{2}$ for $i=1,2$.
Equations~\eqref{equation:strong-linear-separability-2}
and~\eqref{equation:strong-linear-separability-3} follow from the fact that
$w_1' - w_2' = w_1 - w_2$.

However, for any $K \ge 3$ and any inner product space of dimension at least
$2$, there exists a set of labeled examples that is weakly linearly separable
with a positive margin $\gamma$ but is not strongly linearly separable with
any positive margin.~\autoref{figure:weakly-linearly-separable-examples-with-margin}
shows one such set of labeled examples.

\begin{figure}
\begin{center}
 \scalebox{.8}{\begin{tikzpicture}[scale=0.4]

  \useasboundingbox (-6.5,-6) rectangle (12.5,6);


  \coordinate (Origin) at (0,0);

  \coordinate [label={[xshift=-5mm, yshift=2.5mm]$\ip{w_1 - w_2}{x} = 0$}] (A) at (120:5);
  \coordinate [label={[xshift=-5mm, yshift=-10mm]$\ip{w_2 - w_3}{x} = 0$}] (B) at (240:5);
  \coordinate [label={[xshift=+17mm, yshift=-1.5mm]$\ip{w_3 - w_1}{x} = 0$}] (C) at (360:5);

  \draw (Origin) circle (5);

  \draw[ultra thick] (Origin) -- (A);
  \draw[ultra thick] (Origin) -- (B);
  \draw[ultra thick] (Origin) -- (C);

  \draw[color=black, fill=none]
                    (7.94:3.79) circle (4pt)
                    (10.97:4.51) circle (4pt)
                    (18.63:2.30) circle (4pt)
                    (21.20:4.42) circle (4pt)
                    (26.23:3.57) circle (4pt)
                    (31.66:4.63) circle (4pt)
                    (41.52:4.77) circle (4pt)
                    (42.22:2.06) circle (4pt)
                    (44.90:3.30) circle (4pt)
                    (52.95:4.01) circle (4pt)
                    (57.79:2.62) circle (4pt)
                    (58.25:4.62) circle (4pt)
                    (65.64:3.86) circle (4pt)
                    (68.11:1.10) circle (4pt)
                    (71.38:3.00) circle (4pt)
                    (72.60:1.97) circle (4pt)
                    (77.86:4.23) circle (4pt)
                    (87.11:4.44) circle (4pt)
                    (92.77:2.90) circle (4pt)
                    (93.88:2.09) circle (4pt)
                    (93.97:3.90) circle (4pt)
                    (100.87:4.39) circle (4pt)
                    (105.50:3.40) circle (4pt)
                    (107.58:2.57) circle (4pt)
                    (110.09:4.31) circle (4pt)
                    ;

  \draw[color=gray, fill=gray]
                    (130.71:4.76) circle (4pt)
                    (132.80:3.70) circle (4pt)
                    (133.71:2.84) circle (4pt)
                    (141.55:4.39) circle (4pt)
                    (144.89:2.14) circle (4pt)
                    (144.96:3.67) circle (4pt)
                    (150.11:2.92) circle (4pt)
                    (152.47:4.79) circle (4pt)
                    (157.09:0.90) circle (4pt)
                    (160.45:4.13) circle (4pt)
                    (162.92:1.80) circle (4pt)
                    (164.13:2.68) circle (4pt)
                    (170.32:4.44) circle (4pt)
                    (174.55:3.69) circle (4pt)
                    (179.08:3.01) circle (4pt)
                    (179.39:4.49) circle (4pt)
                    (182.94:2.24) circle (4pt)
                    (193.19:4.09) circle (4pt)
                    (196.53:3.29) circle (4pt)
                    (197.93:1.08) circle (4pt)
                    (198.39:2.57) circle (4pt)
                    (208.08:1.93) circle (4pt)
                    (209.68:4.07) circle (4pt)
                    (210.92:3.16) circle (4pt)
                    (217.72:4.79) circle (4pt)
                    (222.35:3.61) circle (4pt)
                    (225.84:4.36) circle (4pt)
                    (227.89:2.91) circle (4pt)
                    ;

  \draw[color=black, fill=black]
                    (248.23:3.89) circle (4pt)
                    (251.03:4.78) circle (4pt)
                    (253.86:2.55) circle (4pt)
                    (258.82:3.67) circle (4pt)
                    (270.54:2.70) circle (4pt)
                    (271.79:3.46) circle (4pt)
                    (272.88:2.00) circle (4pt)
                    (274.04:4.62) circle (4pt)
                    (282.58:4.75) circle (4pt)
                    (283.72:3.04) circle (4pt)
                    (286.97:1.29) circle (4pt)
                    (288.41:3.80) circle (4pt)
                    (293.24:4.78) circle (4pt)
                    (297.90:2.43) circle (4pt)
                    (299.36:4.22) circle (4pt)
                    (304.20:3.46) circle (4pt)
                    (308.83:4.46) circle (4pt)
                    (313.94:2.13) circle (4pt)
                    (319.47:4.33) circle (4pt)
                    (324.13:2.82) circle (4pt)
                    (335.46:3.98) circle (4pt)
                    (337.60:1.75) circle (4pt)
                    (342.13:2.45) circle (4pt)
                    (345.92:3.32) circle (4pt)
                    (349.18:4.53) circle (4pt)
                    ;

\end{tikzpicture}}
\end{center}
\caption[]{A set of labeled examples in $\R^2$. The examples belong to
$K=3$ classes colored white, gray and black respectively. Each class lies in a
$120^\circ$ wedge. In other words, each class lies in an intersection of two
halfspaces. While the examples are weakly linearly separable with a positive margin
$\gamma$, they are \emph{not} strongly linearly separable with any positive
margin $\gamma$. For instance, there does \emph{not} exist a linear separator
that separates the examples belonging to the gray class from the examples
belonging to the remaining two classes.
}
\label{figure:weakly-linearly-separable-examples-with-margin}
\end{figure}
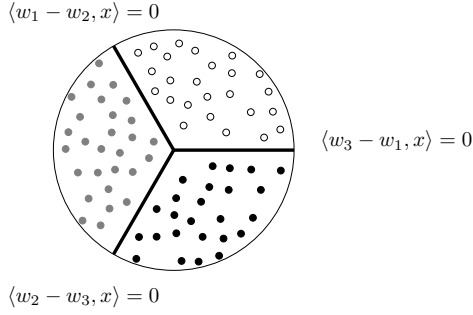

\section{Algorithm for strongly linearly separable data}
\label{section:algorithm-for-strongly-linearly-separable-data}

In this section, we consider the case when the examples are strongly linearly
separable. We present an algorithm for this setting
(Algorithm~\ref{algorithm:algorithm-for-strongly-linearly-separable-examples})
and give an upper bound on its number of mistakes, stated as
\autoref{theorem:strongly-separable-examples-mistake-upper-bound} below. The
proof of the theorem can be found in
Appendix~\ref{section:proofs-for-stringly-separable-examples}.

The idea behind
Algorithm~\ref{algorithm:algorithm-for-strongly-linearly-separable-examples} is
to use $K$ copies of the \textsc{Binary Perceptron} algorithm, one copy per
class; see e.g. \citep[Section 3.3.1]{Shalev-Shwartz-2012}. Upon seeing each
example $x_t$, copy $i$ predicts whether or not $x_t$ belongs to class $i$.
Multiclass predictions are done by evaluating all $K$ binary predictors and
outputting any class with a positive prediction.  If all binary predictions are
negative, the algorithm chooses a prediction uniformly at random
from $\cbr{1,2,\dots,K}$.


\begin{algorithm}[h]
\SetAlgoLined
\LinesNumbered
\caption{\textsc{Bandit Algorithm for Strongly Linearly Separable Examples}
\label{algorithm:algorithm-for-strongly-linearly-separable-examples}}
\textbf{Require:} Number of classes $K$, number of rounds $T$. \\
\textbf{Require:} Inner product space $(V,\ip{\cdot}{\cdot})$.  \\
\nl Initialize $w_1^{(1)} = w_2^{(1)} = \dots = w_K^{(1)} = 0$\\
\nl \For{$t=1,2,\dots,T$}{
\nl    Observe feature vector $x_t \in V$ \\
\nl    Compute $S_t = \left\{ i ~:~ 1 \le i \le K, \ \ip{w_i^{(t)}}{x_t} \ge 0 \right\}$\\
\nl    \If{$S_t = \emptyset$}{
\nl         Predict $\widehat y_t \sim \text{Uniform}(\{1,2,\dots,K\})$ \\
\nl         Observe feedback $z_t = \indicator{\widehat y_t \neq y_t}$\\
\nl          \If{$z_t = 1$}{
\nl               Set $w_i^{(t+1)} = w_i^{(t)}$, $\forall i \in \{1,2,\dots,K\}$
              }
\nl          \Else{
\nl               Set $w_i^{(t+1)} = w_i^{(t)}$, $\forall i \in \{1,2,\dots,K\} \setminus \{\widehat y_t\}$ \\
\nl Update $w_{\widehat y_t}^{(t+1)} = w_{\widehat y_t}^{(t)} + x_t$  \label{line:pos-update}
              }
        }
\nl     \Else{
\nl           Predict $\widehat y_t \in S_t$ chosen arbitrarily \\
\nl           Observe feedback $z_t  = \indicator{\widehat y_t \neq y_t}$ \\
\nl            \If{$z_t = 1$}{
\nl                Set $w_i^{(t+1)} = w_i^{(t)}$, $\forall i \in \{1,2,\dots,K\} \setminus \{\widehat y_t\}$ \\
\nl  Update $w_{\widehat y_t}^{(t+1)} = w_{\widehat y_t}^{(t)} - x_t$   \label{line:neg-update}
               }
\nl           \Else{
\nl                 Set $w_i^{(t+1)} = w_i^{(t)}$, $\forall i \in \{1,2,\dots,K\}$
           }
      }
}
\end{algorithm}

\begin{theorem}[Mistake upper bound]
\label{theorem:strongly-separable-examples-mistake-upper-bound}
Let $(V, \ip{\cdot}{\cdot})$ be an inner product space, $K$ be a positive
integer, $\gamma$ be a positive real number, $R$ be a non-negative real number.
If the examples $(x_1, y_1), \dots, (x_T, y_T) \in V \times \{1,2,\dots,K\}$ are
strongly linearly separable with margin $\gamma$ and $\norm{x_1}, \norm{x_2},
\dots, \norm{x_T} \le R$ then the expected number of mistakes that
Algorithm~\ref{algorithm:algorithm-for-strongly-linearly-separable-examples}
makes is at most $(K-1) \lfloor 4(R/\gamma)^2 \rfloor$.
\end{theorem}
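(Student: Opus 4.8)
The plan is to recognize that Algorithm~\ref{algorithm:algorithm-for-strongly-linearly-separable-examples} is exactly $K$ simultaneous runs of the \textsc{Binary Perceptron}, one copy per class, where copy $i$ tries to separate the examples with $y_t = i$ (``positive'') from the rest (``negative''), and then to control both (a) how many updates each copy makes, via the classical Perceptron bound, and (b) how the learner's multiclass mistakes are charged against those updates.

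First I would make the reduction precise. Fix a class $i$ and put $\sigma_t = +1$ if $y_t = i$ and $\sigma_t = -1$ otherwise. Inspecting the pseudocode, $w_i^{(t)}$ changes only in two situations: a \emph{positive update} $w_i^{(t+1)} = w_i^{(t)} + x_t$ on line~\ref{line:pos-update}, which forces $S_t = \emptyset$ (so $\ip{w_i^{(t)}}{x_t} < 0$) and $\widehat y_t = y_t = i$ (so $\sigma_t = +1$); and a \emph{negative update} $w_i^{(t+1)} = w_i^{(t)} - x_t$ on line~\ref{line:neg-update}, which forces $\widehat y_t = i \in S_t$ (so $\ip{w_i^{(t)}}{x_t} \ge 0$) and $\widehat y_t \ne y_t$ (so $\sigma_t = -1$). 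In both cases $\sigma_t \ip{w_i^{(t)}}{x_t} \le 0$ and the update is $w_i^{(t+1)} = w_i^{(t)} + \sigma_t x_t$, i.e.\ a Perceptron update on a mistake. By strong linear separability there exist $u_1,\dots,u_K$ with $\sum_i \norm{u_i}^2 \le 1$ and $\sigma_t \ip{x_t}{u_i} \ge \gamma/2$ for all $t$, so the unit vector $u_i / \norm{u_i}$ witnesses a margin $\gamma / (2\norm{u_i})$ for copy $i$; the standard \textsc{Binary Perceptron} mistake bound (valid for an arbitrary example sequence, hence irrespective of the algorithm's internal randomness) gives that the number of updates $M_i$ of copy $i$ satisfies $M_i \le \lfloor 4R^2 \norm{u_i}^2 / \gamma^2 \rfloor$ surely.

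Next I would charge the learner's mistakes. Split the rounds into Case~1 ($S_t \ne \emptyset$) and Case~2 ($S_t = \emptyset$). In Case~1 the learner errs iff $\widehat y_t \ne y_t$, and each such round triggers exactly one negative update on copy $\widehat y_t$, so the number of Case-1 mistakes equals the total number $N$ of negative updates. In Case~2 the learner predicts uniformly at random, so, conditioned on the history through the observation of $x_t$, it is correct with probability $1/K$ (triggering one positive update on copy $y_t$, since $y_t \notin S_t$) and errs with probability $(K-1)/K$ (triggering no update). Writing $P$ for the total number of positive updates and $Q_2$ for the number of Case-2 mistakes, the key identity is $\Exp[Q_2] = (K-1)\,\Exp[P]$. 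To prove it, note $Q_2 - (K-1)P = \sum_{t=1}^T \indicator{S_t = \emptyset}\bigl(1 - K\,\indicator{\widehat y_t = y_t}\bigr)$; the event $\{S_t = \emptyset\}$ is determined \emph{before} the fresh uniform coin of round $t$ is flipped, while $\Exp\bigl[\,1 - K\,\indicator{\widehat y_t = y_t}\ \big|\ \text{history before that coin}\,\bigr] = 1 - K\cdot\tfrac1K = 0$, so each summand has zero conditional expectation and the identity follows by the tower rule.

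Finally, since $M_i$ is the number of positive plus negative updates on copy $i$, summing over $i$ gives $P + N = \sum_{i=1}^K M_i$, and therefore, using $K \ge 2$ (for $K = 1$ the statement is vacuous),
\[
\Exp\Bigl[\sum_{t=1}^T z_t\Bigr]
= \Exp[N] + \Exp[Q_2]
= \Exp[N] + (K-1)\,\Exp[P]
\le (K-1)\,\Exp\Bigl[\sum_{i=1}^K M_i\Bigr].
\]
Plugging in $M_i \le \lfloor 4R^2\norm{u_i}^2/\gamma^2 \rfloor$, the superadditivity of the floor ($\lfloor a\rfloor + \lfloor b\rfloor \le \lfloor a+b\rfloor$), and $\sum_i \norm{u_i}^2 \le 1$ gives $\sum_i M_i \le \bigl\lfloor (4R^2/\gamma^2)\sum_i \norm{u_i}^2 \bigr\rfloor \le \lfloor 4(R/\gamma)^2 \rfloor$, yielding the claimed bound $(K-1)\lfloor 4(R/\gamma)^2\rfloor$. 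The step I expect to be the main obstacle is the probabilistic identity $\Exp[Q_2] = (K-1)\Exp[P]$: the rest is either direct inspection of the pseudocode or an invocation of the Perceptron bound, whereas here one must set up the filtration so that ``$S_t = \emptyset$'' is known before the coin flip, and use the observation that a single positive update ``pays for'' an \emph{expected} $K-1$ mistakes rather than trying to bound the number of Case-2 rounds outright.
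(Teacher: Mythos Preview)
Your proof is correct and follows essentially the same route as the paper: the same split into Case-1 mistakes ($A$ in the paper, your $N$), Case-2 mistakes ($B$, your $Q_2$), and positive updates ($C$, your $P$), the same identity $\Exp[Q_2]=(K-1)\Exp[P]$, and the same conclusion $\Exp[M]\le (K-1)\Exp[U]$ with $U$ the total number of updates. The only cosmetic difference is in bounding $U$: the paper runs one aggregated Perceptron potential argument on $\sum_i\norm{w_i^{(t)}}^2$ and $\sum_i\ip{w_i^*}{w_i^{(t)}}$ to get $U\le\lfloor 4(R/\gamma)^2\rfloor$ directly, whereas you invoke the Binary Perceptron bound per class to get $M_i\le\lfloor 4R^2\norm{u_i}^2/\gamma^2\rfloor$ and then sum via floor superadditivity and $\sum_i\norm{u_i}^2\le 1$.
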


The upper bound $(K-1) \lfloor 4(R/\gamma)^2 \rfloor$ on the expected number of
mistakes of
Algorithm~\ref{algorithm:algorithm-for-strongly-linearly-separable-examples} is
optimal up to a constant factor, as long as the number of classes $K$ is at most
$O((R/\gamma)^2)$. This lower bound is stated as
\autoref{theorem:strongly-separable-examples-mistake-lower-bound} below. The
proof of the theorem can be found in
Appendix~\ref{section:proofs-for-stringly-separable-examples}.
\citet{Daniely-Helbertal-2013} provide a lower bound
under the assumption of weak linear separability, which does not immediately
imply a lower bound under the stronger notion.

\begin{theorem}[Mistake lower bound]
\label{theorem:strongly-separable-examples-mistake-lower-bound}
Let $\gamma$ be a positive real number, $R$ be a non-negative real number and
let $K \le (R/\gamma)^2$ be a positive integer. Any (possibly randomized)
algorithm makes at least $((K-1)/2)\left\lfloor (R/\gamma)^2/4 \right\rfloor$
mistakes in expectation on some sequence of labeled examples $(x_1, y_1),
(x_2, y_2), \dots, (x_T, y_T) \in V \times \{1,2,\dots,K\}$ for some inner
product space $(V, \ip{\cdot}{\cdot})$ such that the examples are strongly
linearly separable with margin $\gamma$ and satisfy $\norm{x_1}, \norm{x_2},
\dots, \norm{x_T} \le R$.
\end{theorem}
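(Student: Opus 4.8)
The plan is a Yao-style argument. I exhibit a family of feature vectors together with a distribution over label sequences, argue that \emph{every} sequence in the support is strongly linearly separable with margin $\gamma$ and has all norms at most $R$, and lower-bound the expected number of mistakes of an arbitrary (possibly randomized) learner against this distribution; the probabilistic method then produces a single bad sequence. Put $m := \lfloor (R/\gamma)^2/4 \rfloor$; if $m=0$ the claimed bound is $0$ and there is nothing to prove, so assume $m \ge 1$. Work in $V = \R^{m+1}$ with orthonormal basis $\e_0, \e_1, \dots, \e_m$, and set $x_i := \tfrac{R}{\sqrt 2}(\e_i - \e_0)$ for $i \in [m]$, so that $\norm{x_i} = R$. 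The adversary presents $x_1$ for $K-1$ rounds, then $x_2$ for $K-1$ rounds, and so on (so $T = m(K-1)$), labeling all copies of $x_i$ by a common $y_i$ drawn uniformly from $[K]$, independently over $i \in [m]$.

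The first step is to verify separability: for \emph{any} outcome $(y_1,\dots,y_m) \in [K]^m$, the vectors $w_c := \frac{\gamma}{\sqrt 2\,R}\,\e_0 + \frac{\sqrt 2\,\gamma}{R}\sum_{i \,:\, y_i = c} \e_i$, for $c \in [K]$, satisfy $\ip{x_i}{w_{y_i}} = \gamma/2$ and $\ip{x_i}{w_c} = -\gamma/2$ for $c \ne y_i$, and $\sum_{c=1}^K \norm{w_c}^2 = \frac{\gamma^2}{R^2}\bigl(\tfrac K2 + 2m\bigr) \le 1$, the last inequality using $m \le \tfrac14 (R/\gamma)^2$ together with the hypothesis $K \le (R/\gamma)^2$. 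The shared component $-\tfrac{R}{\sqrt 2}\e_0$ is the crux: it lets a single ``bias'' coordinate in each $w_c$ discharge all of the $-\gamma/2$ constraints simultaneously. With orthogonal feature vectors the budget $\sum_c \norm{w_c}^2 \le 1$ would instead force $m = O((R/\gamma)^2/K)$, yielding only an $\Omega((R/\gamma)^2)$ bound; this trick is exactly what delivers the extra factor of $K$.

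The second step is the per-direction lower bound. Fix $i$ and consider the $K-1$ consecutive rounds presenting $x_i$. Since $y_i$ is drawn independently of the learner's randomness and of all other labels, it remains uniform on $[K]$ conditioned on everything observed before $x_i$ is first presented, and a ``mistake'' on $x_i$ rules out exactly one label; hence a guessing-game argument applies. Letting $A_j$ be the event that the first $j$ presentations of $x_i$ are all mistakes, the conditional probability of a further mistake given $A_{j-1}$ is at least $\tfrac{K-j}{K-j+1}$ (at most $j-1$ labels have been eliminated and $y_i$ is uniform on the rest), so $\Prob(A_j) \ge \tfrac{K-j}{K}$ by induction, whence the expected number of mistakes on $x_i$ is at least $\sum_{j=1}^{K-1}\Prob(A_j) \ge \tfrac1K \sum_{j=1}^{K-1}(K-j) = \tfrac{K-1}{2}$. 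Summing over $i \in [m]$ by linearity of expectation gives at least $m \cdot \tfrac{K-1}{2} = \tfrac{K-1}{2}\lfloor (R/\gamma)^2/4 \rfloor$ expected mistakes over both sources of randomness, so some realization of $(y_1,\dots,y_m)$ — which by the first step yields a strongly $\gamma$-separable, $R$-bounded sequence — forces at least this many mistakes in expectation over the learner's randomness. I expect the separability construction of the second paragraph (identifying the shared-component trick and carrying out the norm accounting) to be the main obstacle; the guessing-game estimate is routine once the conditioning is set up carefully.
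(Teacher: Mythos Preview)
Your proposal is correct and is essentially the paper's own proof. The hard instance is the same up to a sign convention on the shared ``bias'' coordinate: the paper takes $v_j = \tfrac{R}{\sqrt 2}(e_j + e_{M+1})$ and $w_i^* = \tfrac{\sqrt 2\,\gamma}{R}\sum_{j:\ell_j=i} e_j - \tfrac{\gamma}{\sqrt 2\,R}\,e_{M+1}$, which is your construction with $e_0 \mapsto -e_{M+1}$; the norm accounting $\sum_c\norm{w_c}^2 = \tfrac{\gamma^2}{R^2}(K/2 + 2m) \le 1$ is identical. The only substantive difference is that where the paper invokes Claim~2 of \citet{Daniely-Helbertal-2013} for the per-block bound $\Exp[\text{mistakes on }x_i] \ge (K-1)/2$, you supply a self-contained proof of that claim via the events $A_j$ and the telescoping product $\Prob(A_j) \ge \tfrac{K-j}{K}$.
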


\paragraph{Remark.}
If $\gamma \le R$ then, irrespective of any other conditions on $K$, $R$, and
$\gamma$, a trivial lower bound on the expected number of mistakes of any
randomized algorithm is $(K-1)/2$. To see this, note that the adversary can
choose an example $(R e_1, y)$, where $e_1$ is some arbitrary unit vector in $V$
and $y$ is a label chosen uniformly from $\cbr{1,2,\dots,K}$, and show this
example $K$ times. The sequence of examples trivially satisfies the strong
linear separability condition, and the $(K-1)/2$ expected mistake lower bound
follows from ~\citep[][Claim 2]{Daniely-Helbertal-2013}. 

Algorithm~\ref{algorithm:algorithm-for-strongly-linearly-separable-examples} can
be extended to nonlinear classification using \emph{positive definite kernels}
(or \emph{kernels}, for short), which are functions of the form $k: X \times X
\to \R$ for some set $X$ such that the matrix
$\left(k(x_i,x_j)\right)_{i,j=1}^m$ is a symmetric positive semidefinite for any
positive integer $m$ and $x_1, x_2, \dots, x_m \in X$ \citep[Definition
2.5]{Scholkopf-Smola-2002}.\footnote{For every kernel there exists an associated
feature map $\phi:X \to V$ into some inner product space $(V,\ip{\cdot}{\cdot})$
such that $k(x,x') = \ip{\phi(x)}{\phi(x')}$.} As opposed to explicitly
maintaining the weight vector for each class, the algorithm maintains the set of
example-scalar pairs corresponding to the updates of the non-kernelized
algorithm. As a direct consequence of
Theorem~\ref{theorem:strongly-separable-examples-mistake-upper-bound} we get a
mistake bound for the kernelized algorithm.

\begin{theorem}[Mistake upper bound for kernelized algorithm]
\label{theorem:kernelized-upper-bound}
Let $X$ be a non-empty set, let $(V, \ip{\cdot}{\cdot})$ be an inner product
space. Let $\phi:X \to V$ be a feature map and let $k:X \times X \to \R$,
$k(x,x') = \ip{\phi(x)}{\phi(x')}$ be the associated positive definite
kernel. Let $K$ be a positive integer, $\gamma$ be a positive real number, $R$
be a non-negative real number. If $(x_1, y_1), (x_2, y_2), \dots, (x_T, y_T) \in
X \times \{1,2,\dots,K\}$ are labeled examples such that:
\begin{enumerate}
\item the mapped examples $(\phi(x_1), y_1)$, $\dots$, $(\phi(x_T), y_T)$
are strongly linearly separable with margin $\gamma$,
\item $k(x_1, x_1), k(x_2, x_2), \dots, k(x_T,x_T) \le R^2$,
\end{enumerate}
then the expected number of mistakes that Algorithm~\ref{algorithm:kernelized} makes
is at most $(K-1) \lfloor 4(R/\gamma)^2 \rfloor$.
\end{theorem}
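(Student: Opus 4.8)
The plan is to show that Algorithm~\ref{algorithm:kernelized} is an \emph{exact} implementation, in the inner product space $(V,\ip{\cdot}{\cdot})$, of Algorithm~\ref{algorithm:algorithm-for-strongly-linearly-separable-examples} run on the mapped sequence $(\phi(x_1),y_1),\dots,(\phi(x_T),y_T)$, and then to invoke Theorem~\ref{theorem:strongly-separable-examples-mistake-upper-bound} for that sequence. First I would pin down the state correspondence. In Algorithm~\ref{algorithm:algorithm-for-strongly-linearly-separable-examples}, each $w_i$ starts at $0$ and is only ever modified by adding or subtracting the current feature vector (lines~\ref{line:pos-update} and~\ref{line:neg-update}). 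Hence, running it on the mapped examples, at every round $t$ one has
\[
w_i^{(t)} \;=\; \sum_{s \in U_i^{(t)}} \sigma_s\, \phi(x_s),
\]
where $U_i^{(t)} \subseteq \{1,\dots,t-1\}$ is the (random) set of rounds at which class $i$'s vector was updated and $\sigma_s \in \{+1,-1\}$ records the sign of that update. Algorithm~\ref{algorithm:kernelized} maintains precisely the collection of pairs $\{(x_s,\sigma_s)\}$ for each class in place of the explicit vector $w_i^{(t)}$, updating them in lockstep with the $U_i^{(t)}$.

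Next I would check that every quantity the non-kernelized algorithm consults is recoverable from kernel evaluations alone. The vectors $w_i^{(t)}$ enter the computation only through the tests $\ip{w_i^{(t)}}{\phi(x_t)} \ge 0$ that define $S_t$, and by bilinearity of $\ip{\cdot}{\cdot}$,
\[
\ip{w_i^{(t)}}{\phi(x_t)} \;=\; \sum_{s \in U_i^{(t)}} \sigma_s\, k(x_s,x_t) .
\]
Therefore $S_t$, the (possibly randomized) prediction $\widehat y_t$, the feedback $z_t$, and the decision of whether and how to update are all computable from the stored pairs and the kernel. Consequently, on any realization of the internal randomness (the uniform draw in the $S_t=\emptyset$ branch), Algorithm~\ref{algorithm:kernelized} outputs the same $\widehat y_t$ and performs the same bookkeeping update as Algorithm~\ref{algorithm:algorithm-for-strongly-linearly-separable-examples} on the mapped sequence; coupling the two algorithms' random bits, their numbers of mistakes are equal as random variables.

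Finally I would apply Theorem~\ref{theorem:strongly-separable-examples-mistake-upper-bound} to Algorithm~\ref{algorithm:algorithm-for-strongly-linearly-separable-examples} run on $(\phi(x_1),y_1),\dots,(\phi(x_T),y_T)$ inside $(V,\ip{\cdot}{\cdot})$: hypothesis~1 says these mapped examples are strongly linearly separable with margin $\gamma$, and hypothesis~2 gives $\norm{\phi(x_t)}^2 = \ip{\phi(x_t)}{\phi(x_t)} = k(x_t,x_t) \le R^2$, i.e.\ $\norm{\phi(x_t)} \le R$ for all $t$. The theorem yields an expected mistake bound of $(K-1)\lfloor 4(R/\gamma)^2\rfloor$, which by the coupling above is exactly the expected number of mistakes of Algorithm~\ref{algorithm:kernelized}. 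I expect the only real work to be the bookkeeping in the first two steps — stating precisely what Algorithm~\ref{algorithm:kernelized} stores, that it evolves in lockstep with the $w_i^{(t)}$, and that the kernel identity recovers every needed inner product, together with a clean coupling of the randomness; there is no analytic obstacle, since the margin and norm conditions transfer verbatim to feature space and the conclusion is then immediate from Theorem~\ref{theorem:strongly-separable-examples-mistake-upper-bound}.
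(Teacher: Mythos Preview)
Your proposal is correct and takes essentially the same approach as the paper: the paper simply states that Theorem~\ref{theorem:kernelized-upper-bound} is ``a direct consequence of Theorem~\ref{theorem:strongly-separable-examples-mistake-upper-bound}'' without giving any further details, and your argument spells out precisely that consequence by exhibiting the state correspondence between Algorithms~\ref{algorithm:algorithm-for-strongly-linearly-separable-examples} and~\ref{algorithm:kernelized}, verifying the kernel identity for the inner products defining $S_t$, and checking that the hypotheses of Theorem~\ref{theorem:strongly-separable-examples-mistake-upper-bound} hold for the mapped sequence.
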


\begin{algorithm}[h]
\caption{\textsc{Kernelized Bandit Algorithm}
\label{algorithm:kernelized}}
\textbf{Require:} Number of classes $K$, number of rounds $T$.\\
\textbf{Require:} Kernel function  $k(\cdot, \cdot)$. \\
Initialize $J_1^{(1)} = J_2^{(1)} = \dots = J_K^{(1)} = \emptyset$ \\
\For{$t=1,2,\dots,T$} {
     Observe feature vector $x_t$.\\
     Compute \\ $S_t = \left\{ i ~:~ 1 \le i \le K, \ \sum_{(x,y) \in J_i^{(t)}} y k(x, x_t) \ge 0 \right\}$ \\
     \If{$S_t = \emptyset$}{
           Predict $\widehat y_t \sim \text{Uniform}(\{1,2,\dots,K\})$ \\
           Observe feedback $z_t = \indicator{\widehat y_t \neq y_t}$ \\
           \If{$z_t = 1$}{
                Set $J_i^{(t+1)} = J_i^{(t)}$ for all $i \in \{1,2,\dots,K\}$
           }
           \Else{
                Set $J_i^{(t+1)} = J_i^{(t)}$, $\forall i \in \{1,2,\dots,K\} \setminus \{\widehat y_t\}$ \\
                Update $J_{\widehat y_t}^{(t+1)} = J_{\widehat y_t}^{(t)} \cup \cbr{(x_t, +1)}$
           }
      }
      \Else{
           Predict $\widehat y_t \in S_t$ chosen arbitrarily \\
           Observe feedback $z_t =  \indicator{\widehat y_t \neq y_t}$ \\
           \If{$z_t = 1$}{
                Set $J_i^{(t+1)} = J_i^{(t)}$, $\forall i \in \{1,2,\dots,K\} \setminus \{\widehat y_t\}$\\
                Update $J_{\widehat y_t}^{(t+1)} = J_{\widehat y_t}^{(t)} \cup \cbr{(x_t, -1)}$
           }
           \Else {
                Set $J_i^{(t+1)} = J_i^{(t)}$ for all $i \in \{1,2,\dots,K\}$
           }
      }
}

\end{algorithm}

\section{From weak separability to strong separability}
\label{section:from-weak-separability-to-strong-separability}

In this section, we consider the case when the examples are weakly linearly
separable. Throughout this section, we assume without loss of generality that
all examples lie in the unit ball $\B(\zero,1) \subseteq \R^d$.\footnote{Instead of
working with feature vector $x_t$ we can work with normalized feature vectors
$\widehat{x}_t = \frac{x_t}{\norm{x_t}}$. It can be easily checked that if
$(x_1,y_1), (x_2,y_2), \dots, (x_T,y_T)$ are weakly linearly separable with
margin $\gamma$ and $\norm{x_t} \le R$ for all $t$, then the normalized examples
$(\widehat{x}_1,y_1), (\widehat{x}_2,y_2), \dots, (\widehat{x}_T,y_T)$ are weakly
linearly separable with margin $\gamma/R$.} Note that
Algorithm~\ref{algorithm:algorithm-for-strongly-linearly-separable-examples}
alone does not guarantee a finite mistake bound in this setting, as weak linear
separability does not imply strong linear separability.

We use a positive definite kernel function $k(\cdot, \cdot)$, namely
a \emph{rational kernel}~\citep{Shalev-Shwartz-Shamir-Sridharan-2011} whose
corresponding feature map $\phi(\cdot)$ transforms any sequence of \emph{weakly}
linearly separable examples to a \emph{strongly} linearly separable sequence of
examples.
Specifically, $\phi$ has the property that if a set of labeled
examples in $\B(\zero,1)$ is weakly linearly separable with a margin $\gamma$, then
after applying $\phi$ the examples become strongly linearly separable with a
margin $\gamma'$ and their squared norms are bounded by $2$.
\footnote{Other kernels, such as the polynomial kernel
$k(x,x') = (1+\ip{x}{x'})^d$, or the multinomial kernel~\citep{Goel-Klivans-2017} $k(x,x') = \sum_{i=0}^d(\ip{x}{x'})^i$,
will have similar properties for large enough $d$.}
The parameter $\gamma'$ is
a function of the old margin $\gamma$ and the number of classes $K$, and is
specified in \autoref{theorem:margin-transformation} below.

The rational kernel $k:\B(\zero,1) \times \B(\zero,1) \to \R$ is defined as
\begin{equation}
\label{equation:rational-kernel}
k(x,x') = \frac{1}{1 - \frac{1}{2}\ip{x}{x'}_{\R^d}} \; .
\end{equation}
Note that $k(x,x')$ can be evaluated in $O(d)$ time.

Consider the classical real separable Hilbert space $\ell_2 = \{ x \in \R^\infty
~:~ \sum_{i=1}^\infty x_i^2 < + \infty \}$ equipped with the standard inner
product $\ip{x}{x'}_{\ell_2} = \sum_{i=1}^\infty x_i x'_i$. If we index the
coordinates of $\ell_2$ by $d$-tuples $(\alpha_1, \alpha_2, \dots, \alpha_d)$ of
non-negative integers, the feature map that corresponds to $k$ is $\phi: \B(\zero,1)
\to \ell_2$,
\begin{align}
\begin{split}
&\left(\phi(x_1, x_2, \dots, x_d)\right)_{(\alpha_1, \alpha_2, \dots, \alpha_d)} = \\
& x_1^{\alpha_1} x_2^{\alpha_2} \dots x_d^{\alpha_d} \cdot \sqrt{2^{-(\alpha_1 + \alpha_2 + \dots + \alpha_d)} \binom{\alpha_1 + \alpha_2 + \dots + \alpha_d}{\alpha_1, \alpha_2, \dots, \alpha_d}}
\end{split}
\label{equation:phi}
\end{align}
where $\binom{\alpha_1 + \alpha_2 + \dots + \alpha_d}{\alpha_1, \alpha_2, \dots,
\alpha_d} = \frac{(\alpha_1 + \alpha_2 + \dots + \alpha_d)!}{\alpha_1! \alpha_2! \dots \alpha_d!}$ is the multinomial coefficient. It can be easily checked that
$$
k(x,x') = \ip{\phi(x)}{\phi(x')}_{\ell_2}.
$$
The last equality together with the formula for $k$ implies that $k(x,x) <
+\infty$ for any $x$ in $\B(\zero,1)$ and thus in particular implies that $\phi(x)$
indeed lies in $\ell_2$.

The following theorem is our main technical result in this section. We defer its
proof to Section~\ref{section:margin-transformation}.

\begin{theorem}[Margin transformation]
\label{theorem:margin-transformation}
Let $(x_1, y_1)$, $(x_2, y_2)$, $\dots$, $(x_T, y_T)$ $\in \B(\zero,1) \times
\{1,2,\dots,K\}$ be a sequence of labeled examples that is weakly linearly
separable with margin $\gamma > 0$. Let $\phi$ be as defined in
equation~\eqref{equation:phi} and let
\begingroup
\allowdisplaybreaks
\begin{align*}
\gamma_1 = & \frac{
  \left[
    376 \lceil \log_2(2K-2) \rceil \cdot \left \lceil \sqrt{\frac{2}{\gamma}} \right \rceil
  \right]^{
    \frac{-\lceil \log_2(2K-2) \rceil \cdot \left \lceil \sqrt{{2}/{\gamma}} \right \rceil}{2}
  }
}{2\sqrt{K}},\\
\gamma_2 = & \frac{
   \left(2^{s+1} r(K-1) (4s+2) \right)^{-(s+1/2)r(K-1)}
}{4\sqrt{K}(4K-5) 2^{K-1}}
\; ,
\end{align*}
\endgroup


where $r = 2 \left\lceil \frac{1}{4} \log_2(4K-3) \right\rceil + 1$ and $s = \left \lceil \log_2(2/\gamma) \right \rceil$.
Then, the sequence of labeled examples transformed by $\phi$,
namely $(\phi(x_1), y_1), (\phi(x_2), y_2), \dots,
(\phi(x_T), y_T)$, is strongly linearly separable with margin $\gamma' =
\max\{\gamma_1, \gamma_2\}$. In addition, for all $t$ in $\cbr{1,\ldots,T}$,
$k(x_t,x_t) \leq 2$.
\end{theorem}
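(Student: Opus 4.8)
The plan is to exhibit, for each class $i \in [K]$, a polynomial $P_i \colon \R^d \to \R$ with $P_i(x_t) \ge \frac14$ whenever $y_t = i$ and $P_i(x_t) \le -\frac14$ whenever $y_t \neq i$; expanding $P_i$ in monomials and using $k(x,x') = \ip{\phi(x)}{\phi(x')}$ writes $P_i(x) = \ip{\phi(x)}{v_i}$ for an explicit $v_i \in \ell_2$, and then $w_i := v_i / \sqrt{\sum_j \norm{v_j}^2}$ witnesses strong linear separability with $\gamma'/2 = \tfrac14 / \sqrt{\sum_j \norm{v_j}^2}$. So the whole theorem reduces to bounding $\sum_j \norm{v_j}^2$. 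The second assertion is immediate from \eqref{equation:rational-kernel}: $k(x_t,x_t) = (1 - \tfrac12\norm{x_t}^2)^{-1} \le (1-\tfrac12)^{-1} = 2$ since $\norm{x_t} \le 1$.

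For the reduction, fix weak separators $w_1,\dots,w_K$ and set $u_{ij} := w_i - w_j$ for $i \neq j$; from $\sum_k \norm{w_k}^2 \le 1$ we get $\norm{u_{ij}}^2 \le 2\norm{w_i}^2 + 2\norm{w_j}^2 \le 2$, so $\abs{\ip{x_t}{u_{ij}}} \le \sqrt2$ for all $t$. By \eqref{equation:weak-linear-separability-2}, if $y_t = i$ then $\ip{x_t}{u_{ij}} \ge \gamma$ for every $j \neq i$, while if $y_t = c \neq i$ then $\ip{x_t}{u_{ic}} \le -\gamma$, the other inner products being arbitrary in $[-\sqrt2,\sqrt2]$. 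The dimension-free control of $\phi$-norms -- which is what removes the $d$-dependence of \citet{Klivans-Servedio-2008} -- comes from two facts about \eqref{equation:phi}, denoting by $\norm{p}_\phi$ the $\ell_2$-norm of the coefficient vector of $p$ in $\phi$-coordinates. First, since monomials of distinct total degree occupy disjoint $\phi$-coordinates and $\sum_{\abs{\alpha}=n}\binom{n}{\alpha}\prod_\ell u_\ell^{2\alpha_\ell} = (\sum_\ell u_\ell^2)^n$, a univariate $q(z)=\sum_n a_n z^n$ satisfies the exact identity $\norm{q(\ip{\cdot}{u})}_\phi = (\sum_n a_n^2 (2\norm{u}^2)^n)^{1/2}$. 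Second, for any vectors $u_1,\dots,u_m$ a direct multinomial computation gives $\norm{\prod_{\ell=1}^m \ip{\cdot}{u_\ell}}_\phi \le \prod_{\ell=1}^m \sqrt2\,\norm{u_\ell}$ (with equality when the $u_\ell$ are parallel); expanding a product of univariate polynomials and using the triangle inequality then yields $\norm{\prod_{j} q_j(\ip{\cdot}{u_j})}_\phi \le \prod_j (\sum_n \abs{a_n^{(j)}}\, 2^n)$ whenever $\norm{u_j}\le\sqrt2$. Thus $\norm{P_i}_\phi$ is controlled purely by the coefficient sizes of the univariate building blocks, with no $d$ appearing.

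It remains to build the $P_i$ in two ways, trading degree against coefficient size; taking the better one gives $\gamma' = \max\{\gamma_1,\gamma_2\}$. For $\gamma_1$, use a \emph{sum}: take $q$ of degree $\Theta(\lceil\log_2(2K-2)\rceil \lceil\sqrt{2/\gamma}\rceil)$ approximating $\indicator{z>0}$ to accuracy $\epsilon = \Theta(1/K)$ on $[-\sqrt2,-\gamma]\cup[\gamma,\sqrt2]$ while staying $\le 1+\epsilon$ on all of $[-\sqrt2,\sqrt2]$ (a one-sidedly bounded rescaled Chebyshev approximant -- this is where the $\sqrt{1/\gamma}$ rate enters), and put $P_i(x) = \tfrac12 - \sum_{j\neq i}(1 - q(\ip{x}{u_{ij}}))$. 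If $y_t = i$ each summand is $\le \epsilon$, so $P_i(x_t) \ge \tfrac12 - (K-1)\epsilon \ge \tfrac14$; if $y_t = c \neq i$ the $j=c$ summand is $\ge 1-\epsilon$ and each other summand is $\ge -\epsilon$ (the only place the upper bound on $q$ is used), so $P_i(x_t) \le -\tfrac14$. Here $\norm{P_i}_\phi \le \tfrac12 + (K-1) + (K-1)(\sum_n a_n^2 4^n)^{1/2}$, and a Chebyshev coefficient estimate bounds $(\sum_n a_n^2 4^n)^{1/2}$; dividing by $\sqrt{\sum_j\norm{v_j}^2} = \Theta(\sqrt K)\cdot(\cdots)$ gives $\gamma_1$, the constant $376$ and the exponent $\tfrac12\lceil\log_2(2K-2)\rceil\lceil\sqrt{2/\gamma}\rceil$ coming from that estimate. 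For $\gamma_2$, use instead a \emph{product} $P_i(x) = \prod_{j\neq i} q(\ip{x}{u_{ij}}) - \tfrac12$ with a much lower-degree nonnegative $q$: build a base polynomial of degree $\Theta(s)$, $s = \lceil\log_2(2/\gamma)\rceil$, that separates $[\gamma,\sqrt2]$ from $[-\sqrt2,-\gamma]$ by a constant factor while staying $O(1/\gamma)$-bounded on $[-\sqrt2,\sqrt2]$, then amplify to accuracy $\epsilon$ with a degree-$\Theta(r)$ outer polynomial, $r = 2\lceil\tfrac14\log_2(4K-3)\rceil+1$, so $\deg q = \Theta((s+\tfrac12)r)$. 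If $y_t=i$ every factor is $\ge 1-\epsilon$ so $P_i(x_t) \gtrsim \tfrac14$; if $y_t=c\neq i$ the $j=c$ factor is $\le\epsilon$ while each other factor is at most the sup-bound $O(1/\gamma)$, so $P_i(x_t) \le \epsilon\cdot O(1/\gamma)^{K-2} - \tfrac12 \le -\tfrac14$ once $\epsilon$ is small enough -- forcing $\epsilon$ roughly as small as $\gamma^{K}$ and producing the $K$-dependent factors in the denominator of $\gamma_2$. By the product bound, $\norm{P_i}_\phi \le \tfrac12 + (\sum_n\abs{a_n}2^n)^{K-1}$ with $\sum_n\abs{a_n}2^n \le (2^{s+1}(4s+2)r(K-1))^{(s+1/2)r}$, and normalization yields $\gamma_2$.

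The main obstacle is precisely the design and analysis of these two univariate approximants so that the degree/coefficient trade-off lands exactly on $\gamma_1$ and $\gamma_2$. The delicate point for $\gamma_2$ is getting $\deg q$ down to only $O(\log(1/\gamma)\log K)$: this is possible only because we let $q$ blow up polynomially in $1/\gamma$ on the gap $(-\gamma,\gamma)$ and pay for it by making $\epsilon$ super-polynomially small in $K$, then amplify correctly -- reconciling this with the Markov-type obstructions, and carrying the Chebyshev/amplification coefficient bookkeeping precisely enough to reach the stated constants ($376$, $4s+2$, $(4K-5)2^{K-1}$, and the $1/2$'s in the exponents), is where all the real work lies. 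The geometric reduction, the dimension-free $\phi$-norm bounds, the value-margin computations, the final normalization, and $k(x_t,x_t)\le 2$ are all routine once those building blocks are in hand.
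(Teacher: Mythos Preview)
Your overall framework---build for each class $i$ a polynomial $P_i$ that is $\ge 1/2$ on class~$i$ and $\le -1/2$ elsewhere, realize it as $\ip{c_i}{\phi(\cdot)}$, then normalize---matches the paper exactly, and your $\gamma_1$ construction (a sum of Chebyshev-type step approximants, one per $j\neq i$) is the paper's Theorem~\ref{theorem:polynomial-approximation-1}. Your exact identity $\norm{q(\ip{\cdot}{u})}_\phi^2 = \sum_n a_n^2(2\norm{u}^2)^n$ is actually a cleaner dimension-free norm bound than the paper's Lemma~\ref{lemma:norm-bound}.

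The $\gamma_2$ construction, however, does not work. You propose $P_i(x)=\prod_{j\neq i}q(\ip{x}{u_{ij}})-\tfrac12$ with a univariate $q$ of degree $\Theta(sr)=\Theta(\log(1/\gamma)\log K)$ that is $\ge 1-\epsilon$ on $[\gamma,\sqrt2]$, $\le\epsilon$ on $[-\sqrt2,-\gamma]$, and $\le M=O(1/\gamma)$ on the gap, with $\epsilon M^{K-2}\le 1/4$ forcing $\epsilon\lesssim\gamma^{K}$. But no such $q$ exists at that degree: any polynomial making a jump of order $1$ across a gap of relative width $\gamma$ with accuracy $\epsilon$ needs degree $\Omega(\log(1/\epsilon)/\sqrt\gamma)$; with $\epsilon\sim\gamma^K$ this is $\Omega(K\log(1/\gamma)/\sqrt\gamma)$, not $O(\log(1/\gamma)\log K)$. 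Even your base polynomial---degree $\Theta(\log(1/\gamma))$ with a constant-factor separation between $[\gamma,\sqrt2]$ and $[-\sqrt2,-\gamma]$---already violates the Chebyshev lower bound, which requires degree $\Omega(1/\sqrt\gamma)$ for any constant separation. And the degree-$r=\Theta(\log K)$ amplification can only improve a constant-factor separation to $K^{-O(1)}$, nowhere near $\gamma^K$; meanwhile it blows the gap bound up to $(1/\gamma)^{\Theta(\log K)}$, so $\epsilon M^{K-2}$ explodes rather than shrinks.

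The paper's route to $\gamma_2$ (Theorem~\ref{theorem:polynomial-approximation-2}) is genuinely different: it does \emph{not} build a single univariate step approximant and then take sums or products. Following \citet{Beigel-Reingold-Spielman-1995}, it constructs a univariate \emph{rational} function $S_{s,r}=A_{s,r}/B_{s,r}$ of polynomial degree $(2s+1)r$ that approximates $\mathrm{sign}(z)$ to accuracy $1/(2m)$ on $[-2^s,-1]\cup[1,2^s]$ while staying bounded by $1$ on the gap $[-1,1]$---this simultaneous gap-boundedness is exactly what the simple product approach cannot achieve at low degree. One then clears the common denominator $\prod_j B_{s,r}(\ip{v_j}{x}/\gamma)$ (which is uniformly bounded \emph{below}) to obtain the polynomial $p$; the multiplicative structure appears only in this denominator-clearing step, not as a product of step functions. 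This is the missing idea in your $\gamma_2$ argument.
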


Using this theorem we derive a mistake bound for
Algorithm~\ref{algorithm:kernelized} with kernel \eqref{equation:rational-kernel}
under the weak linear separability assumption.

\begin{corollary}[Mistake upper bound]
\label{corollary:weakly-separable-examples-mistake-upper-bound}
Let $K$ be a positive integer and let $\gamma$ be a positive real number. If
$(x_1, y_1), (x_2, y_2), \dots, (x_T, y_T) \in \B(\zero,1) \times \{1,2,\dots,K\}$
is a sequence of weakly separable labeled examples with margin $\gamma > 0$,
then the expected number of mistakes made by Algorithm~\ref{algorithm:kernelized}
with kernel $k(x,x')$ defined by \eqref{equation:rational-kernel}
is at most $\min (2^{\widetilde{O}(K \log^2
(1/\gamma))}, 2^{\widetilde{O}(\sqrt{1/\gamma} \log K)})$.
\end{corollary}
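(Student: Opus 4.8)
The plan is to derive Corollary~\ref{corollary:weakly-separable-examples-mistake-upper-bound} directly by combining Theorem~\ref{theorem:margin-transformation} with Theorem~\ref{theorem:kernelized-upper-bound}. First I would invoke Theorem~\ref{theorem:margin-transformation}: since the examples lie in $\B(\zero,1)$ and are weakly linearly separable with margin $\gamma$, the mapped examples $(\phi(x_1),y_1),\dots,(\phi(x_T),y_T)$ are strongly linearly separable with margin $\gamma' = \max\{\gamma_1,\gamma_2\}$, and moreover $k(x_t,x_t)=\ip{\phi(x_t)}{\phi(x_t)}_{\ell_2}\le 2$ for all $t$, so we may take $R = \sqrt{2}$. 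Then Theorem~\ref{theorem:kernelized-upper-bound} applies verbatim to Algorithm~\ref{algorithm:kernelized} run with the rational kernel \eqref{equation:rational-kernel}, giving an expected mistake bound of $(K-1)\lfloor 4(R/\gamma')^2\rfloor \le 8(K-1)/(\gamma')^2$.

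The remaining work is purely to estimate $1/(\gamma')^2 = 1/\max\{\gamma_1,\gamma_2\}^2 = \min\{1/\gamma_1^2,\,1/\gamma_2^2\}$ and show each of the two quantities is of the claimed doubly-exponential form. For the $\gamma_1$ term, I would take logarithms: $\log_2(1/\gamma_1) = \tfrac12\,\lceil\log_2(2K-2)\rceil\,\lceil\sqrt{2/\gamma}\,\rceil\,\log_2\!\big(376\lceil\log_2(2K-2)\rceil\lceil\sqrt{2/\gamma}\,\rceil\big) + \log_2(2\sqrt K)$. The dominant factor is $\lceil\log_2(2K-2)\rceil\cdot\lceil\sqrt{2/\gamma}\,\rceil$ times a logarithmic correction in both $K$ and $1/\gamma$, so $\log_2(1/\gamma_1) = \widetilde{O}(\sqrt{1/\gamma}\,\log K)$ and hence $1/\gamma_1^2 = 2^{\widetilde O(\sqrt{1/\gamma}\log K)}$, using the $\widetilde O$ convention stated in the paper's footnote. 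For the $\gamma_2$ term, recall $r = \Theta(\log K)$ and $s = \Theta(\log(1/\gamma))$, so $(s+1/2)\,r\,(K-1) = \widetilde O(K)$ (with polylog factors in both $K$ and $1/\gamma$), while $\log_2\!\big(2^{s+1} r(K-1)(4s+2)\big) = s + O(\log K + \log s) = O(\log(1/\gamma) + \log K)$; the denominator contributes an additional $O(K + \log K)$ to the exponent. Altogether $\log_2(1/\gamma_2) = \widetilde O(K)\cdot O(\log(1/\gamma)) + O(K) = \widetilde O(K\log^2(1/\gamma))$ — the extra $\log(1/\gamma)$ power beyond $\widetilde O$ of $K\log(1/\gamma)$ being absorbed exactly into the $\log^2(1/\gamma)$ — so $1/\gamma_2^2 = 2^{\widetilde O(K\log^2(1/\gamma))}$.

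Combining, the expected number of mistakes is at most $8(K-1)\cdot\min\{1/\gamma_1^2,\,1/\gamma_2^2\} = \min\big(2^{\widetilde O(K\log^2(1/\gamma))},\,2^{\widetilde O(\sqrt{1/\gamma}\,\log K)}\big)$, since the leading $8(K-1)$ factor is polynomial in $K$ and is swallowed by the $\widetilde O$ in either exponent. The only genuinely delicate point is bookkeeping the polylog factors so that they are correctly charged against the $\widetilde O$ notation — in particular verifying that the $\log(1/\gamma)$ arising from $\log_2(2^{s+1}\cdots)$ multiplied against the $\widetilde O(K)$ from $(s+1/2)r(K-1)$ indeed fits inside $\widetilde O(K\log^2(1/\gamma))$ rather than producing a larger exponent; this is a routine but attention-demanding calculation, and I would present it as a short displayed chain of inequalities for each of $\gamma_1$ and $\gamma_2$.
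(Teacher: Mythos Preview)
Your proposal is correct and follows exactly the approach indicated in the paper, which simply states that the corollary ``follows directly from Theorems~\ref{theorem:kernelized-upper-bound} and~\ref{theorem:margin-transformation}.'' You actually go further than the paper by spelling out the exponent bookkeeping for $1/\gamma_1^2$ and $1/\gamma_2^2$; one minor point to clean up is your intermediate claim that $(s+1/2)r(K-1)=\widetilde O(K)$, which under the paper's strict convention $\widetilde O(f)=O(f\,\polylog f)$ is not literally true (the factor $s=\Theta(\log(1/\gamma))$ is not polylog in $K$), but as you note this stray $\log(1/\gamma)$ is precisely what gets absorbed into the final $K\log^2(1/\gamma)$, so the end result is correct.
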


This corollary follows directly from
Theorems~\ref{theorem:kernelized-upper-bound}
and~\ref{theorem:margin-transformation}. We remark that under the weakly linearly
separable setting,~\citep{Daniely-Helbertal-2013} gives a mistake lower bound of
$\Omega(\frac{K}{\gamma^2})$ for {\em any algorithm} (see also~\autoref{theorem:strongly-separable-examples-mistake-lower-bound}).
We leave the possibility of designing efficient algorithms that
have mistakes bounds matching this lower bound as an important open question.

\subsection{Proof of Theorem~\ref{theorem:margin-transformation}}
\label{section:margin-transformation}

\paragraph{Overview.} The idea behind the construction and analysis of the
mapping $\phi$ is polynomial approximation. Specifically, we construct $K$
multivariate polynomials $p_1, p_2, \dots,p_K$ such that
\begin{align}
\label{equation:poly-pos}
& \forall t \in \cbr{1,2,\dots,T}, \qquad p_{y_t}(x_t) \ge \frac{\gamma'}{2} \; ,
\\
\label{equation:poly-neg}
& \begin{gathered}
\forall t \in \cbr{1,2,\dots,T} \ \forall i \in \cbr{1,2,\ldots,K} \setminus \cbr{y_t}, \\
p_i(x_t) \le - \frac{\gamma'}{2} \; .
\end{gathered}
\end{align}
We then show (\autoref{lemma:norm-bound}) that each polynomial $p_i$ can be
expressed as $\ip{c_i}{\phi(x)}_{\ell_2}$ for some $c_i \in \ell_2$. This
immediately implies that the examples $(\phi(x_1),y_1), \ldots,
(\phi(x_T),y_T)$ are strongly linearly separable with a positive margin.

The conditions \eqref{equation:poly-pos} and~\eqref{equation:poly-neg} are
equivalent to that
\begin{align}
\label{equation:poly-pos-i}
\forall t \in \cbr{1,2,\dots,T}, y_t = i \quad \Rightarrow \quad p_i(x_t) \ge \frac{\gamma'}{2} \; , \\
\label{equation:poly-neg-i}
\forall t \in \cbr{1,2,\dots,T}, y_t \neq i \quad \Rightarrow \quad p_i(x_t) \le -\frac{\gamma'}{2} \; .
\end{align}
hold for all $i \in \{1,2,\dots,K\}$. We can thus fix $i$ and focus on
construction of one particular polynomial $p_i$.

Since examples $(x_1,y_1), (x_2, y_2), \dots, (x_T,y_T)$ are weakly linearly separable,
all examples from class $i$ lie in
$$
R_i^+ = \!\!\!\!\! \!\!\!\!\! \bigcap_{j \in \cbr{1,2,\dots,K} \setminus \cbr{i}} \cbr{x \in \B(\zero,1) ~:~ \ip{w_i^* - w_j^*}{x} \ge \gamma},
$$
and all examples from the remaining classes lie in
$$
R_i^- = \!\!\!\!\! \!\!\!\!\! \bigcup_{j \in \cbr{1,2,\dots,K} \setminus \cbr{i}} \cbr{x \in \B(\zero,1) ~:~ \ip{w_i^* - w_j^*}{x} \le -\gamma}.
$$
Therefore, to satisfy conditions~\eqref{equation:poly-pos-i}
and~\eqref{equation:poly-neg-i}, it suffices to construct $p_i$ such that
\begin{align}
\label{eqn:r-plus}
x \in R_i^+ \qquad & \Longrightarrow \qquad p_i(x) \ge \frac {\gamma'} 2 \; , \\
\label{eqn:r-minus}
x \in R_i^- \qquad & \Longrightarrow \qquad p_i(x) \le - \frac {\gamma'} 2 \; .
\end{align}

According to the well known Stone-Weierstrass theorem~\citep[see
e.g.][Section~10.10]{Davidson-Donsig-2010}, on a compact set, multivariate
polynomials uniformly approximate any continuous function. Roughly speaking, the
conditions \eqref{eqn:r-plus} and \eqref{eqn:r-minus} mean that $p_i$
approximates on $\B(\zero,1)$ a scalar multiple of the indicator function of the
intersection of $K-1$ halfspaces $\bigcap_{j \in \cbr{1,2,\dots,K} \setminus
\cbr{i}} \cbr{x: \ip{w_i^* - w_j^*}{x} \geq 0}$ while within margin $\gamma$ along
the decision boundary, the polynomial is allowed to attain arbitrary values.
It is thus clear such a polynomial exists.

We give two explicit constructions for such polynomial in
Theorems~\ref{theorem:polynomial-approximation-1}~and~\ref{theorem:polynomial-approximation-2}.
Our constructions are based on \citet{Klivans-Servedio-2008} which in turn uses
the constructions from~\citet{Beigel-Reingold-Spielman-1995}. More importantly,
the theorems quantify certain parameters of the polynomial, which allows us
to upper bound the transformed margin $\gamma'$.

Before we state the theorems, recall that a polynomial of $d$ variables is a
function $p:\R^d \to \R$ of the form
\begin{align*}
p(x)
&= p(x_1, x_2, \dots, x_d) \\
&= \sum_{\alpha_1, \alpha_2, \dots, \alpha_d} c_{\alpha_1, \alpha_2, \dots, \alpha_d} x_1^{\alpha_1} x_2^{\alpha_2} \dots x_d^{\alpha_d}
\end{align*}
where the sum ranges over a finite set of $d$-tuples $(\alpha_1, \alpha_2,
\dots, \alpha_d)$ of non-negative integers and $c_{\alpha_1, \alpha_2, \dots,
\alpha_d}$'s are real coefficients. The \emph{degree} of a polynomial $p$,
denoted by $\deg(p)$, is the largest value of $\alpha_1 + \alpha_2 + \dots +
\alpha_d$ for which the coefficient $c_{\alpha_1, \alpha_2, \dots, \alpha_d}$ is
non-zero. Following the terminology of~\citet{Klivans-Servedio-2008}, the
\emph{norm of a polynomial} $p$ is defined as
$$
\norm{p} = \sqrt{\sum_{\alpha_1, \alpha_2, \dots, \alpha_d} \left(c_{\alpha_1, \alpha_2, \dots, \alpha_d} \right)^2 } \; .
$$
It is easy see that this is indeed a norm, since we can interpret it as the
Euclidean norm of the vector of the coefficients of the polynomial.

\begin{theorem}[Polynomial approximation of intersection of halfspaces I]
\label{theorem:polynomial-approximation-1}
Let $v_1, v_2, \dots, v_m \in \R^d$ be vectors such that $\norm{v_1},
\norm{v_2}, \dots, \norm{v_m} \le 1$. Let $\gamma \in (0,1)$. There exists a
multivariate polynomial $p:\R^d \to \R$ such that
\begin{enumerate}
\item $p(x) \ge 1/2$ for all $\displaystyle x \in R^+ = \bigcap_{i=1}^m \left\{ x \in \B(\zero,1) ~:~  \ip{v_i}{x} \ge \gamma \right\}$,
\item $p(x) \le -1/2$ for all $\displaystyle x \in R^- = \bigcup_{i=1}^m \left\{ x \in \B(\zero,1) ~:~  \ip{v_i}{x} \le - \gamma \right\}$,
\item $\displaystyle \deg(p) = \left\lceil \log_2(2m) \right\rceil \cdot \left\lceil \sqrt{{1}/{\gamma}} \right\rceil$,
\item $\displaystyle \norm{p} \le \left[ 188 \left\lceil \log_2(2m) \right\rceil \cdot \left\lceil \sqrt{{1}/{\gamma}} \right\rceil \right]^{
  \frac{
    \left\lceil \log_2(2m) \right\rceil \cdot \left\lceil \sqrt{{1}/{\gamma}} \right\rceil
  }{2}
}$.
\end{enumerate}
\end{theorem}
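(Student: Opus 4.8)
The plan is to build $p$ in two stages, following the template of \citet{Klivans-Servedio-2008} and \citet{Beigel-Reingold-Spielman-1995}. First I would handle a single halfspace: for each $i$, I want a univariate polynomial $q$ (applied to the linear form $\ip{v_i}{x}$) that is close to $1$ when $\ip{v_i}{x}\ge\gamma$ and close to $0$ when $\ip{v_i}{x}\le-\gamma$, with the crucial feature that its degree is only $O(\sqrt{1/\gamma})$ rather than $O(1/\gamma)$. This is the classical Chebyshev-polynomial speedup: rescaling and shifting the degree-$n$ Chebyshev polynomial $T_n$ so that it is bounded by, say, $1/(4m)$ on $[-1,\gamma]$-ish and already large at $1$, and using the fact that $T_n$ grows like $(1+\sqrt{2\gamma})^n$ just outside $[-1,1]$, so $n=\lceil\sqrt{1/\gamma}\rceil$ suffices to blow past the required gap. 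I would compose this with a low-degree ``amplifying'' polynomial (a constant number of iterations of $x\mapsto 3x^2-2x^3$ or similar) to push the values on the two sides to within $1/(2m)$ of $0$ and $1$ respectively; since the amplifier has constant degree, the degree stays $O(\sqrt{1/\gamma})$. Tracking the rescaling constants, one gets the stated $\deg = \lceil\log_2(2m)\rceil\cdot\lceil\sqrt{1/\gamma}\rceil$ once the product construction of the next step is folded in.

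Second, to pass from one halfspace to the intersection of $m$ of them, I would use the standard ``approximate AND'' trick: form $P(x)=\prod_{i=1}^m q_i(x)$ where $q_i$ is the single-halfspace polynomial for $v_i$. On $R^+$ every factor is within $1/(2m)$ of $1$, so $P$ is within roughly $1/2$ of $1$, hence $\ge 1/2$; on $R^-$ at least one factor is within $1/(2m)$ of $0$ while the others are bounded, forcing $P$ close to $0$. Then a final affine rescaling $p = 2P-1$ (or a slightly more careful two-sided normalization) turns the target values $\{0,1\}$ into $\{-1,+1\}$ and yields $p\ge 1/2$ on $R^+$, $p\le -1/2$ on $R^-$. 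The $\lceil\log_2(2m)\rceil$ factor in the degree comes from the amplification: to get each $q_i$ accurate to $1/(2m)$ one needs $\Theta(\log m)$ rounds of the constant-degree amplifier, multiplying the base degree $\lceil\sqrt{1/\gamma}\rceil$ by $\lceil\log_2(2m)\rceil$.

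The main obstacle — and the part the theorem really pays for — is the bound on $\norm{p}$, the Euclidean norm of the coefficient vector (not the sup-norm on the ball). I would control this with two lemmas. First, a ``norm is submultiplicative enough'' estimate: if $p=p_1 p_2$ then $\norm{p_1 p_2}\le \norm{p_1}\norm{p_2}\cdot(\text{something like }\binom{\deg p_1+\deg p_2}{\deg p_1})$ or, more cleanly, work with the norm $\|p\|_1$ of coefficients or a weighted norm for which multiplication is genuinely submultiplicative, then convert at the end; the factor $188$ and the exponent $\tfrac12\lceil\log_2(2m)\rceil\lceil\sqrt{1/\gamma}\rceil$ must be exactly what falls out of iterating this through the $\Theta(\log m)$ amplification steps and the $m$-fold product. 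Second, a bound on the coefficient norm of the rescaled Chebyshev polynomial: $T_n$ has coefficients as large as $2^{n}$-ish, and after the argument rescaling $t\mapsto$ (affine in $\ip{v_i}{x}$) one picks up powers of $1/\gamma$, so $\norm{q_i}$ is of order $(C/\gamma)^{O(\sqrt{1/\gamma})}$ for an absolute constant $C$; composing with the $O(\log m)$ amplifier rounds multiplies the exponent by $O(\log m)$ and the base by an absolute constant, landing on the claimed form. I expect the bookkeeping of these constants — making sure the base comes out as $188\lceil\log_2(2m)\rceil\lceil\sqrt{1/\gamma}\rceil$ and the exponent as exactly half the degree — to be the most delicate and error-prone part; the structural argument itself is routine once the Chebyshev speedup and the submultiplicativity of the coefficient norm are in hand.
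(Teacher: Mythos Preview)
Your Chebyshev step is right in spirit, but the combination step is wrong and would not give the stated degree. You propose $P(x)=\prod_{i=1}^m q_i(x)$, an approximate AND via a product. A product of $m$ polynomials, each of degree $\lceil\log_2(2m)\rceil\cdot\lceil\sqrt{1/\gamma}\rceil$, has degree $m\cdot\lceil\log_2(2m)\rceil\cdot\lceil\sqrt{1/\gamma}\rceil$; there is no way to ``fold in'' the product and recover degree $\lceil\log_2(2m)\rceil\cdot\lceil\sqrt{1/\gamma}\rceil$. Your accounting of where the $\lceil\log_2(2m)\rceil$ factor comes from is therefore off: it does not come from amplification rounds on each $q_i$; in the correct construction there is no per-halfspace amplification to accuracy $1/(2m)$, and there is no $m$-fold product at all. (There is also a secondary issue with your product argument on $R^-$: you need each $q_j$ to be uniformly bounded on the whole ball, not just on $R^+\cup R^-$, since for $j\neq i$ the value $\ip{v_j}{x}$ may land anywhere in $[-1,1]$.)

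The paper's construction is instead a \emph{sum}. With $s=\lceil\sqrt{1/\gamma}\rceil$ and $r=\lceil\log_2(2m)\rceil$, it sets
\[
p(x)=m+\tfrac12-\sum_{i=1}^m \bigl(T_s(1-\ip{v_i}{x})\bigr)^r .
\]
On $R^+$, each argument $1-\ip{v_i}{x}$ lies in $[0,1]$, so $|T_s(\cdot)|\le 1$ and each summand lies in $[-1,1]$; the sum is at most $m$ and $p\ge 1/2$. On $R^-$, some $i$ has $1-\ip{v_i}{x}\ge 1+\gamma$; the Chebyshev growth bound $T_s(1+\gamma)\ge 1+s^2\gamma\ge 2$ gives $(T_s)^r\ge 2^r\ge 2m$, while every other summand is still $\ge -1$ (since $T_s\ge -1$ on $[0,2]$), so the sum is $\ge 2m-(m-1)$ and $p\le -1/2$. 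The degree is exactly $rs$ because summing does not raise degree; the $r=\lceil\log_2(2m)\rceil$ power is what makes a single violated halfspace dominate the whole sum. The norm bound then follows by bounding $\norm{T_s(1-\ip{v_i}{\cdot})}$ via the coefficient norm $\norm{T_s}\le(1+\sqrt{2})^s$ and the submultiplicativity estimates of \autoref{lemma:properties-of-norm-of-polynomials}, with no $m$-fold product to pay for. Your instinct to use Chebyshev for the $\sqrt{1/\gamma}$ degree is correct; the missing idea is to combine the halfspaces additively and use the $r$-th power for the $\log m$ amplification, not a product.
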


\begin{theorem}[Polynomial approximation of intersection of halfspaces II]
\label{theorem:polynomial-approximation-2}
Let $v_1, v_2, \dots, v_m \in \R^d$ be vectors such that $\norm{v_1},
\norm{v_2}, \dots, \norm{v_m} \le 1$. Let $\gamma \in (0,1)$.
Define
$$
r = 2 \left\lceil \frac{1}{4} \log_2(4m + 1) \right\rceil + 1 \quad \text{and} \quad s = \left \lceil \log_2(1/\gamma) \right \rceil \; .
$$
Then, there exists a multivariate polynomial $p:\R^d \to \R$ such that
\begin{enumerate}
\item $\displaystyle p(x) \ge 1/2$
for all $\displaystyle x \in R^+ = \bigcap_{i=1}^m \left\{ x \in \B(\zero,1) ~:~ \ip{v_i}{x} \ge \gamma \right\}$,

\item $\displaystyle p(x) \le - 1/2$
for all $\displaystyle x \in R^- = \bigcup_{i=1}^m \left\{ x \in \B(\zero,1) ~:~ \ip{v_i}{x} \le - \gamma \right\}$,

\item $\deg(p) \le (2s+1) rm$,
\item $\norm{p} \le (4m-1) 2^m \cdot \left(2^s rm (4s+2) \right)^{(s+1/2)rm}$.
\end{enumerate}
\end{theorem}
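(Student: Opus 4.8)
The plan is to reduce the statement to the construction of a single polynomial $p$ that separates $R^+$ from $R^-$ (this gives items~1 and~2) and then to control its degree and norm (items~3 and~4). We follow the construction of \citet{Klivans-Servedio-2008}, which in turn builds on \citet{Beigel-Reingold-Spielman-1995}, but we reorganize every step so that none of the bounds picks up a factor of the ambient dimension $d$. The construction has three stages: a low-degree univariate ``soft threshold'' for each halfspace, an accuracy-amplification step, and a Boolean \textsc{And} of the amplified thresholds.

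\emph{Per-halfspace soft thresholds.} For $i \in \cbr{1,\dots,m}$ write $t_i = \ip{v_i}{x}$; since $\norm{v_i}\le 1$ and $x \in \B(\zero,1)$ we have $t_i \in [-1,1]$. I would build, for each $i$, a univariate \emph{rational} function $\rho(t) = a(t)/b(t)$ with $b(t) > 0$ on $[-1,1]$ (so that, after later clearing denominators, sign conditions become polynomial conditions) that is close to $1$ on $[\gamma,1]$ and close to $0$ on $[-1,-\gamma]$, with $\deg a,\deg b = O(s)$, where $s = \lceil\log_2(1/\gamma)\rceil$. This is exactly the point of departure from Theorem~\ref{theorem:polynomial-approximation-1}: instead of a Chebyshev polynomial of degree $\Theta(\sqrt{1/\gamma})$, here we use a \emph{rational} approximation to the sign function, whose degree is only polynomial in $\log(1/\gamma)$ --- this is what ultimately produces the $\log(1/\gamma)$-type bound. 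Substituting $t_i = \ip{v_i}{x}$ leaves the degree unchanged and turns $\rho$ into a multivariate rational function on $\B(\zero,1)$, call it $\rho_i$.

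\emph{Amplify and combine.} Since each $\rho_i$ achieves only constant accuracy, I would compose it with a standard amplification polynomial of degree $r = 2\lceil\tfrac14\log_2(4m+1)\rceil+1 = \Theta(\log m)$ (the one that maps an interval around $1$ to within $O(1/m)$ of $1$ and an interval around $0$ to within $O(1/m)$ of $0$), obtaining rational functions $q_i$ of degree $O(sr)$ with positive denominators. Forming $\prod_{i=1}^m q_i$, clearing the denominators, and rescaling by an appropriate power of $2$ then yields a polynomial $p$. Items~1 and~2 follow from a short case analysis: on $R^+$ every $q_i$ is within $O(1/m)$ of $1$, so $\prod_i q_i \ge \tfrac34$ and $p \ge \tfrac12$; on $R^-$ some $q_j$ is within $O(1/m)$ of $0$ while the rest stay in $[0,1]$, so $\prod_i q_i \le \tfrac14$ and $p \le -\tfrac12$. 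Counting degrees through the three stages --- per-halfspace $O(s)$, amplifier $r$, $m$-fold product --- gives $\deg(p)\le(2s+1)rm$, which is item~3, and the rescaling is responsible for the $(4m-1)2^m$ prefactor in the norm.

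\emph{The crux: the norm bound.} This is where the real work and the explicit constants ($2^s$, $4s+2$, and the exponent $(s+\tfrac12)rm=\tfrac12\deg(p)$) live. I would bound $\norm{p}$ by chaining three elementary estimates. First, for a univariate $P(t)=\sum_k c_k t^k$ and $\norm{v}\le 1$, expanding $\bigl(\sum_j v_j x_j\bigr)^k=\sum_{\abs{\alpha}=k}\binom{k}{\alpha}v^\alpha x^\alpha$ and using $\sum_{\abs{\alpha}=k}\binom{k}{\alpha}^2 v^{2\alpha}\le\bigl(\max_\alpha\binom{k}{\alpha}\bigr)\bigl(\sum_j v_j^2\bigr)^k\le k!$ gives $\norm{P(\ip{v}{\cdot})}\le\sum_k\abs{c_k}\sqrt{k!}$, which is \emph{independent of $d$} --- this is precisely where we improve on \citet{Klivans-Servedio-2008}. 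Second, a dimension-free bound on $\norm{pq}$ in terms of $\norm{p}$, $\norm{q}$, and $\deg p,\deg q$ (count monomials, Cauchy--Schwarz), applied across the $m$-fold product and the denominator clearing. Third, a bound on the norm of a composition in terms of the norms and degrees of its pieces, applied at the amplification step. The main obstacle is entirely the bookkeeping: each operation multiplies the norm by a degree-dependent factor, and one must check that these factors telescope to exactly the stated closed form rather than to something larger --- and in particular that the substitution $t\mapsto\ip{v}{x}$ contributes no factor of $d$, which is what lets us strip the dimension out of the \citet{Klivans-Servedio-2008} bounds.
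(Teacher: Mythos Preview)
Your high-level plan---a Beigel--Reingold--Spielman rational soft threshold per halfspace, an $r$-th power amplification, and then a Boolean combination with denominators cleared---is correct in spirit and matches the paper's lineage. But two structural choices diverge from the paper, and one of them introduces a real gap.

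\textbf{Combination: sum-with-threshold, not product.} The paper does \emph{not} take a product $\prod_i q_i$. It builds an odd rational function $S_{s,r}(z)=A_{s,r}(z)/B_{s,r}(z)$ (from $P_s(z)=(z-1)\prod_{i=1}^s(z-2^i)^2$ raised to the $r$-th power) that is close to $+1$ on $[1,2^s]$, close to $-1$ on $[-2^s,-1]$, and crucially satisfies $|S_{s,r}(z)|\le 1$ on $[-1,1]$. The combinator is a \emph{sum} with a threshold,
\[
Q(x)=\sum_{i=1}^m S_{s,r}\!\left(\tfrac{\ip{v_i}{x}}{\gamma}\right)-\Bigl(m-\tfrac12\Bigr),
\]
which is then cleared of denominators by multiplying through by $\prod_j B_{s,r}$. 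This is exactly what makes the $R^-$ case go through: on $R^-$ we only know one $\ip{v_j}{x}\le-\gamma$, while the remaining $\ip{v_i}{x}$ can lie anywhere in $[-1,1]$---in particular in $(-\gamma,\gamma)$, where nothing is promised. The sum tolerates this because each stray term contributes at most $1+\tfrac1{2m}$, and the single $\le -1$ term plus the $-(m-\tfrac12)$ shift wins. Your product-based \textsc{And} instead requires that every $q_i$ stay in $[0,1]$ on the \emph{entire} interval $[-1,1]$, a property you simply assert (``the rest stay in $[0,1]$'') but never derive from your construction of $\rho$. That is the genuine gap: without a three-regime bound on your amplified threshold (near $1$ on $[\gamma,1]$, near $0$ on $[-1,-\gamma]$, and in $[0,1]$ on $(-\gamma,\gamma)$), the $R^-$ argument fails. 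It is fixable, but it is not free, and the paper's sum combinator sidesteps it entirely.

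\textbf{Norm bound.} Your substitution estimate $\norm{P(\ip{v}{\cdot})}\le\sum_k|c_k|\sqrt{k!}$ is a valid dimension-free bound, but it is not the paper's route. The paper never expands $\ip{v}{x}^k$ directly; instead it bounds the linear polynomials $f_{i,v}(x)=\ip{v}{x}/\gamma-2^i$ by $\norm{f_{i,v}}^2\le 2\cdot 2^{2s}$ (trivially dimension-free) and then repeatedly applies a dimension-free product lemma, $\norm{\prod_{j=1}^n p_j}^2\le n^{\sum_j\deg p_j}\prod_j\norm{p_j}^2$, chaining through $P_s$, then $A_{s,r}$ and $B_{s,r}$, then the $m$-fold product. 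The dimension-independence there comes from counting the number of ways to split a multi-index among $n$ factors (which is $\le n^{\deg p}$, no $d$), not from a $\sqrt{k!}$ bound. Both approaches achieve the stated goal of stripping $d$ out of Klivans--Servedio; the paper's has the advantage that it propagates cleanly through the specific construction and lands on exactly the constants $(4m-1)2^m\cdot\bigl(2^s rm(4s+2)\bigr)^{(s+1/2)rm}$. Your construction, being different at the combinator step, would not produce these constants even if the gap above were patched.
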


The proofs of the theorems are in
Appendix~\ref{section:proof-of-polynomial-approximation}. The geometric
interpretation of the two regions $R^+$ and $R^-$ in the theorems is explained
in Figure~\ref{figure:pizza-slice}. Similar but weaker results were proved
by~\citet{Klivans-Servedio-2008}. Specifically, our bounds in parts 1, 2, 3, 4
of
Theorems~\ref{theorem:polynomial-approximation-1}~and~\ref{theorem:polynomial-approximation-2}
are independent of the dimension $d$.

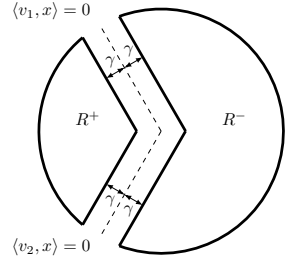
\begin{figure}
\begin{center}
 \scalebox{.65}{\begin{tikzpicture}[scale=0.5]

  \useasboundingbox (-5.5,-5.5) rectangle (5.5,5.5);


  \coordinate (Origin) at (0,0);

  \coordinate [label={[xshift=-10mm, yshift=0mm]$\ip{v_1}{x} = 0$}] (A) at (120:5);
  \coordinate [label={[xshift=-10mm, yshift=-5mm]$\ip{v_2}{x} = 0$}] (B) at (240:5);


  \draw[dashed] (Origin) -- (A);
  \draw[dashed] (Origin) -- (B);

  \coordinate (A1) at ($(120:0.5*sqrt{97} - 0.5) - (1,0)$);
  \coordinate (B1) at ($(240:0.5*sqrt{97} - 0.5) - (1,0)$);

  \coordinate (A2) at ($(120:0.5*sqrt{97} + 0.5) + (1,0)$);
  \coordinate (B2) at ($(240:0.5*sqrt{97} + 0.5) + (1,0)$);

  \draw[ultra thick] ($(Origin) - (1,0)$) -- (A1);
  \draw[ultra thick] ($(Origin) - (1,0)$) -- (B1);

  \draw[ultra thick] ($(Origin) + (1,0)$) -- (A2);
  \draw[ultra thick] ($(Origin) + (1,0)$) -- (B2);

  \pic [ultra thick, draw, angle radius=2.5cm] {angle=A1--Origin--B1};
  \pic [ultra thick, draw, angle radius=2.5cm] {angle=B2--Origin--A2};

  \coordinate (C) at (120:3);
  \coordinate (C1) at ($(120:3) + (0.75, 0.43301270189221932338)$);
  \coordinate (C2) at ($(120:3) - (0.75, 0.43301270189221932338)$);

  \draw [<->, >=latex, decoration={markings,mark=at position 1 with {\arrow[scale=0.5]{>}}},postaction={decorate}] (C1) -- (C);
  \draw [<->, >=latex, decoration={markings,mark=at position 1 with {\arrow[scale=0.5]{>}}},postaction={decorate}] (C2) -- (C);
  \draw ($0.5*(C1) + 0.5*(C)$) node[above, xshift=-1mm] {$\gamma$};
  \draw ($0.5*(C2) + 0.5*(C)$) node[above, xshift=-1mm] {$\gamma$};

  \coordinate (D) at (240:3);
  \coordinate (D1) at ($(240:3) + (-0.75, 0.43301270189221932338)$);
  \coordinate (D2) at ($(240:3) - (-0.75, 0.43301270189221932338)$);
  \draw [<->, >=latex, decoration={markings,mark=at position 1 with {\arrow[scale=0.5]{>}}},postaction={decorate}] (D1) -- (D);
  \draw [<->, >=latex, decoration={markings,mark=at position 1 with {\arrow[scale=0.5]{>}}},postaction={decorate}] (D2) -- (D);
  \draw ($0.5*(D1) + 0.5*(D)$) node[below, xshift=-1mm] {$\gamma$};
  \draw ($0.5*(D2) + 0.5*(D)$) node[below, xshift=-1mm] {$\gamma$};

  \coordinate [label={$R^+$}](R1) at ($(Origin) - (3,0)$);
  \coordinate [label={$R^-$}](R2) at ($(Origin) + (3,0)$);

\end{tikzpicture}}
\end{center}
\caption[]{The figure shows the two regions $R^+$ and $R^-$ used in parts 1 and
2 of
Theorems~\ref{theorem:polynomial-approximation-1}~and~\ref{theorem:polynomial-approximation-2}
for the case $m=d=2$ and a particular choice of vectors $v_1, v_2$ and margin
parameter $\gamma$. The separating hyperplanes $\ip{v_1}{x} = 0$ and
$\ip{v_2}{x} = 0$ are shown as dashed lines.}
\label{figure:pizza-slice}
\end{figure}

The following lemma establishes a correspondence between any multivariate
polynomial in $\R^d$ and an element in $\ell_2$, and gives an upper bound on its
norm. Its proof follows from simple algebra, which we defer to
Appendix~\ref{section:proof-norm-bound}.

\begin{lemma}[Norm bound]
\label{lemma:norm-bound}
Let $p:\R^d \to \R$ be a multivariate polynomial.
There exists $c \in \ell_2$ such that $p(x) = \ip{c}{\phi(x)}_{\ell_2}$
and $\norm{c}_{\ell_2} \le 2^{\deg(p)/2} \norm{p}$.
\end{lemma}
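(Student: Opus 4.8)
The plan is to read off the element $c \in \ell_2$ directly from the coefficients of $p$ by ``dividing out'' the scaling factors built into $\phi$. Write $p(x) = \sum_{\alpha} c_\alpha\, x_1^{\alpha_1} x_2^{\alpha_2} \cdots x_d^{\alpha_d}$, where the sum is over a finite set of $d$-tuples $\alpha = (\alpha_1,\dots,\alpha_d)$ of non-negative integers, and abbreviate $|\alpha| = \alpha_1 + \dots + \alpha_d$. Recalling from \eqref{equation:phi} that $\phi(x)_{(\alpha)} = x_1^{\alpha_1}\cdots x_d^{\alpha_d}\cdot \sqrt{2^{-|\alpha|}\binom{|\alpha|}{\alpha_1,\dots,\alpha_d}}$, I would define $c \in \R^\infty$ by setting, for each $d$-tuple $\alpha$,
\[
c_{(\alpha)} \;=\; c_\alpha \cdot \left( 2^{-|\alpha|}\binom{|\alpha|}{\alpha_1,\dots,\alpha_d} \right)^{-1/2},
\]
and $c$ to be zero on all coordinates indexed by $\alpha$ with $c_\alpha = 0$ (in particular on all but finitely many coordinates).

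The identity $p(x) = \ip{c}{\phi(x)}_{\ell_2}$ then follows by a one-line expansion: $\ip{c}{\phi(x)}_{\ell_2} = \sum_\alpha c_{(\alpha)}\,\phi(x)_{(\alpha)} = \sum_\alpha c_\alpha\, x_1^{\alpha_1}\cdots x_d^{\alpha_d} = p(x)$, where the scaling factors cancel exactly by the choice of $c_{(\alpha)}$. (No issue of well-definedness arises since $p$ has finitely many nonzero coefficients, so this is a finite sum; and matching against $p$ is legitimate because the monomials $x^\alpha$ are linearly independent as functions on $\R^d$.)

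For the norm bound, compute
\[
\norm{c}_{\ell_2}^2 \;=\; \sum_{\alpha:\, c_\alpha \neq 0} c_\alpha^2 \cdot \frac{2^{|\alpha|}}{\binom{|\alpha|}{\alpha_1,\dots,\alpha_d}}.
\]
For every $\alpha$ with $c_\alpha \neq 0$ we have $|\alpha| \le \deg(p)$, so $2^{|\alpha|} \le 2^{\deg(p)}$, and the multinomial coefficient satisfies $\binom{|\alpha|}{\alpha_1,\dots,\alpha_d} \ge 1$. Hence $\norm{c}_{\ell_2}^2 \le 2^{\deg(p)}\sum_\alpha c_\alpha^2 = 2^{\deg(p)}\norm{p}^2$, which also shows $\norm{c}_{\ell_2} < \infty$, i.e.\ $c \in \ell_2$. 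Taking square roots gives $\norm{c}_{\ell_2} \le 2^{\deg(p)/2}\norm{p}$, as claimed.

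There is essentially no hard step here: the argument is routine algebra once the right definition of $c$ is written down. The only points requiring (minor) care are the finiteness of the support of $c$ (so that the inner product and the norm sum converge trivially), the elementary bound $\binom{|\alpha|}{\alpha_1,\dots,\alpha_d} \ge 1$, and the observation that $|\alpha| \le \deg(p)$ on the support — the degree bound is exactly what converts the per-monomial factor $2^{|\alpha|}$ into the uniform factor $2^{\deg(p)}$.
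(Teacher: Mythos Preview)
Your proposal is correct and is essentially the same argument as the paper's: define $c$ by dividing each polynomial coefficient by the corresponding scaling factor in $\phi$, verify $\ip{c}{\phi(x)}_{\ell_2}=p(x)$ by cancellation, and bound $\norm{c}_{\ell_2}$ using $\binom{|\alpha|}{\alpha_1,\dots,\alpha_d}\ge 1$ and $|\alpha|\le\deg(p)$. There is no meaningful difference in approach.
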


Using the lemma and the polynomial approximation theorems, we can prove that the
mapping $\phi$ maps any set of weakly linearly separable examples to a strongly
linearly separable set of examples. Due to space constraints, we defer the full
proof of Theorem~\ref{theorem:margin-transformation} to
Appendix~\ref{section:proof-of-theorem-margin-transformation}.

\section{Experiments}
\label{section:experiments}

In this section, we provide an empirical evaluation on our algorithms, verifying
their effectiveness on linearly separable datasets. We generated strongly and
weakly linearly separable datasets with $K=3$ classes in $\R^3$ i.i.d. from two
data distributions. Figures~\ref{figure:strongly-separable-dataset}
and~\ref{figure:weakly-separable-dataset} show visualizations of the two
datasets, along with detailed descriptions of the distributions.

\begin{figure}[h]
\centering
\begin{subfigure}[b]{0.23\textwidth}
\captionsetup{justification=centering}
\begin{center}
\includegraphics[width=\textwidth, trim={0, 0cm, 0, 0}, clip]{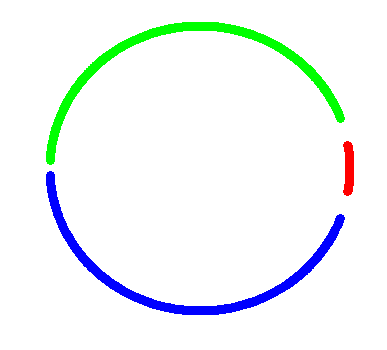}
\caption{Strongly separable case}
\label{figure:strongly-separable-dataset}
\end{center}
\end{subfigure}
\begin{subfigure}[b]{0.23\textwidth}
\captionsetup{justification=centering}
\centering
\includegraphics[width=\textwidth, trim={0, 0cm, 0, 0}, clip]{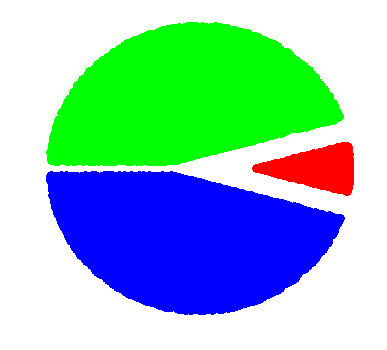}
\caption{Weakly separable case}
\label{figure:weakly-separable-dataset}
\end{subfigure}
\caption{Strongly and weakly linearly separable datasets in $\R^3$ with $K=3$
classes and $T=5\times 10^6$ examples. Here we show projections of the examples
onto their first two coordinates, which lie in the ball of radius $1/\sqrt{2}$
centered at the origin. The third coordinate is $1/\sqrt{2}$ for all examples.
Class 1 is depicted red. Classes 2 and 3 are depicted green and blue,
respectively. $80\%$ of the examples belong to class 1, $10\%$ belong to class 2
and $10\%$ belong to class 3. Class 1 lies in the angle interval $[-15^\circ,
15^\circ]$, while classes 2 and 3 lie in the angle intervals $[15^\circ,
180^\circ]$ and $[-180^\circ, -15^\circ]$ respectively. The examples are
strongly and weakly linearly separable with a margin of $\gamma=0.05$,
respectively. (Examples lying within margin $\gamma$ of the linear separators
were rejected during sampling.)}
\label{figure:strongly-and-weakly-separable-datasets}
\end{figure}

We implemented
Algorithm~\ref{algorithm:algorithm-for-strongly-linearly-separable-examples},
Algorithm~\ref{algorithm:kernelized} with rational
kernel~\eqref{equation:rational-kernel} and used implementation of
\textsc{Banditron} algorithm by \citet{Orabona09}. We evaluated these algorithms
on the two datasets. \textsc{Banditron} has an exploration rate parameter, for
which we tried values $0.02, 0.01, 0.005, 0.002, 0.001, 0.0005$. Since all three
algorithms are randomized, we run each algorithm $20$ times. The average
cumulative number of mistakes up to round $t$ as a function of $t$ are shown in
Figures~\ref{figure:number-of-mistakes-strongly-separable-dataset}
and~\ref{figure:number-of-mistakes-weakly-separable-dataset}.

We can see that there is a tradeoff in the setting of the exploration rate for
\textsc{Banditron}. With large exploration parameter, \textsc{Banditron} suffers
from over-exploration, whereas with small exploration parameter, its model
cannot be updated quickly enough. As expected,
Algorithm~\ref{algorithm:algorithm-for-strongly-linearly-separable-examples} has
a small number of mistakes in the strongly linearly separable setting, while
having a large number of mistakes in the weakly linearly separable setting, due
to the limited representation power of linear classifiers. In contrast,
Algorithm~\ref{algorithm:kernelized} with rational kernel has a small number of
mistakes in both settings, exhibiting strong adaptivity guarantees.
Appendix~\ref{section:supp-to-experiment} shows the decision boundaries that
each of the algorithms learns by the end of the last round.

\begin{minipage}{.48\textwidth}
\begin{figure}[H]
\centering
\includegraphics[width=0.98\textwidth]{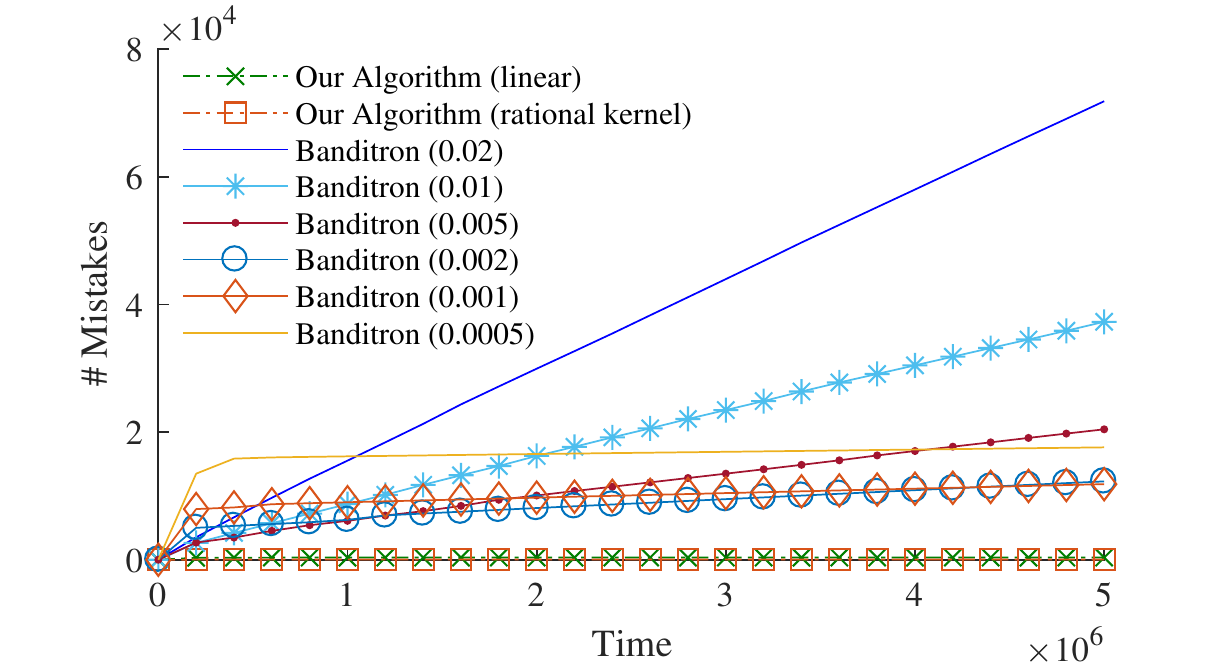}
\caption{The average cumulative number of mistakes versus the
number of rounds on the strongly linearly separable dataset in
\autoref{figure:strongly-separable-dataset}.}
\label{figure:number-of-mistakes-strongly-separable-dataset}
\end{figure}
\end{minipage}
\hfill
\begin{minipage}{.48\textwidth}
\begin{figure}[H]
\centering
\includegraphics[width=0.98\textwidth]{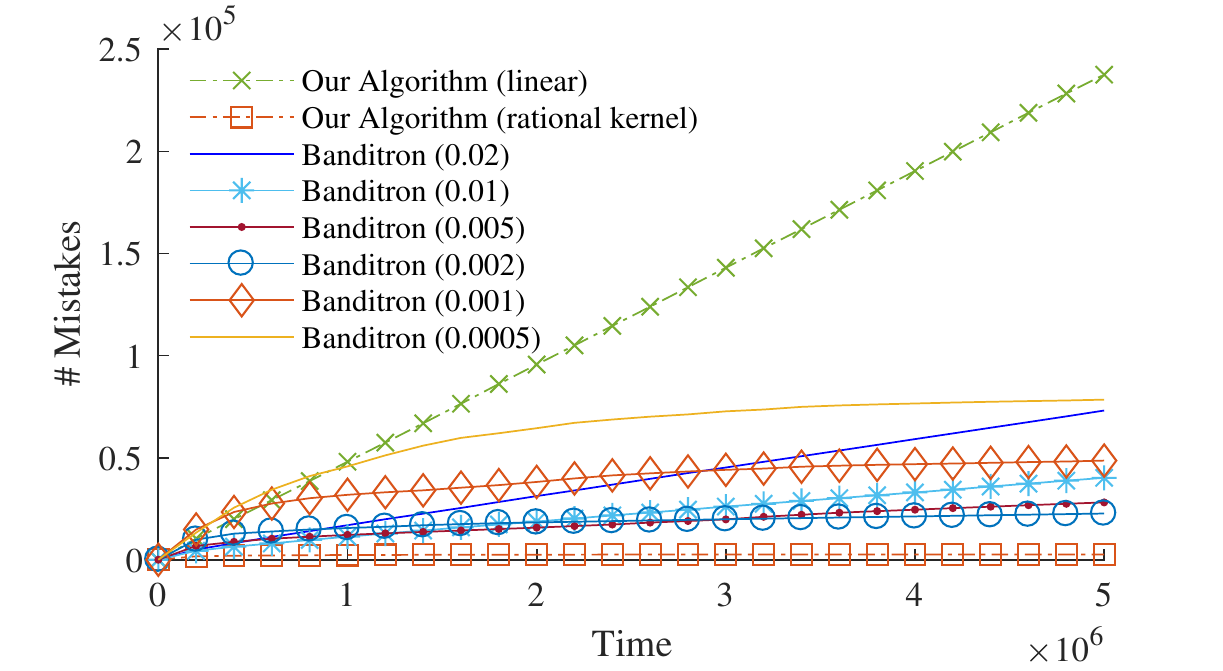}
\caption{The average cumulative number of mistakes versus the
number of rounds on the weakly linearly separable dataset in
\autoref{figure:weakly-separable-dataset}.}
\label{figure:number-of-mistakes-weakly-separable-dataset}
\end{figure}
\end{minipage}

\section*{Acknowledgments} We thank Francesco Orabona and Wen Sun
for helpful initial discussions, and thank Adam Klivans and Rocco \mbox{Servedio} for helpful
discussions on~\citep{Klivans-Servedio-2008} and
pointing out the reference~\citep{Klivans-Servedio-2004}. We also thank Dylan Foster, 
Akshay \mbox{Krishnamurthy}, and Haipeng Luo for providing a candidate solution to our problem. 
Finally, we thank Shang-En Huang and \mbox{Mengxiao} Zhang for helpful discussions on the hardness results. 

\bibliography{biblio}
\bibliographystyle{icml2019}

\onecolumn
\appendix
\clearpage

\section{Multiclass Perceptron}
\label{section:multiclass-perceptron-proofs}

\textsc{Multiclass Perceptron} is an algorithm for \textsc{Online Multiclass
Classification}. Both the protocol for the problem and the algorithm are stated
below. The algorithm assumes that the feature vectors come from an inner product
space $(V, \ip{\cdot}{\cdot})$.

Two results are folklore. The first result is
\autoref{theorem:multiclass-perceptron-mistake-upper-bound} which states that if
examples are linearly separable with margin $\gamma$ and examples have norm
at most $R$ then the algorithm makes at most $\lfloor 2 (R/\gamma)^2 \rfloor$
mistakes. The second result is
\autoref{theorem:online-multiclass-classification-mistake-lower-bound} which
states that under the same assumptions as in
\autoref{theorem:online-multiclass-classification-mistake-lower-bound}
\emph{any} deterministic algorithm for \textsc{Online Multiclass Classification}
must make at least $\lfloor (R/\gamma)^2 \rfloor$ mistakes in the worst case.

\begin{protocol}[h]
\caption{\textsc{Online Multiclass Classification}
\label{algorithm:mutliclass-classification}}
\textbf{Require:} Number of classes $K$, number of rounds $T$. \\
\textbf{Require:} Inner product space $(V,\ip{\cdot}{\cdot})$. \\
\For{$t=1,2,\dots,T$}{
Adversary chooses example $(x_t, y_t) \in V \times \{1,2,\dots,K\}$, where $x_t$ is revealed to the learner.\\
Predict class label $\widehat y_t \in \{1,2,\dots,K\}$.\\
Observe feedback $y_t$.
}
\end{protocol}

\begin{algorithm}[h]
\caption{\textsc{Multiclass Perceptron}
\label{algorithm:mutliclass-perceptron}}
\textbf{Require:} Number of classes $K$, number of rounds $T$. \\
\textbf{Require:} Inner product space $(V,\ip{\cdot}{\cdot})$. \\
Initialize $w_1^{(1)} = w_2^{(1)} = \dots = w_K^{(1)} = 0$ \\
\For{$t=1,2,\dots,T$}{
  Observe feature vector $x_t \in V$ \\
  Predict $\widehat y_t = \argmax_{i \in \{1,2,\dots,K\}} \ip{w_t^{(i)}}{x_t}$ \\
  Observe $y_t \in \{1,2,\dots,K\}$ \\
  \If{$\widehat y_t \neq y_t$}{
    Set $w_i^{(t+1)} = w_i^{(t)}$ \\ \qquad for all $i \in \{1,2,\dots,K\} \setminus \{y_t, \widehat y_t\}$ \\
    Update $w_{y_t}^{(t+1)} = w_{y_t}^{(t)} + x_t$ \\
    Update $w_{\widehat y_t}^{(t+1)} = w_{\widehat y_t}^{(t)} - x_t$ \\
  }
  \Else{
    Set $w_i^{(t+1)} = w_i^{(t)}$ for all $i \in \{1,2,\dots,K\}$ \\
  }
}
\end{algorithm}

\begin{theorem}[Mistake upper bound~\cite{Crammer-Singer-2003}]
\label{theorem:multiclass-perceptron-mistake-upper-bound}
Let $(V, \ip{\cdot}{\cdot})$ be an inner product space, let $K$ be a positive
integer, let $\gamma$ be a positive real number and let $R$ be a non-negative real
number. If $(x_1, y_1), (x_2, y_2), \dots, (x_T, y_T)$ is a sequence of labeled
examples in $V \times \{1,2,\dots,K\}$ that are weakly linearly separable with margin
$\gamma$ and $\norm{x_1}, \norm{x_2}, \dots, \norm{x_T} \le R$
then \textsc{Multiclass Perceptron} algorithm makes at most $\lfloor
2(R/\gamma)^2 \rfloor$ mistakes.
\end{theorem}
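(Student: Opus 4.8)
The plan is to adapt the classical Perceptron potential argument to the multiclass setting, tracking two quantities simultaneously: the correlation of the algorithm's weight vectors with a fixed separating solution, and the total squared norm of the algorithm's weight vectors. Fix vectors $w_1, \dots, w_K$ witnessing weak linear separability with margin $\gamma$, so that $\sum_{i=1}^K \norm{w_i}^2 \le 1$ and $\ip{x_t}{w_{y_t}} \ge \ip{x_t}{w_i} + \gamma$ for every $t$ and every $i \ne y_t$. Let $M$ denote the number of mistakes, and observe that the algorithm's weight vectors $w_i^{(t)}$ change only on mistake rounds, so all potential increments on non-mistake rounds are zero.

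First I would define the progress potential $\Phi_t = \sum_{i=1}^K \ip{w_i^{(t)}}{w_i}$ and show it increases by at least $\gamma$ on every mistake. On a mistake at round $t$ the update adds $x_t$ to $w_{y_t}^{(t)}$ and subtracts $x_t$ from $w_{\widehat y_t}^{(t)}$, leaving the other coordinates fixed, so $\Phi_{t+1} - \Phi_t = \ip{x_t}{w_{y_t}} - \ip{x_t}{w_{\widehat y_t}} \ge \gamma$, where we use $\widehat y_t \ne y_t$ and the margin condition. Since $\Phi_1 = 0$, after all $T$ rounds $\Phi_{T+1} \ge M\gamma$.

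Next I would bound the norm potential $\Psi_t = \sum_{i=1}^K \norm{w_i^{(t)}}^2$. Expanding the two changed squared norms on a mistake round gives $\Psi_{t+1} - \Psi_t = 2\ip{x_t}{w_{y_t}^{(t)} - w_{\widehat y_t}^{(t)}} + 2\norm{x_t}^2$. The key observation is that the prediction rule $\widehat y_t = \argmax_{i} \ip{w_i^{(t)}}{x_t}$ forces $\ip{x_t}{w_{\widehat y_t}^{(t)}} \ge \ip{x_t}{w_{y_t}^{(t)}}$, so the first term is nonpositive and $\Psi_{t+1} - \Psi_t \le 2\norm{x_t}^2 \le 2R^2$. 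Hence $\Psi_{T+1} \le 2MR^2$.

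Finally I would combine the two bounds via Cauchy--Schwarz applied to the stacked coefficient vectors: $M\gamma \le \Phi_{T+1} = \sum_{i=1}^K \ip{w_i^{(T+1)}}{w_i} \le \sqrt{\textstyle\sum_i \norm{w_i^{(T+1)}}^2}\,\sqrt{\textstyle\sum_i \norm{w_i}^2} \le \sqrt{\Psi_{T+1}} \le \sqrt{2MR^2}$. Squaring and dividing by $M$ yields $M \le 2(R/\gamma)^2$, and since $M$ is a nonnegative integer, $M \le \lfloor 2(R/\gamma)^2 \rfloor$. There is no substantial obstacle here; the only point requiring care is the sign argument for the norm potential, which is precisely where the $\argmax$ prediction rule enters, together with the bookkeeping that both potentials are unchanged on rounds without a mistake.
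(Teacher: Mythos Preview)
Your proposal is correct and follows essentially the same approach as the paper: track the correlation potential $\sum_i \ip{w_i^{(t)}}{w_i^*}$ (which gains at least $\gamma$ per mistake) and the squared-norm potential $\sum_i \norm{w_i^{(t)}}^2$ (which gains at most $2R^2$ per mistake, using the $\argmax$ rule to kill the cross term), then combine via Cauchy--Schwarz. The only cosmetic difference is that the paper applies Cauchy--Schwarz termwise and then again on the vectors of norms, whereas you apply it once on the stacked vectors in $V^K$; these are equivalent.
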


\begin{proof}
Let $M = \sum_{t=1}^T \indicator{\widehat y_t \neq y_t}$ be the number of
mistakes the algorithm makes. Since the $K$-tuple $(w_1^{(t)}, w_2^{(t)}, \dots,
w_K^{(t)})$ changes only if a mistake is made, we can upper bound $\sum_{i=1}^K
\norm{w_i^{(t)}}^2$ in terms of number of mistakes.
If a mistake happens in round $t$ then
\begin{align*}
\sum_{i=1}^K \norm{w_i^{(t+1)}}^2
& = \left(\sum_{i \in \{1,2,\dots,K\} \setminus \{y_t, \widehat y_t\} } \norm{w_i^{(t)}}^2 \right)
 + \norm{w_{y_t}^{(t)} + x_t}^2 + \norm{w_{\widehat y_t}^{(t)} - x_t}^2 \\
& = \left(\sum_{i \in \{1,2,\dots,K\} \setminus \{y_t, \widehat y_t\} } \norm{w_i^{(t)}}^2 \right) + \norm{w_{y_t}^{(t)}}^2 + \norm{w_{\widehat y_t}^{(t)}}^2 
 + 2 \norm{x_t}^2 + 2 \ip{w_{y_t}^{(t)} - w_{\widehat y_t}^{(t)}}{x_t} \\
& = \left(\sum_{i=1}^K \norm{w_i^{(t)}}^2 \right) + 2 \norm{x_t}^2 + 2 \ip{w_{y_t}^{(t)} - w_{\widehat y_t}^{(t)}}{x_t} \\
& \le \left(\sum_{i=1}^K \norm{w_i^{(t)}}^2 \right) + 2 \norm{x_t}^2 \\
& \le \left(\sum_{i=1}^K \norm{w_i^{(t)}}^2 \right) + 2 R^2 \; .
\end{align*}
So each time a mistake happens, $\sum_{i=1}^K \norm{w_i^{(t)}}^2$ increases by at most $2R^2$. Thus,
\begin{equation}
\sum_{i=1}^K \norm{w_i^{(T+1)}}^2 \le 2R^2 M \; .
\label{equation:perceptron-norm-ub}
\end{equation}
Let $w_1^*, w_2^*, \dots, w_K^* \in V$ be vectors satisfying
\eqref{equation:weak-linear-separability-1} and
\eqref{equation:weak-linear-separability-2}. We lower bound $\sum_{i=1}^K \ip{w_i^*}{w_i^{(t)}}$. This quantity changes
only when a mistakes happens. If mistake happens in round $t$, we have
\begingroup
\allowdisplaybreaks
\begin{align*}
\sum_{i=1}^K \ip{w_i^*}{w_i^{(t+1)}}
& = \left( \sum_{i \in \{1,2,\dots,K\} \setminus
\{y_t, \widehat y_t\}} \ip{w_i^*}{w_i^{(t)}} \right) \\
 & \quad + \ip{w_{y_t}^*}{w_{y_t}^{(t)} + x_t} + 
  \ip{w_{\widehat y_t}^*}{w_{\widehat y_t}^{(t)} - x_t} \\
& = \left( \sum_{i=1}^K \ip{w_i^*}{w_i^{(t)}} \right) + \ip{w_{y_t}^* - w_{\widehat y_t}^*}{x_t} \\
& \ge  \left( \sum_{i=1}^K \ip{w_i^*}{w_i^{(t)}} \right) + \gamma \; .
\end{align*}
\endgroup
Thus, after $M$ mistakes,
\[
\label{equation:perceptron-ip-lb}
\sum_{i=1}^K \ip{w_i^*}{w_i^{(T+1)}} \ge \gamma M \; .
\]
We upper bound the left hand side by using Cauchy-Schwartz inequality twice and
the condition \eqref{equation:weak-linear-separability-1} on $w_1^*, w_2^*, \dots,
w_K^*$. We have
\begin{align*}
\sum_{i=1}^K \ip{w_i^*}{w_i^{(T+1)}}
& \le \sum_{i=1}^K \norm{w_i^*} \cdot \norm{w_i^{(T+1)}} \\
& \le \sqrt{\sum_{i=1}^K \norm{w_i^*}^2} \sqrt{\sum_{i=1}^K \norm{w_i^{(T+1)}}^2} \\
& \le \sqrt{\sum_{i=1}^K \norm{w_i^{(T+1)}}^2} \; .
\end{align*}
Combining the above inequality with Equations~\eqref{equation:perceptron-norm-ub}
and~\eqref{equation:perceptron-ip-lb}, we get
$$
(\gamma M)^2 \le \sum_{i=1}^K \norm{w_i^{(T+1)}}^2 \le 2R^2 M \; .
$$
We conclude that $M \le 2(R/\gamma)^2$. Since $M$ is an integer, $M \le \lfloor 2(R/\gamma)^2 \rfloor$.
\end{proof}

\begin{theorem}[Mistake lower bound]
\label{theorem:online-multiclass-classification-mistake-lower-bound}
Let $K$ be a positive integer, let $\gamma$ be a positive real number and let
$R$ be a non-negative real number. For any (possibly randomized) algorithm
$\calA$ for the \textsc{Online Multiclass Classification} problem there exists
an inner product space $(V, \ip{\cdot}{\cdot})$, a non-negative integer $T$ and
a sequence of labeled examples $(x_1, y_1), (x_2, y_2), \dots, (x_T, y_T)$
examples in $V \times \{1,2,\dots,K\}$ that are weakly linearly separable with
margin $\gamma$, the norms satisfy $\norm{x_1}, \norm{x_2}, \dots, \norm{x_T}
\le R$ and the algorithm makes at least $\frac 1 2 \lfloor (R/\gamma)^2 \rfloor$
mistakes.
\end{theorem}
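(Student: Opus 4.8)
The plan is to produce, for any given (possibly randomized) learner, a randomized oblivious adversary that forces roughly $\tfrac12\lfloor(R/\gamma)^2\rfloor$ expected mistakes, and then extract a single deterministic bad sequence by an averaging (Yao) argument. Set $N=\lfloor(R/\gamma)^2\rfloor$; if $N=0$ the statement is vacuous, so assume $N\ge1$ and (since multiclass classification is trivial for one class) $K\ge2$, and work in $V=\R^N$ with the standard inner product and orthonormal basis $\e_1,\dots,\e_N$. The adversary I would use first draws labels $y_1,\dots,y_N$ i.i.d.\ uniformly from $[K]$ and then, at round $t$, presents the example $(x_t,y_t)$ with $x_t=R\e_t$, so $\norm{x_t}=R$ for all $t$.

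First I would verify that this sequence is weakly linearly separable with margin $\gamma$ no matter how the labels turn out. Since the $x_t$ are pairwise orthogonal, one can build each $w_i$ one coordinate at a time: in coordinate $t$ put $(w_{y_t})_t=\tfrac{(K-1)\gamma}{KR}$ and $(w_i)_t=-\tfrac{\gamma}{KR}$ for every $i\ne y_t$. Then for every $t$ and every $i\ne y_t$ we get $\ip{x_t}{w_{y_t}}-\ip{x_t}{w_i}=R\bigl((w_{y_t})_t-(w_i)_t\bigr)=R\cdot\tfrac{\gamma}{R}=\gamma$, which is exactly \eqref{equation:weak-linear-separability-2}. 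Summing the squared coordinates over the $N$ coordinates gives $\sum_{i=1}^K\norm{w_i}^2=\tfrac{(K-1)N\gamma^2}{KR^2}\le\tfrac{N\gamma^2}{R^2}\le1$ by the choice $N\le(R/\gamma)^2$, which is \eqref{equation:weak-linear-separability-1}. This is the only place a computation is needed, and it is routine.

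Next I would lower-bound the expected number of mistakes. The key observation is that at the start of round $t$ the learner has seen only $x_1,\dots,x_t$ (deterministic) and the feedback $y_1,\dots,y_{t-1}$, so its prediction $\widehat y_t$ — even allowing internal randomization — is independent of $y_t$, which was drawn independently and uniformly from $[K]$. Hence $\Prob[\widehat y_t=y_t]=1/K$, so $\Prob[\widehat y_t\ne y_t]=1-1/K\ge1/2$ for every $t$, and by linearity of expectation the total expected number of mistakes (over the learner's and the adversary's randomness jointly) is at least $N(1-1/K)\ge N/2$. Finally, since this bound holds on average over the adversary's coin flips, some fixed realization $(y_1,\dots,y_N)$ makes the learner err at least $N/2=\tfrac12\lfloor(R/\gamma)^2\rfloor$ times in expectation over its own randomness; by the first step that fixed sequence is still weakly $\gamma$-separable with $\norm{x_t}\le R$, which is the sequence the theorem asks for.

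I do not expect a genuine obstacle. The two points that require care are the independence claim used in the third paragraph (formally: after conditioning on the learner's internal randomness and on $y_1,\dots,y_{t-1}$, the label $y_t$ is still uniform, so $\widehat y_t=y_t$ with probability exactly $1/K$) and the averaging step that turns the randomized adversary into a fixed bad sequence. It is worth noting that a deterministic learner could instead be driven to $N$ mistakes by an adaptive adversary setting $y_t\ne\widehat y_t$; the factor $1-1/K\ge1/2$ is exactly the loss incurred by passing to an oblivious randomized adversary in order to handle randomized learners.
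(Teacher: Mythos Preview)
Your proof is correct and follows essentially the same approach as the paper: orthogonal feature vectors $x_t=R e_t$ in $\R^{\lfloor(R/\gamma)^2\rfloor}$, i.i.d.\ uniform labels, and the independence argument giving $\Prob[\widehat y_t\neq y_t]\ge 1-1/K\ge 1/2$. The only cosmetic differences are that the paper uses the simpler separating vectors $w_i=\tfrac{\gamma}{R}\sum_{t:y_t=i}e_t$ (yielding $\sum_i\norm{w_i}^2=N\gamma^2/R^2\le1$) rather than your centered version, and you are more explicit about the final averaging step that extracts a fixed bad sequence from the randomized adversary.
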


\begin{proof}
Let $T = \lfloor (R/\gamma)^2 \rfloor$, $V = \R^T$, and for all $t$ in
$\cbr{1,\ldots,T}$, define instance $x_t = R e_t$ where $e_t$ is $t$-th element
of the standard orthonormal basis of $\R^T$.
Let labels $y_1, \ldots, y_T$ be chosen i.i.d uniformly at random from
$\cbr{1,2,\ldots,K}$ and independently of any randomness used by the algorithm
$\calA$.



We first show that the set of examples $(x_1,y_1)$, $\ldots$,
$(x_T, y_T)$ we have constructed is weakly linearly separable
with margin $\gamma$. To prove that, we demonstrate vectors $w_1, w_2, \dots, w_K$
satisfying conditions \eqref{equation:weak-linear-separability-1} and
\eqref{equation:weak-linear-separability-2}. We define
$$
w_i = \frac{\gamma}{R} \sum_{\substack{t : 1 \le t \le T \\ y_t = i}} e_t \qquad \text{for $i=1,2,\dots,K$.}
$$
Let $a_i = |\{ t ~:~ 1 \le t \le T, \ y_t = i \}|$ be the number of occurrences of label $i$.
It is easy to see that
$$
\norm{w_i}^2 = \frac{\gamma^2}{R^2} \sum_{\substack{t : 1 \le t \le T \\ y_t = i}} \norm{e_t}^2 = \frac{a_i \gamma^2}{R^2} \qquad \text{for $i=1,2,\dots,K$.}
$$
Since $\sum_{i=1}^K a_i = T$,
$\sum_{i=1}^K \norm{w_i}^2 = T \cdot \frac{\gamma^2}{R^2} \leq 1$, i.e.
the condition
\eqref{equation:weak-linear-separability-1} holds. To verify condition
\eqref{equation:weak-linear-separability-2} consider any labeled example $(x_t,
y_t)$. Then, for any $i$ in $\cbr{1,\ldots,K}$, by the definition of $w_i$, we have
\begin{align*}
\ip{w_i}{x_t}
& = \frac{\gamma}{R} \sum_{\substack{s : 1 \le s \le T \\ y_s = i}} \ip{e_s}{R e_t} \\
& = \gamma \cdot \sum_{\substack{s : 1 \le s \le T \\ y_s = i}} \indicator{s = t} \\
& = \gamma \cdot \indicator{y_t = i}
\; .
\end{align*}
Therefore, if $i = y_t$, $\ip{w_i}{x_t} = \gamma$;
otherwise $i \neq y_t$, in which case $\ip{w_i}{x_t} = 0$.
Hence, condition \eqref{equation:weak-linear-separability-2} holds.

We now give a lower bound on the number of mistakes $\calA$ makes.
As $y_t$ is chosen uniformly from $\{1,2,\dots,K\}$, independently from
$\calA$'s randomization and the first $t-1$ examples,
$$
\Exp[ \indicator{\widehat y_t \neq y_t} ] \ge 1 - \frac{1}{K} \ge \frac{1}{2} \; .
$$
Summing over all $t$ in $\cbr{1,\ldots,T}$, we conclude that
$$
\Exp \sbr{ \sum_{t=1}^T \indicator{\widehat y_t \neq y_t} } \geq \frac T 2 = \frac 1 2 \lfloor (R/\gamma)^2 \rfloor,
$$
which completes the proof.
\end{proof}


\section{Proofs of Theorems~\ref{theorem:strongly-separable-examples-mistake-upper-bound} and \ref{theorem:strongly-separable-examples-mistake-lower-bound}}
\label{section:proofs-for-stringly-separable-examples}

\begin{proof}[Proof of \autoref{theorem:strongly-separable-examples-mistake-upper-bound}]
Let $M = \sum_{t=1}^T z_t$ be the number of mistakes
Algorithm~\ref{algorithm:algorithm-for-strongly-linearly-separable-examples}
makes. Let $A = \sum_{t=1}^T \indicator{S_t \neq \emptyset} z_t$ be the number of
mistakes in the rounds when $S_t \neq \emptyset$, i.e. the number of rounds
line~\ref{line:neg-update} is executed. In addition, let $B = \sum_{t=1}^T
\indicator{S_t = \emptyset} z_t$ be the number of mistakes in the rounds when $S_t =
\emptyset$. It can be easily seen that $M = A + B$.

Let $C = \sum_{t=1}^T \indicator{S_t = \emptyset}(1 - z_t)$ be the number of rounds
line~\ref{line:pos-update} gets executed. Let $U = \sum_{t=1}^T (\indicator{S_t \neq
\emptyset} z_t + \indicator{S_t = \emptyset}(1 - z_t))$ be the number of rounds
line~\ref{line:pos-update} or~\ref{line:neg-update} gets executed. In other
words, $U$ is the number of times the $K$-tuple of vectors $(w_1^{(t)},
w_2^{(t)}, \dots, w_K^{(t)})$ gets updated. It can be easily seen that $U = A +
C$.

The key observation is that $\Exp[B] = (K-1) \Exp[C]$.
To see this, note that if $S_t = \emptyset$, there is $1/K$ probability that the algorithm
guesses the correct label ($z_t = 0$) and with probability $(K-1)/K$ algorithm's guess is
incorrect ($z_t = 1$). Therefore,
\[ \Exp[ z_t |S_t = \emptyset] = \frac{K-1}{K}, \]
\[ \Exp[B] = \frac{K-1}{K} \Exp \sbr{ \sum_{t=1}^T \indicator{S_t = \emptyset} }, \]
\[ \Exp[C] = \frac{1}{K} \Exp \sbr{ \sum_{t=1}^T \indicator{S_t = \emptyset} }. \]

Putting all the information together, we get that
\begin{align}
\Exp[M]
& = \Exp[A] + \Exp[B] \nonumber \\
& = \Exp[A] + (K-1) \Exp[C] \nonumber \\
& \le (K-1) \Exp[A + C] \nonumber\\
& = (K-1) \Exp[U]  \; .
\label{eqn:mistake-update}
\end{align}

To finish the proof, we need to upper bound the number of updates $U$. We claim
that $U \le \lfloor 4(R/\gamma)^2 \rfloor$ with probability 1.
The proof of this upper bound is
similar to the proof of the mistake bound for \textsc{Multiclass Perceptron}
algorithm. Let $w_1^*, w_2^*, \dots, w_K^* \in V$ be vectors that satisfy
\eqref{equation:strong-linear-separability-1},
\eqref{equation:strong-linear-separability-2} and
\eqref{equation:strong-linear-separability-3}.
The $K$-tuple $(w_1^{(t)}, w_2^{(t)}, \dots, w_K^{(t)})$
changes only if there is an update in round $t$.
We investigate how $\sum_{i=1}^K \norm{w_i^{(t)}}^2$ and
$\sum_{i=1}^K \ip{w_i^*}{w_i^{(t)}}$ change. If there is an update in round $t$,
by lines~\ref{line:pos-update} and~\ref{line:neg-update}, we always have
$ w_{\widehat y_t}^{(t+1)} = w_{\widehat y_t}^{(t)} + (-1)^{z_t} x_t $,
and for all $i \neq \widehat y_t$, $w_{i}^{(t+1)} = w_{i}^{(t)}$.
Therefore,
\begingroup
\allowdisplaybreaks
\begin{align*}
\sum_{i=1}^K \norm{w_i^{(t+1)}}^2
& = \left( \sum_{i \in \{1,2,\dots,K\} \setminus \{\widehat y_t\}} \norm{w_i^{(t)}}^2 \right) + \norm{w_{\widehat y_t}^{(t+1)}}^2 \\
& = \left( \sum_{i \in \{1,2,\dots,K\} \setminus \{\widehat y_t\}} \norm{w_i^{(t)}}^2 \right) + \norm{w_{\widehat y_t}^{(t)} + (-1)^{z_t} x_t}^2 \\
& = \left( \sum_{i=1}^K \norm{w_i^{(t)}}^2 \right) + \norm{x_t}^2 + \underbrace{(-1)^{z_t} 2 \ip{w_{\widehat y_t}^{(t)}}{x_t}}_{\le 0} \\
& \le \left( \sum_{i=1}^K \norm{w_i^{(t)}}^2 \right) + \norm{x_t}^2 \\
& \le \left( \sum_{i=1}^K \norm{w_i^{(t)}}^2 \right) + R^2 \; .
\end{align*}
\endgroup
The inequality that $(-1)^{z_t} 2 \ip{w_{\widehat y_t}^{(t)}}{x_t} \leq 0$ is from a case analysis: if line~\ref{line:pos-update} is executed, then $z_t = 0$ and $\ip{w_{\widehat y_t}^{(t)}}{x_t} < 0$;
 otherwise line~\ref{line:neg-update} is executed, in which case $z_t = 1$ and $\ip{w_{\widehat y_t}^{(t)}}{x_t} \ge 0$.

Hence, after $U$ updates,
\begin{equation}
\sum_{i=1}^K \norm{w_i^{(T+1)}}^2 \le R^2 U \; .
\label{eqn:norm-ub}
\end{equation}
Similarly, if there is an update in round $t$, we have
\begingroup
\allowdisplaybreaks
\begin{align*}
\sum_{i=1}^K \ip{w_i^*}{w_i^{(t)}}
& = \left( \sum_{i \in \{1,2,\dots,K\} \setminus \{\widehat y_t\}} \ip{w_i^*}{w_i^{(t)}} \right) + \ip{w_{\widehat y_t}^*}{w_{\widehat y_t}^{(t+1)}} \\
& = \left( \sum_{i \in \{1,2,\dots,K\} \setminus \{\widehat y_t\}} \ip{w_i^*}{w_i^{(t)}} \right) + \ip{w_{\widehat y_t}^*}{w_{\widehat y_t}^{(t)} + (-1)^{z_t} x_t} \\
& = \left( \sum_{i=1}^K \ip{w_i^*}{w_i^{(t)}} \right) + (-1)^{z_t} \ip{w_{\widehat y_t}^*}{x_t} \\
& \ge \left( \sum_{i=1}^K \ip{w_i^*}{w_i^{(t)}} \right) + \frac \gamma 2,
\end{align*}
\endgroup
where the last inequality follows from a case analysis on $z_t$ and
Definition~\ref{definition:linear-separability}: if $z_t = 0$, then
$\widehat y_t = y_t$, by Equation~\eqref{equation:strong-linear-separability-2},
we have that $\ip{w_{\widehat y_t}^*}{x_t} \geq \frac \gamma 2$; if $z_t = 1$,
then $\widehat y_t \neq y_t$, by
Equation~\eqref{equation:strong-linear-separability-3}, we have that
$\ip{w_{\widehat y_t}^*}{x_t} \le -\frac \gamma 2$.

Thus, after $U$ updates,
\begin{equation}
\sum_{i=1}^K \ip{w_i^*}{w_i^{(T+1)}} \ge \frac {\gamma U} 2 \; .
\label{eqn:norm-lb}
\end{equation}
Applying Cauchy-Schwartz's inequality twice, and using assumption
\eqref{equation:strong-linear-separability-1}, we get that
\begin{align*}
\sum_{i=1}^K \ip{w_i^*}{w_i^{(T+1)}}
& \le \sum_{i=1}^K \norm{w_i^*} \cdot \norm{w_i^{(T+1)}} \\
& \le \sqrt{\sum_{i=1}^K \norm{w_i^*}^2} \sqrt{\sum_{i=1}^K \norm{w_i^{(T+1)}}^2} \\
& \le \sqrt{\sum_{i=1}^K \norm{w_i^{(T+1)}}^2} \; .
\end{align*}
Combining the above inequality with Equations~\eqref{eqn:norm-ub} and~\eqref{eqn:norm-lb}, we get
$$
\left(\frac{\gamma U}{2} \right)^2 \le \sum_{i=1}^K \norm{w_i^{(T+1)}}^2 \le R^2 U \; .
$$
We conclude that $U \le 4(R/\gamma)^2$. Since $U$ is an integer, $U \le \lfloor 4(R/\gamma)^2 \rfloor$.

Applying Equation~\eqref{eqn:mistake-update}, we get
$$
\Exp[M] \leq (K-1) \Exp[U] \leq (K-1) \lfloor 4(R/\gamma)^2 \rfloor \; . \qedhere
$$
\end{proof}

\begin{proof}[Proof of \autoref{theorem:strongly-separable-examples-mistake-lower-bound}]
Let $M = \left\lfloor \frac{1}{4} (R/\gamma)^2
\right\rfloor$. Let $V = \R^{M+1}$ equipped with the standard inner product.
Let $e_1, e_2, \dots, e_{M+1}$ be the standard orthonormal basis of $V$. We
define vectors $v_1, v_2, \dots, v_M \in V$ where $v_j = \frac{R}{\sqrt{2}}(e_j
+ e_{M+1})$ for $j=1,2,\dots,M$. Let $\ell_1, \ell_2, \dots, \ell_M$ be chosen
i.i.d. uniformly at random from $\{1,2,\dots,K\}$ and independently of any
randomness used the by algorithm $\calA$. Let $T = M (K - 1)$. We define examples $(x_1,
y_1), (x_2, y_2), \dots, (x_T, y_T)$ as follows. For any $j=1,2,\dots,M$ and any
$h=1,2,\dots,K-1$,
$$
(x_{(j-1)(K-1) + h}, y_{(j-1)(K-1) + h}) = (v_j, \ell_j)
$$

The norm of each example is exactly $R$. The examples are strongly linearly separable
with margin $\gamma$. To see that, consider $w_1^*, w_2^*, \dots, w_K^* \in V$
defined by
$$
w_i^* = \sqrt{2} \frac{\gamma}{R} \left( \sum_{j ~:~ \ell_j = i} e_j \right) - \frac{\sqrt{2}}2 \frac{\gamma}{R} e_{M+1}
$$
for $i=1,2,\dots,K$.

For $i \in \{1,2,\dots,K\}$ and $j \in
\{1,2,\dots,M\}$, consider the inner product of $w_i^*$ and $v_j$.
If $i = \ell_j$, $\ip{w_i^*}{v_j} = \gamma - \frac \gamma 2 = \frac \gamma 2$;
otherwise $i \neq \ell_j$, in which case
$\ip{w_i^*}{v_j} = 0 - \frac \gamma 2 = - \frac \gamma 2$.
This means that $w_1^*, w_2^*, \dots, w_K^*$ satisfy
conditions
\eqref{equation:strong-linear-separability-2} and
\eqref{equation:strong-linear-separability-3}. Condition \eqref{equation:strong-linear-separability-1}
is satisfied since
\begin{align*}
\sum_{i=1}^K \norm{w_i^*}^2
& = 2 \frac{\gamma^2}{R^2} \sum_{j=1}^M \norm{e_j}^2 +  \frac{\gamma^2}{2 R^2} K \norm{e_{M+1}}^2 \\
& = 2 \frac{\gamma^2}{R^2} M + \frac{\gamma^2}{2 R^2} K
\le \frac{1}{2} + \frac{1}{2}
= 1 \; .
\end{align*}

It remains to lower bound the expected number of mistakes of $\calA$. For
any $j \in \{1,2,\dots,M\}$, consider the expected number of mistakes the
algorithm makes in rounds $(K-1)(j-1) + 1, (K-1)(j-1) + 2, \dots, (K-1)j$.

Define a filtration of $\sigma$-algebras $\cbr{\calB_j}_{j=0}^M$, where $\calB_j
= \sigma((x_1, y_1, \hat{y}_1), \ldots, (x_{(K-1)j}, y_{(K-1)j},
\hat{y}_{(K-1)j}))$ for every $j$ in $\{1,2,\dots,M\}$. By Claim 2
of~\citet{Daniely-Helbertal-2013}, as $\ell_j$ is chosen uniformly from
$\cbr{1,\dots,K}$ and independent of $\calB_{j-1}$ and $\calA$'s randomness,
$$
\Exp \sbr{ \sum_{t=(K-1)(j-1) + 1}^{(K-1)j} z_t ~\Bigg|~ \calB_{j-1} } \ge \frac{K-1}{2} \; .
$$
This implies that
$$
\Exp \sbr{ \sum_{t=(K-1)(j-1) + 1}^{(K-1)j} z_t } \ge \frac{K-1}{2} \; .
$$
Summing over all $j$ in $\{1,2,\dots,M\}$,
$$
\Exp \sbr{ \sum_{t=1}^{(K-1)M} z_t} \geq \frac{K-1}2 \cdot M = \frac{K-1}2 \left\lfloor \frac 1 4 (R/\gamma)^2 \right\rfloor \; .
$$

Thus there exists a particular sequence of examples for which
the algorithm makes at least $\frac{K-1}2 \left\lfloor \frac 1 4 (R/\gamma)^2
\right\rfloor$ mistakes in expectation over its internal randomization.
\end{proof}

\section{Proof of Lemma~\ref{lemma:norm-bound}}
\label{section:proof-norm-bound}

\begin{proof}
Note that the polynomial $p$ can be written as
$p(x) = \sum_{\alpha_1, \alpha_2, \dots, \alpha_d} c'_{\alpha_1, \alpha_2, \dots, \alpha_d} x_1^{\alpha_1} x_2^{\alpha_2} \dots x_d^{\alpha_d}$.
We define $c \in \ell_2$ using the multi-index notation as
$$
c_{\alpha_1, \alpha_2, \dots, \alpha_d}
= \frac{c'_{\alpha_1, \alpha_2, \dots, \alpha_d} 2^{(\alpha_1 + \alpha_2 + \dots + \alpha_d)/2}}{\sqrt{\binom{\alpha_1 + \alpha_2 + \dots + \alpha_d}{\alpha_1, \alpha_2, \dots, \alpha_d}}}
$$
for all tuples $(\alpha_1, \alpha_2, \dots, \alpha_d)$ such that $\alpha_1 + \alpha_2 + \dots + \alpha_d \le \deg(p)$.
Otherwise, we define $c_{\alpha_1, \alpha_2, \dots, \alpha_d} = 0$. By the definition
of $\phi$, $\ip{c}{\phi(x)}_{\ell_2} = p(x)$.

Whether $\alpha_1 + \ldots + \alpha_d \leq \deg(p)$, we always have:
\begin{align*}
|c_{\alpha_1, \alpha_2, \dots, \alpha_d}|
 \le 2^{(\alpha_1 + \alpha_2 + \dots + \alpha_d)/2} |c'_{\alpha_1, \alpha_2, \dots, \alpha_d}|
 \le 2^{\deg(p)/2} |c'_{\alpha_1, \alpha_2, \dots, \alpha_d}| \; .
\end{align*}
Therefore,
\begin{align*}
\norm{c}_{\ell_2}
 \le 2^{\deg(p)/2} \sqrt{\sum_{\alpha_1, \alpha_2, \dots, \alpha_d} (c'_{\alpha_1, \alpha_2, \dots, \alpha_d})^2} 
 = 2^{\deg(p)/2} \norm{p} \; . \qquad \qedhere
\end{align*}
\end{proof}

\section{Proofs of Theorems~\ref{theorem:polynomial-approximation-1}~and~\ref{theorem:polynomial-approximation-2}}
\label{section:proof-of-polynomial-approximation}

In this section, we follow the construction of~\citet{Klivans-Servedio-2008}
(which in turn uses the constructions of~\citet{Beigel-Reingold-Spielman-1995})
to establish two polynomials of low norm, such that it takes large positive values
in
\[ \bigcap_{i=1}^m \left\{ x \in \R^d ~:~ \norm{x} \le 1, \ \ip{v_i}{x} \ge \gamma \right\} \]
and takes large negative values in
\[ \bigcup_{i=1}^m \left\{ x \in \R^d ~:~ \norm{x} \le 1, \ \ip{v_i}{x} \le -\gamma \right\} . \]
We improve the norm bound analysis of~\citet{Klivans-Servedio-2008} in two aspects:
\begin{enumerate}
  \item Our upper bounds on the norm of the polynomials do not have any dependency on the
  dimensionality $d$.
  \item We remove the requirement that the fractional part of input $x$ must be above some threshold in
  Theorem~\ref{theorem:polynomial-approximation-2}.
\end{enumerate}
A lot of the proof details are similar to those of~\citet{Klivans-Servedio-2008}; nevertheless,
we provide a self-contained full proof here.

For the proofs of the theorems we need several auxiliary results.

\begin{lemma}[Simple inequality]
\label{lemma:simple-inequality}
For any real numbers $b_1, b_2, \dots, b_n$,
$$
\left( \sum_{i=1}^n b_i \right)^2 \le n \sum_{i=1}^n b_i^2 \; .
$$
\end{lemma}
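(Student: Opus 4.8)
The plan is to recognize this as an immediate consequence of the Cauchy--Schwarz inequality. Apply Cauchy--Schwarz to the vectors $b = (b_1, b_2, \dots, b_n)$ and $\one = (1, 1, \dots, 1)$ in $\R^n$:
$$
\left( \sum_{i=1}^n b_i \right)^2 = \left( \sum_{i=1}^n b_i \cdot 1 \right)^2 \le \left( \sum_{i=1}^n b_i^2 \right) \left( \sum_{i=1}^n 1^2 \right) = n \sum_{i=1}^n b_i^2 \; .
$$
This is the entire argument.

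As an alternative self-contained route that avoids invoking Cauchy--Schwarz, I would expand the square and bound the cross terms by the AM--GM inequality $b_i b_j \le \frac{1}{2}(b_i^2 + b_j^2)$:
$$
\left( \sum_{i=1}^n b_i \right)^2 = \sum_{i=1}^n b_i^2 + \sum_{i \neq j} b_i b_j \le \sum_{i=1}^n b_i^2 + \sum_{i \neq j} \frac{b_i^2 + b_j^2}{2} = \sum_{i=1}^n b_i^2 + (n-1) \sum_{i=1}^n b_i^2 = n \sum_{i=1}^n b_i^2 \; ,
$$
where the last step uses that each index $i$ appears in exactly $2(n-1)$ of the ordered pairs $(i,j)$ with $i \neq j$, contributing $b_i^2$ a total of $n-1$ times after the factor $\tfrac12$.

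There is no real obstacle here; the only thing to be careful about is the bookkeeping in the counting argument of the second approach (equivalently, one could phrase it via convexity of $t \mapsto t^2$, i.e. Jensen's inequality applied to the uniform average $\frac1n\sum_i b_i$). I would simply present the one-line Cauchy--Schwarz proof.
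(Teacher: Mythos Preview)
Your proof is correct and matches the paper's approach exactly: the paper also derives the lemma in one line by applying the Cauchy--Schwarz inequality to the vectors $(b_1,\dots,b_n)$ and $(1,\dots,1)$. Your alternative expansion/AM--GM argument is also fine but unnecessary here.
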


\begin{proof}
The lemma follows from Cauchy-Schwartz inequality applied to
vectors $(b_1, b_2, \dots, b_n)$ and $(1,1,\dots,1)$.
\end{proof}

\begin{lemma}[Bound on binomial coefficients]
\label{lemma:binomial-bound}
For any integers $n,k$ such that $n \ge k \ge 0$,
$$
\binom{n}{k} \le (n - k + 1)^k \; .
$$
\end{lemma}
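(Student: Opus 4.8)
The plan is to expand the binomial coefficient as a product of $k$ simple ratios and bound each factor individually by $n-k+1$. Concretely, I would first rewrite
\[
\binom{n}{k} = \frac{n!}{k!\,(n-k)!} = \prod_{i=1}^{k} \frac{n-k+i}{i},
\]
which is just a regrouping: the numerator is $n(n-1)\cdots(n-k+1)$, whose $k$ factors in increasing order are exactly $n-k+i$ for $i = 1,\dots,k$, and the denominator is $k! = \prod_{i=1}^k i$. When $k=0$ this is the empty product, equal to $1 = (n+1)^0$, so that case is immediate.

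Next, for each index $i \in \{1,2,\dots,k\}$ I would show that the $i$-th factor satisfies $\frac{n-k+i}{i} \le n-k+1$. Since $i \ge 1$ and $n-k \ge 0$, every quantity in sight is nonnegative, so cross-multiplying preserves the inequality, and the claim becomes $n-k+i \le i(n-k+1) = i(n-k) + i$, i.e.\ $(i-1)(n-k) \ge 0$, which holds because $i-1 \ge 0$ and $n-k \ge 0$. Multiplying these $k$ inequalities together yields
\[
\binom{n}{k} = \prod_{i=1}^k \frac{n-k+i}{i} \le (n-k+1)^k,
\]
as claimed.

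I do not anticipate any genuine obstacle here. The only points that need a moment of care are the degenerate cases $k = 0$ and $n = k$ (where both sides equal $1$ and the product identity still holds as an empty or all-ones product), and confirming that the cross-multiplication step in the per-factor bound is valid — which it is, since all terms are nonnegative.
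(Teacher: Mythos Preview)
Your proof is correct and follows essentially the same approach as the paper: both write $\binom{n}{k}$ as a product of $k$ fractions (your $\frac{n-k+i}{i}$ for $i=1,\dots,k$ is the paper's $\frac{n-j}{k-j}$ for $j=0,\dots,k-1$ under the reindexing $i=k-j$) and then bound each factor by $n-k+1$. The only cosmetic difference is that the paper establishes the bound by showing the sequence of fractions is monotone increasing up to $\frac{n-k+1}{1}$, whereas you bound each factor directly via $(i-1)(n-k)\ge 0$; your route is slightly more direct but the underlying arithmetic is the same.
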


\begin{proof}
If $k = 0$, the inequality trivially holds. For the rest of the proof we can
assume $k \ge 1$. We write the binomial coefficient as
\begin{align*}
\binom{n}{k}
& = \frac{n(n-1)\cdots(n-k+1)}{k(k-1) \cdots 1} \\
& = \frac{n}{k} \cdot \frac{n-1}{k - 1} \cdots \frac{n-k+1}{1} \; .
\end{align*}
We claim that
$$
\frac{n}{k} \le \frac{n-1}{k - 1} \le \cdots \le \frac{n-k+1}{1}
$$
from which the lemma follows by upper bounding all the fractions by $n-k+1$.
It remains to prove that for any $j=0,1,\dots,k-1$,
$$
\frac{n - j + 1}{k - j + 1} \le \frac{n - j}{k - j} \; .
$$
Multiplying by the (positive) denominators, we get an equivalent inequality
$$
(n - j + 1)(k - j) \le (n - j)(k - j + 1) \; .
$$
We multiply out the terms and get
$$
nk - kj + k - nj + j^2 - j \le nk - nj + n - kj + j^2 - j \; .
$$
We cancel common terms and get an equivalent inequality $k \ge n$, which
holds by the assumption.
\end{proof}

\begin{lemma}[Properties of the norm of polynomials]
\label{lemma:properties-of-norm-of-polynomials}
\hspace{1cm} 
\begin{enumerate}
\item Let $p_1, p_2, \dots, p_n$ be multivariate polynomials and let $p(x) =
\prod_{j=1}^n p_j(x)$ be their product.  Then, $\norm{p}^2 \le n^{\sum_{j=1}^n
\deg(p_j)} \prod_{j=1}^n \norm{p_j}^2$.

\item Let $q$ be a multivariate polynomial of degree at most $s$ and let $p(x) =
(q(x))^n$. Then, $\norm{p}^2 \le n^{ns} \norm{q}^{2n}$.

\item Let be $p_1, p_2, \dots, p_n$ be multivariate polynomials. Then,
$\norm{\sum_{j=1}^n p_j} \le \sum_{j=1}^n \norm{p_j}$.
Consequently,
$\norm{\sum_{j=1}^n p_j}^2 \le n \sum_{j=1}^n \norm{p_j}^2$.
\end{enumerate}
\end{lemma}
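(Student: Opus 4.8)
The plan is to establish the three parts in the order (3), (1), (2): part (3) is immediate, part (1) is the substantive one, and part (2) is a one-line specialization of part (1).

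First I would dispose of part (3). Since $\norm{\cdot}$ is by definition the Euclidean norm of the vector of coefficients, and the coefficient vector of $\sum_{j=1}^n p_j$ is the sum of the coefficient vectors of the $p_j$ (embedded in a common coordinate space by padding with zeros), the bound $\norm{\sum_j p_j} \le \sum_j \norm{p_j}$ is exactly the triangle inequality in $\ell_2$. Squaring and invoking \autoref{lemma:simple-inequality} with $b_j = \norm{p_j}$ then gives $\norm{\sum_j p_j}^2 \le n \sum_j \norm{p_j}^2$.

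For part (1), write each $p_j$ in multi-index notation as $p_j(x) = \sum_\beta c^{(j)}_\beta x^\beta$. Expanding the product, the coefficient of $x^\alpha$ in $p = \prod_j p_j$ is
\[
c_\alpha \;=\; \sum_{\beta^{(1)} + \dots + \beta^{(n)} = \alpha} \; \prod_{j=1}^n c^{(j)}_{\beta^{(j)}},
\]
the sum ranging over ordered $n$-tuples of multi-indices summing to $\alpha$; let $N_\alpha$ be the number of (nonzero) terms in it. By \autoref{lemma:simple-inequality},
\[
c_\alpha^2 \;\le\; N_\alpha \sum_{\beta^{(1)} + \dots + \beta^{(n)} = \alpha} \prod_{j=1}^n \bigl(c^{(j)}_{\beta^{(j)}}\bigr)^2 .
\]
The crucial estimate, and the only place where care is needed, is a \emph{dimension-free} bound on $N_\alpha$. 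I would argue that $N_\alpha \le n^{|\alpha|}$, where $|\alpha| = \sum_i \alpha_i$: thinking of $x^\alpha$ as a product of $|\alpha|$ individual variable-occurrences (with multiplicity), each decomposition $\alpha = \sum_j \beta^{(j)}$ is determined by specifying, for every occurrence, which of the $n$ factors it came from; there are $n^{|\alpha|}$ such assignments, the map from assignments to decompositions is surjective, and hence the number of decompositions is at most $n^{|\alpha|}$. Since any valid decomposition satisfies $|\alpha| = \sum_j |\beta^{(j)}| \le \sum_j \deg(p_j)$, this yields $N_\alpha \le n^{\sum_j \deg(p_j)}$. Plugging in, summing over $\alpha$, and noting that summing over all $\alpha$ and then over all decompositions of $\alpha$ is the same as summing freely over all tuples $(\beta^{(1)}, \dots, \beta^{(n)})$, I get
\[
\norm{p}^2 = \sum_\alpha c_\alpha^2 \;\le\; n^{\sum_j \deg(p_j)} \sum_{\beta^{(1)}, \dots, \beta^{(n)}} \prod_{j=1}^n \bigl(c^{(j)}_{\beta^{(j)}}\bigr)^2 \;=\; n^{\sum_j \deg(p_j)} \prod_{j=1}^n \norm{p_j}^2 ,
\]
which is part (1). (If some $p_j$ is the zero polynomial the claim is trivial, as both sides vanish.)

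Part (2) is then immediate: apply part (1) with $p_1 = \dots = p_n = q$, so that $\sum_j \deg(p_j) = n\deg(q) \le ns$ and $\prod_j \norm{p_j}^2 = \norm{q}^{2n}$, giving $\norm{p}^2 \le n^{ns}\norm{q}^{2n}$. I expect the only genuine obstacle to be the dimension-free count of $N_\alpha$: the naive coordinate-wise count gives $\prod_{i=1}^d \binom{\alpha_i + n - 1}{n-1}$, which even after \autoref{lemma:binomial-bound} remains a product over all $d$ coordinates; the occurrence-labeling (surjection) argument above is exactly what removes the dependence on $d$ and produces the exponent $n^{\sum_j \deg(p_j)}$ claimed in the lemma.
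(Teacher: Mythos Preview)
Your proof is correct and follows the same overall architecture as the paper's: part (3) via the triangle inequality and \autoref{lemma:simple-inequality}, part (1) by expanding the product, applying \autoref{lemma:simple-inequality} to each $c_\alpha^2$, and bounding the number $N_\alpha$ of decompositions, and part (2) as a specialization of part (1). The one genuine difference is how you bound $N_\alpha$. You give a direct combinatorial argument (a surjection from $n^{|\alpha|}$ labelings onto decompositions), whereas the paper writes down the exact count $N_\alpha = \prod_{i=1}^d \binom{\alpha_i + n - 1}{\alpha_i}$ and then applies \autoref{lemma:binomial-bound} with $k=\alpha_i$ to each factor to get $\binom{\alpha_i+n-1}{\alpha_i}\le n^{\alpha_i}$, so that the product collapses to $n^{\sum_i \alpha_i}=n^{|\alpha|}$. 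Your closing remark that the coordinate-wise count ``even after \autoref{lemma:binomial-bound} remains a product over all $d$ coordinates'' is therefore not quite right: applied with $k=\alpha_i$ rather than $k=n-1$, the bound \emph{does} telescope and is dimension-free. Your surjection argument is a clean alternative that avoids invoking \autoref{lemma:binomial-bound} at all; the paper's route has the minor virtue of making the exact value of $N_\alpha$ explicit.
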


\begin{proof}
Using multi-index notation we can write any multivariate polynomial $p$ as
$$
p(x) = \sum_A c_A x^A
$$
where $A = (\alpha_1, \alpha_2, \dots, \alpha_d)$ is a multi-index (i.e. a $d$-tuple of
non-negative integers), $x^A = x_1^{\alpha_1} x_2^{\alpha_2} \dots x_d^{\alpha_d}$ is a
monomial and $c_A = c_{\alpha_1, \alpha_2, \dots, \alpha_d}$ is the corresponding real
coefficient. The sum is over a finite subset of $d$-tuples of non-negative
integers. Using this notation, the norm of a polynomial $p$ can be written as
$$
\norm{p} = \sqrt{\sum_A (c_A)^2} \; .
$$
For a multi-index $A = (\alpha_1, \alpha_2, \dots, \alpha_d)$ we define its
$1$-norm as $\norm{A}_1 = \alpha_1 + \alpha_2 + \dots + \alpha_d$.

To prove the part 1, we express $p_j$ as
$$
p_j(x) = \sum_{A_j} c^{(j)}_{A_j} x^{A_j} \; .
$$
Since $p(x) = \prod_{i=1}^n p_j(x)$, the coefficients of its expansion $p(x) =
\sum_A c_A x^A$ are
$$
c_A = \sum_{\substack{(A_1, A_2, \dots, A_n) \\ A_1 + A_2 + \dots + A_n = A}} c^{(1)}_{A_1} c^{(2)}_{A_2} \cdots c^{(n)}_{A_n} \; .
$$
Therefore,
\begin{align*}
\norm{p}^2
& = \sum_{A} (c_A)^2 \\
& = \sum_{A} \left( \sum_{\substack{(A_1, A_2, \dots, A_n) \\ A_1 + A_2 + \dots + A_n = A}} c^{(1)}_{A_1} c^{(2)}_{A_2} \cdots c^{(n)}_{A_n} \right)^2 \\
& = \sum_{A} \left( \sum_{\substack{(A_1, A_2, \dots, A_n) \\ A_1 + A_2 + \dots + A_n = A}} \prod_{j=1}^n c^{(j)}_{A_j} \right)^2
\end{align*}
and
\begin{align*}
\prod_{i=1}^n \norm{p_i}^2
& = \prod_{i=1}^n \left( \sum_{A_i} (c^{(i)}_{A_i})^2 \right) \\
& = \sum_{(A_1, A_2, \dots, A_n)} \prod_{j=1}^n (c^{(j)}_{A_j})^2 \\
& = \sum_{(A_1, A_2, \dots, A_n)} \left( \prod_{j=1}^n c^{(j)}_{A_j} \right)^2 \\
& = \sum_A \sum_{\substack{(A_1, A_2, \dots, A_n) \\ A_1 + A_2 + \dots + A_n = A}} \left( \prod_{j=1}^n c^{(j)}_{A_j} \right)^2
\end{align*}
where in both cases the outer sum is over multi-indices $A$ such that $\norm{A}_1 \le \deg(p)$.
\autoref{lemma:simple-inequality} implies that for any multi-index $A$,
\[
\left( \sum_{\substack{(A_1, A_2, \dots, A_n) \\ A_1 + A_2 + \dots + A_n = A}} \prod_{j=1}^n c^{(j)}_{A_j} \right)^2
\le M_A \sum_{\substack{(A_1, A_2, \dots, A_n) \\ A_1 + A_2 + \dots + A_n = A}} \left( \prod_{j=1}^n c^{(j)}_{A_j} \right)^2 \; .
\]
where $M_A$ is the number of $n$-tuples $(A_1, A_2, \dots, A_n)$ such that $A_1 +
A_2 + \dots + A_n = A$.

To finish the proof, it is sufficient to prove that $M_A \le n^{\deg(p)}$ for
any $A$ such that $\norm{A}_1 \le \deg(p)$. To prove this inequality, consider a
multi-index $A = (\alpha_1, \alpha_2, \dots, \alpha_d)$ and consider its $i$-th coordinate
$\alpha_i$. In order for $A_1 + A_2 + \dots + A_n = A$ to hold, the $i$-th
coordinates of $A_1, A_2, \dots, A_n$ need to sum to $\alpha_i$. There are exactly
$\binom{\alpha_i + n - 1}{\alpha_i}$ possibilities for the choice of $i$-th
coordinates of $A_1, A_2, \dots, A_n$. The total number of choices is thus
$$
M_A = \prod_{i=1}^d \binom{\alpha_i + n - 1}{\alpha_i} \; .
$$
Using \autoref{lemma:binomial-bound}, we upper bound it as
$$
M_A \le \prod_{i=1}^d n^{\alpha_i} = n^{\norm{A}_1} \le n^{\deg(p)} \; .
$$

Part 2 follows from the part 1 by setting $p_1 = p_2 = \dots p_n = q$.

The first inequality of part 3 follows from triangle inequality in Euclidean spaces, by viewing the polynomials
$p = \sum_{A} c_A x^A$ as multidimensional vectors $(c_A)$, and $\| p \| = \| (c_A) \|$.

For the second inequality, by~\autoref{lemma:simple-inequality}, we have
\[
\norm{\sum_{j=1}^n p_j}^2 = \left( \norm{\sum_{j=1}^n p_j} \right)^2 \le \left(\sum_{j=1}^n \norm{p_j} \right)^2 \le n \sum_{j=1}^n \norm{p_j}^2 \; .
\]
\end{proof}

\subsection{Proof of \autoref{theorem:polynomial-approximation-1}}
\label{section:proof-of-polynomial-approximation-1}

To construct the polynomial $p$ we use Chebyshev polynomials of the first kind.
Chebyshev polynomials of the fist kind form an infinite sequence of polynomials
$T_0(z), T_1(z), T_2(z), \dots$ of single real variable $z$. They are defined
by the recurrence
\begin{align*}
T_0(z) & = 1  \; , \\
T_1(z) & = z  \; ,\\
T_{n+1}(z) & = 2zT_n(z) - T_{n-1}(z), \quad \text{for $n \ge 1$.}
\end{align*}
Chebyshev polynomials have a lot of interesting properties.
We will need properties listed in
\autoref{proposition:properties-of-chebyshev-polynomials} below.
Interested reader can learn more about Chebyshev polynomials
from the book by \citet{Mason-Handscomb-2002}.

\begin{proposition}[Properties of Chebyshev polynomials]
\label{proposition:properties-of-chebyshev-polynomials}
Chebyshev polynomials satisfy
\begin{enumerate}
\item $\deg(T_n) = n$ for all $n \ge 0$.
\item If $n \ge 1$, the leading coefficient of $T_n(z)$ is $2^{n-1}$.
\item $T_n(\cos(\theta)) = \cos(n \theta)$ for all $\theta \in \R$ and all $n \ge 0$.
\item $T_n(\cosh(\theta)) = \cosh(n \theta)$ for all $\theta \in \R$ and all $n \ge 0$.
\item $|T_n(z)| \le 1$ for all $z \in [-1,1]$ and all $n \ge 0$.
\item $T_n(z) \ge 1 + n^2(z - 1)$ for all $z \ge 1$ and all $n \ge 0$.
\item $\norm{T_n} \le (1+\sqrt{2})^n$ for all $n \ge 0$
\end{enumerate}
\end{proposition}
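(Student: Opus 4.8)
The plan is to establish the seven items essentially in the order listed, each by a short induction on $n$ driven by the defining recurrence $T_{n+1}(z) = 2zT_n(z) - T_{n-1}(z)$, deriving parts 5 and 6 as consequences of parts 3 and 4 rather than inducting afresh.

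First I would prove parts 1 and 2 together. The base cases $T_0 = 1$ and $T_1(z) = z$ are immediate, and in the inductive step the term $2zT_n(z)$ has degree $\deg(T_n) + 1 = n+1$ with leading coefficient $2 \cdot 2^{n-1} = 2^n$, whereas $T_{n-1}$ has degree $n-1 < n+1$; hence subtracting $T_{n-1}$ affects neither the degree nor the leading coefficient of $T_{n+1}$. Parts 3 and 4 then follow by the same kind of induction once one invokes the product-to-sum identities $2\cos a \cos b = \cos(a+b) + \cos(a-b)$ and $2\cosh a \cosh b = \cosh(a+b) + \cosh(a-b)$: assuming $T_n(\cos\theta) = \cos(n\theta)$ and $T_{n-1}(\cos\theta) = \cos((n-1)\theta)$ gives $T_{n+1}(\cos\theta) = 2\cos\theta\cos(n\theta) - \cos((n-1)\theta) = \cos((n+1)\theta)$, and the hyperbolic case is word-for-word the same. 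Part 5 is then immediate, since every $z \in [-1,1]$ equals $\cos\theta$ for some real $\theta$, so $|T_n(z)| = |\cos(n\theta)| \le 1$.

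For part 6 I would use part 4: given $z \ge 1$, choose $\theta \ge 0$ with $z = \cosh\theta$, so that $T_n(z) = \cosh(n\theta)$, and it remains to show $\cosh(n\theta) \ge 1 + n^2(\cosh\theta - 1)$. Writing both sides as power series in $\theta$, this reads $\sum_{k \ge 1} \frac{(n\theta)^{2k}}{(2k)!} \ge \sum_{k \ge 1} \frac{n^2 \theta^{2k}}{(2k)!}$, which holds term by term because $n^{2k} \ge n^2$ for every $k \ge 1$ (and the $n = 0$ case is trivial).

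Finally, for part 7 I would induct using the observation that multiplication by $z$ merely shifts the coefficient vector of a single-variable polynomial and therefore preserves its norm, so $\norm{2zT_n} = 2\norm{T_n}$; combined with the triangle inequality for the polynomial norm (part 3 of \autoref{lemma:properties-of-norm-of-polynomials}) this gives $\norm{T_{n+1}} \le 2\norm{T_n} + \norm{T_{n-1}}$. Since $1 + \sqrt{2}$ is a root of $x^2 = 2x + 1$, the sequence $a_n = (1+\sqrt{2})^n$ obeys $a_{n+1} = 2a_n + a_{n-1}$ with $a_0 = 1 = \norm{T_0}$ and $a_1 = 1 + \sqrt{2} \ge 1 = \norm{T_1}$, so induction yields $\norm{T_n} \le a_n = (1+\sqrt{2})^n$. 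Everything here is routine; the only mildly non-obvious points are recognizing in part 7 that $(1+\sqrt{2})^n$ is the exact solution of the governing two-term recurrence, and the termwise power-series comparison underlying part 6, so those are the steps I would be most careful about.
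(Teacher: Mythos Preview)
Your proposal is correct and matches the paper's proof almost exactly for parts 1--5 and 7: both argue by induction on the recurrence, deduce 5 from 3, and for 7 both obtain $\norm{T_{n+1}} \le 2\norm{T_n} + \norm{T_{n-1}}$ and close the induction using $(1+\sqrt{2})^2 = 2(1+\sqrt{2}) + 1$.

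The one genuine difference is part 6. The paper also reduces to $\cosh(n\theta) \ge 1 + n^2(\cosh\theta - 1)$, but then differentiates and proves the stronger inequality $\sinh(n\theta) \ge n\sinh\theta$ for $\theta \ge 0$ via the addition formula $\sinh(\alpha+\beta) \ge \sinh\alpha + \sinh\beta$. Your termwise power-series comparison $n^{2k} \ge n^2$ for $k \ge 1$ is a cleaner, purely algebraic alternative that avoids calculus altogether; the paper's route yields a sharper auxiliary fact about $\sinh$ but takes an extra step to get there.
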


\begin{proof}[Proof of \autoref{proposition:properties-of-chebyshev-polynomials}]
The first two properties can be easily proven by induction on $n$ using the recurrence.

We prove the third property by induction on $n$. Indeed, by definition
$$
T_0(\cos(\theta)) = 1 = \cos(0 \theta) \quad \text{and} \quad T_1(\cos(\theta)) = \cos(\theta) \; .
$$
For $n \ge 1$, we have
\begin{align*}
T_{n+1}(\cos(\theta))
& = 2 \cos(\theta) T_n(\cos(\theta)) - T_{n-1}(\cos(\theta)) \\
& = 2 \cos(\theta) \cos(n \theta) - \cos((n-1)\theta)) \; ,
\end{align*}
where the last step follow by induction hypothesis.
It remains to show that the last expression equals $\cos((n+1)\theta)$.
This can be derived from the trigonometric formula
$$
\cos(\alpha \pm \beta) = \cos(\alpha) \cos(\beta) \mp \sin(\alpha) \sin(\beta) \; .
$$
By substituting $\alpha = n \theta$ and $\beta = \theta$, we get two equations
\begin{align*}
\cos((n+1) \theta) & = \cos(n \theta) \cos(\theta) - \sin(n \theta) \sin(\theta) \; , \\
\cos((n-1) \theta) & = \cos(n \theta) \cos(\theta) + \sin(n \theta) \sin(\theta) \; .
\end{align*}
Summing them yields
$$
\cos((n+1)\theta) + \cos((n-1) \theta) = 2 \cos(n \theta) \cos(\theta)
$$
which finishes the proof.

The fourth property has the similar proof as the third property. It suffices
to replace $\cos$ and $\sin$ with $\cosh$ and $\sinh$ respectively.

The fifth property follows from the third property. Indeed, for any $z \in [-1,1]$
there exists $\theta \in \R$ such that $\cos \theta = z$. Thus, $|T_n(z)| =
|T_n(\cos(\theta))| = |\cos(n\theta)| \le 1$.

The sixth property is equivalent to
$$
T_n(\cosh(\theta)) \ge 1 + n^2 (\cosh(\theta) - 1) \qquad \text{for all $\theta \ge 0$,}
$$
since $\cosh(\theta) = \frac{e^{\theta} + e^{-\theta}}{2}$ is an even continuous
function that maps $\R$ onto $[1,+\infty)$, is strictly decreasing on
$(-\infty,0]$, and is strictly increasing on $[0,\infty)$. Using the fourth
property the last inequality is equivalent to
$$
\cosh(n \theta) \ge 1 + n^2 (\cosh(\theta) - 1) \qquad \text{for all $\theta \ge 0$.}
$$
For $\theta = 0$, both sides are equal to $1$. Thus, it is sufficient to prove
that the derivative of the left hand side is greater or equal to the derivative
of the right hand side. Recalling that $[\cosh(\theta)]' = \sinh(\theta)$, this
means that we need to show that
$$
\sinh(n \theta) \ge n \sinh(\theta) \qquad \text{for all $\theta \ge 0$.}
$$
To prove this inequality we use the summation formula
$$
\sinh(\alpha + \beta) = \sinh(\alpha) \cosh(\beta) + \sinh(\beta) \cosh(\beta) \; .
$$
If $\alpha, \beta$ are non-negative then $\sinh(\alpha), \sinh(\beta)$ are
non-negative and $\cosh(\alpha), \cosh(\beta) \ge 1$. Hence,
$$
\sinh(\alpha + \beta) \ge \sinh(\alpha) + \sinh(\beta) \qquad \text{for any $\alpha, \beta \ge 0$.}
$$
This implies that (using induction on $n$) that $\sinh(n \theta) \ge n
\sinh(\theta)$ for all $\theta \ge 0$.

We verify the seventh property by induction on $n$.
For $n=0$ and $n=1$ the inequality trivially holds, since $\norm{T_0} = \norm{T_1} = 1$.
For $n \ge 1$, since $T_{n+1}(z) = 2zT_n(z) - T_{n-1}(z)$,
\begin{align*}
\norm{T_{n+1}}
& \le 2 \norm{T_n} + \norm{T_{n-1}} \\
& \le 2 (1 + \sqrt{2})^n + (1 + \sqrt{2})^{n-1} \\
& = (1 + \sqrt{2})^{n-1} (2 (1 + \sqrt{2}) + 1) \\
& = (1 + \sqrt{2})^{n-1} (3 + 2\sqrt{2}) \\
& = (1 + \sqrt{2})^{n-1} (1 + \sqrt{2})^2 \\
& = (1 + \sqrt{2})^{n+1} \; .
\qedhere
\end{align*}
\end{proof}

We are now ready to prove~\autoref{theorem:polynomial-approximation-1}.
Let $r = \left\lceil \log_2(2m) \right\rceil$ and $s = \left\lceil \sqrt{\frac{1}{\gamma}} \right\rceil$.
We define the polynomial $p:\R^d \to \R$ as
$$
p(x) = m + \frac{1}{2} - \sum_{i=1}^m \left( T_s(1 - \ip{v_i}{x}) \right)^r \; .
$$
It remains to show that $p$ has properties 1--5.

To verify the first property notice that if $x \in \R^d$ satisfies $\norm{x} \le
1$ and $\ip{v_i}{x} \ge \gamma$ then since $\norm{v_i} \le 1$ we have
$\ip{v_i}{x} \in [0,1]$. Thus, $T_s(1 - \ip{v_i}{x})$ and $\left( T_s(1 -
\ip{v_i}{x}) \right)^r$ lie in the interval $[-1,1]$. Therefore,
$$
p(x) \ge m + \frac{1}{2} - m \ge \frac{1}{2} \; .
$$

To verify the second property consider any $x \in \bigcup_{i=1}^m \left\{ x \in \R^d
~:~ \norm{x} \le 1, \ \ip{v_i}{x} \le - \gamma \right\}$. Clearly, $\norm{x} \le 1$
and there exists at least one $i \in \{1,2,\dots,m\}$ such that $\ip{v_i}{x} \le
- \gamma$. Therefore, $1 - \ip{v_i}{x} \ge 1 + \gamma$ and~\autoref{proposition:properties-of-chebyshev-polynomials} (part 6)
imply that
$$
T_s(1 - \ip{v_i}{x}) \ge 1 + s^2 \gamma \ge 2
$$
and thus
$$
\left( T_s(1 - \ip{v_i}{x}) \right)^r \ge 2^r \ge 2m \; .
$$
On the other hand for any $j \in \{1,2,\dots,m\}$, we have $\ip{v_j}{x} \in
[-1,1]$ and thus $1 - \ip{v_j}{x}$ lies in the interval $[0,2]$. According to
\autoref{proposition:properties-of-chebyshev-polynomials} (parts 5 and 6), $T_s(1 - \ip{v_j}{x})
\ge -1$. Therefore,
\begin{align*}
p(x) & = m + \frac{1}{2} - \left( T_s(1 - \ip{v_i}{x}) \right)^r - \sum_{\substack{j ~:~  1 \le j \le m \\ j \neq i}} \left( T_s(1 - \ip{v_j}{x}) \right)^r \\
& \le m + \frac{1}{2} - 2m + (m - 1) \le - \frac{1}{2} \; .
\end{align*}

The third property follows from the observation that the degree of $p$
is the same as the degree of any one of the terms
$\left( T_s(1 - \ip{v_i}{x}) \right)^r$ which is $r \cdot s$.

To prove the fourth property, we need to upper bound the norm of $p$.
Let $f_i(x) = 1 - \ip{v_i}{x}$, let $g_i(x) = T_s(1 - \ip{v_i}{x})$
and let $h_i(x) = (T_s(1 - \ip{v_i}{x}))^r$. We have
$$
\norm{f_i}^2 = 1 + \norm{v_i}^2 \le 1 + 1 = 2 \; .
$$
Let $T_s(z) = \sum_{j=0}^s c_j z^j$ be the expansion of $s$-th Chebyshev polynomial.
Then,
\begingroup
\allowdisplaybreaks
\begin{align*}
\norm{g_i}^2
& = \norm{ \sum_{j=0}^s c_j (f_i)^j }^2 \\
& \le (s + 1) \sum_{j=0}^s \norm{c_j (f_i)^j}^2 \quad \text{(by~part 3 of \autoref{lemma:properties-of-norm-of-polynomials})} \\
& = (s + 1) \sum_{j=0}^s (c_j)^2 \norm{(f_i)^j}^2 \\
& \le (s + 1) \sum_{j=0}^s (c_j)^2 j^j \norm{f_i}^{2j} \quad \text{(by~part 2 of \autoref{lemma:properties-of-norm-of-polynomials})} \\
& \le (s + 1) \sum_{j=0}^s (c_j)^2 j^j 2^{2j} \\
& \le (s + 1) s^s 2^{2s} \sum_{j=0}^s (c_j)^2 \\
& = (s + 1) s^s 2^{2s} \norm{T_s}^2 \\
& = (s + 1) s^s 2^{2s} (1 + \sqrt{2})^{2s} \quad \text{(by~part 7 of \autoref{proposition:properties-of-chebyshev-polynomials})} \\
& = (s + 1) \left(4(1+\sqrt{2})^2 s \right)^s \\
& \le \left(8(1+\sqrt{2})^2 s \right)^s \\
& \le \left(47 s \right)^s \; .
\end{align*}
\endgroup
where we used that $s+1 \le 2^s$ for any non-negative integer $s$.
Finally,
\begin{align*}
\norm{p}
& \le m + \frac{1}{2} + \sum_{i=1}^m \norm{(g_i)^r} \\
& = m + \frac{1}{2} + \sum_{i=1}^m \sqrt{\norm{(g_i)^r}^2} \\
& \le m + \frac{1}{2} + \sum_{i=1}^m \sqrt{r^{rs} \norm{g_i}^{2r}} \\
& \le m + \frac{1}{2} + m r^{rs/2} \left(47 s \right)^{rs/2} \\
& = m + \frac{1}{2} + m \left(47 rs \right)^{rs/2} \; .
\end{align*}
We can further upper bound the last expression by using that $m \le \frac{1}{2} 2^r$.
Since $r,s \ge 1$,
\begin{align*}
\norm{p}
& \le m + \frac{1}{2} + m \left(47 rs \right)^{rs/2} \\
& \le \frac{1}{2} 2^r + \frac{1}{2} + \frac{1}{2} 2^r \left(47 rs \right)^{rs/2} \\
& \le 2^r + \frac{1}{2} 2^r \left(47 rs \right)^{rs/2} \\
& = 2^r \left(1 + \frac{1}{2} \left(47 rs \right)^{rs/2} \right) \\
& = 2^r \left(47 rs \right)^{rs/2} \\
& \le 4^{rs/2} \left(47 rs \right)^{rs/2} \\
& \le \left(188 rs \right)^{rs/2} \; .
\end{align*}
Substituting for $r$ and $s$ finishes the proof.

\subsection{Proof of \autoref{theorem:polynomial-approximation-2}}
\label{section:proof-of-polynomial-approximation-2}

We define several univariate polynomials
\begin{align*}
P_n(z) & = (z - 1) \prod_{i=1}^n (z - 2^i)^2, \quad \text{for $n \ge 0$,} \\
A_{n,k}(z) & = (P_n(z))^k - (P_n(-z))^k,  \quad \text{for $n,k \ge 0$,} \\
B_{n,k}(z) & = - (P_n(z))^k - (P_n(-z))^k,  \quad \text{for $n,k \ge 0$.}
\end{align*}
We define the polynomial $q:\R^d \to \R$ as
$$
q(x) = \left[ \sum_{i=1}^m A_{s,r}\left( \frac{\ip{v_i}{x}}{\gamma} \right) \prod_{\substack{j ~:~ 1 \le j \le m \\ j \neq i}} B_{s,r} \left( \frac{\ip{v_j}{x}}{\gamma} \right) \right]
- \left(m - \frac{1}{2} \right) \prod_{j=1}^m B_{s,r} \left( \frac{\ip{v_j}{x}}{\gamma} \right) \; .
$$
Finally, we define $p(x) = 2^{-s(s+1)rm+1} q(x)$. We are going to show
that this polynomial $p$ satisfies the required properties.

For convenience we define univariate rational function
\begin{align*}
S_{n,k}(z) & = \frac{A_{n,k}(z)}{B_{n,k}(z)}, \quad \text{for $n,k \ge 0$,}
\end{align*}
and a multivariate rational function
$$
Q(x) = \left( \sum_{i=1}^m S_{s,r}\left( \frac{\ip{v_i}{x}}{\gamma} \right) \right) - \left(m - \frac{1}{2} \right) \; .
$$
It is easy to verify that
$$
q(x) = Q(x) \prod_{j=1}^m B_{s,r} \left( \frac{\ip{v_j}{x}}{\gamma} \right) \; .
$$

\begin{lemma}[Properties of $P_n$]
\label{lemma:properties-of-p-n}
\hspace{1cm} 
\begin{enumerate}
\item If $z \in [0,1]$ then $P_n(-z) \le P_n(z) \le 0$.
\item If $z \in [1,2^n]$ then $0 \le 4P_n(z) \le -P_n(-z)$.
\item If $z \ge 0$ then $-P_n(-z) \ge 2^{n(n+1)}$.
\end{enumerate}
\end{lemma}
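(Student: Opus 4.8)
The plan is to work entirely from the explicit product formula, using the two elementary rewritings $P_n(z)=(z-1)\prod_{i=1}^n(z-2^i)^2$ and, since $(-z-2^i)^2=(z+2^i)^2$, $-P_n(-z)=(z+1)\prod_{i=1}^n(z+2^i)^2$. With these in hand, each of the three parts reduces to comparing products of nonnegative real numbers factor by factor; the arguments are short, and the only place that needs a little thought is locating a useful index in part~2.

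\emph{Part 1.} Fix $z\in[0,1]$. Then $z-1\le 0$ and every $(z-2^i)^2\ge 0$, so $P_n(z)\le 0$. Since $P_n(-z)=-(z+1)\prod_i(z+2^i)^2$ is also $\le 0$, the inequality $P_n(-z)\le P_n(z)$ is equivalent to $(z+1)\prod_i(z+2^i)^2\ge(1-z)\prod_i(2^i-z)^2$. I would verify this termwise: $0\le 1-z\le 1+z$ because $z\in[0,1]$, and $0\le 2^i-z\le 2^i+z$ because $z\ge 0$ and $2^i\ge 2>z$; a product in which each nonnegative factor dominates the corresponding factor dominates the whole product.

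\emph{Part 2.} Fix $z\in[1,2^n]$. Then $z-1\ge 0$, so $P_n(z)\ge 0$. For the bound $4P_n(z)\le -P_n(-z)$, first dispose of the degenerate cases: if $z=1$, or $z=2^j$ for some $j\le n$, then the factor $z-1$ or $(z-2^j)^2$ vanishes, so $P_n(z)=0\le -P_n(-z)$ and we are done. Otherwise $z$ lies in a unique open interval $(2^{j-1},2^j)$ with $1\le j\le n$ (in particular $n\ge 1$). Since $z+1\ge z-1\ge 0$, it suffices to prove $\prod_i(z+2^i)^2\ge 4\prod_i(z-2^i)^2$. Every factor satisfies $z+2^i\ge|z-2^i|$ by the triangle inequality, and for the index $j$ I would bound $\frac{z+2^j}{2^j-z}>3$, because $z>2^{j-1}$ gives numerator $z+2^j>3\cdot 2^{j-1}$ and denominator $2^j-z<2^{j-1}$. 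Squaring, this one factor already contributes more than $9\ge 4$, while the rest are $\ge 1$. This index-selection step --- choosing $j$ with $z\in(2^{j-1},2^j)$ and correctly peeling off the boundary cases --- is the part I expect to require the most care, though it is still elementary.

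\emph{Part 3.} Fix $z\ge 0$. Then $z+1\ge 1$ and $(z+2^i)^2\ge(2^i)^2$, hence $-P_n(-z)=(z+1)\prod_{i=1}^n(z+2^i)^2\ge\prod_{i=1}^n 4^i=4^{\,n(n+1)/2}=2^{\,n(n+1)}$, using $\sum_{i=1}^n i=n(n+1)/2$. The case $n=0$ (empty product, $P_0(z)=z-1$) is consistent with all three parts without change.
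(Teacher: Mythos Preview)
Your proof is correct and follows essentially the same approach as the paper: factorwise comparison of the products $P_n(z)=(z-1)\prod_i(z-2^i)^2$ and $-P_n(-z)=(z+1)\prod_i(z+2^i)^2$. The only cosmetic difference is in Part~2, where the paper picks $j$ with $2^{j-1}\le z\le 2^j$ (closed interval) and uses the slightly weaker but boundary-inclusive bound $(z+2^j)^2\ge(2^j)^2\ge 4(z-2^j)^2$, thereby avoiding your separate treatment of the degenerate points $z\in\{1,2,\dots,2^n\}$.
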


\begin{proof}
To prove the first part, note that $P_n(z)$ and $P_n(-z)$ are non-positive for
$z \in [0,1]$. We can write $\frac{P_n(z)}{P_n(-z)}$ as a product of $n+1$
non-negative fractions
$$
\frac{P_n(z)}{P_n(-z)} = \frac{1-z}{1+z} \prod_{i=1}^n \frac{(z+2^i)^2}{(z-2^i)^2} \; .
$$
The first part follows from the observation that each fraction is upper bounded
by $1$.

To prove the second part, notice that $P_n(z)$ is non-negative and $P_n(-z)$ is
non-positive for any $z \in [1,2^n]$. Now, fix $z \in [1,2^n]$ and let $j \in
\{1,2,\dots,n\}$ be such that $2^{j-1} \le z \le 2^j$. This implies that
$(z+2^{j})^2 \ge (2^j)^2 \ge 4 (z - 2^j)^2$. We can write
$\frac{P_n(z)}{-P_n(-z)}$ as a product of $n+1$ non-negative fractions
$$
\frac{P_n(z)}{-P_n(-z)}
= \frac{z-1}{z+1} \cdot \frac{(z-2^j)^2}{(z+2^j)^2} \prod_{\substack{i ~:~ 1 \le i \le n \\ i \neq j}} \frac{(z-2^i)^2}{(z+2^i)^2} \; .
$$
The second part follows from the observation that the second fraction is upper
bounded by $1/4$ and all other fractions are upper bounded by $1$.

The third part follows from
$$
-P_n(-z) = (1+z) \prod_{i=1}^n (z+2^i)^2 \ge \prod_{i=1}^n 2^{2i} = 2^{n(n+1)} \; .
$$
\end{proof}

\begin{lemma}[Properties of $S_{n,r}$ and $B_{n,r}$]
\label{lemma:properties-of-s-n-r}
Let $n,m$ be non-negative integers.
Let $r = 2 \left\lceil \frac{1}{4} \log_2(4m + 1) \right\rceil + 1$. Then,
\begin{enumerate}
\item If $z \in [1,2^n]$ then $S_{n,r}(z) \in [1,1+\frac{1}{2m}]$.
\item If $z \in [-2^n, -1]$ then $S_{n,r}(z) \in [-1-\frac{1}{2m}, -1]$.
\item If $z \in [-1,1]$ then $|S_{n,r}(z)| \le 1$.
\item If $z \in [-2^n,2^n]$ then $B_{n,r}(z) \ge \left(1 - \frac{1}{4m+1} \right) 2^{n(n+1)r}$.
\end{enumerate}
\end{lemma}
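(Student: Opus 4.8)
The plan is to reduce everything to the single ratio $t(z) = P_n(z)/(-P_n(-z))$ and to exploit that $r$ is odd. First I would record the two structural facts that make the analysis clean. Since $r$ is odd, $(P_n(-z))^r = -(-P_n(-z))^r$, so writing $u = u(z) = P_n(z)$ and $v = v(z) = -P_n(-z)$ we get $A_{n,r}(z) = u^r + v^r$ and $B_{n,r}(z) = v^r - u^r$, hence $S_{n,r}(z) = (u^r+v^r)/(v^r-u^r)$. Moreover $u(-z) = P_n(-z) = -v(z)$ and $v(-z) = -P_n(z) = -u(z)$, so $B_{n,r}$ is an even function of $z$ while $S_{n,r}$ is an odd function of $z$. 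This symmetry lets me obtain part 2 from part 1 by negation, handle part 3 by only looking at $z \ge 0$, and reduce part 4 to $z \in [0,2^n]$.

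Next I would split $[0,2^n]$ into $[0,1]$ and $[1,2^n]$ and invoke Lemma~\ref{lemma:properties-of-p-n}. On $[1,2^n]$, parts 2 and 3 of that lemma give $0 \le u$, $4u \le v$, and $v \ge 2^{n(n+1)} > 0$; in particular $t = u/v \in [0,\tfrac14]$ and $v^r - u^r > 0$, so $S_{n,r}(z) = (1+t^r)/(1-t^r) = 1 + \tfrac{2 t^r}{1 - t^r} \ge 1$. The crux is the elementary observation that the prescribed $r = 2\lceil \tfrac14 \log_2(4m+1)\rceil + 1$ satisfies $2^{2r} \ge 4m+1$, hence $t^r \le 4^{-r} = 2^{-2r} \le \tfrac{1}{4m+1}$; substituting this gives $\tfrac{2 t^r}{1 - t^r} \le \tfrac{2/(4m+1)}{4m/(4m+1)} = \tfrac{1}{2m}$, which is exactly part 1. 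On $[0,1]$, part 1 of Lemma~\ref{lemma:properties-of-p-n} gives $u = -a$, $v = b$ with $0 \le a \le b$, so $S_{n,r}(z) = (b^r - a^r)/(b^r + a^r) \in [0,1]$; this is the $z \ge 0$ half of part 3, and oddness of $S_{n,r}$ extends it to $|S_{n,r}(z)| \le 1$ on $[-1,1]$, while negating part 1 via oddness yields part 2 on $[-2^n,-1]$.

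For part 4 I would use evenness of $B_{n,r}$ to restrict to $z \in [0,2^n]$. On $[0,1]$, with $u = -a$, $v = b$, $0 \le a \le b$, we get $B_{n,r}(z) = b^r + a^r \ge b^r \ge 2^{n(n+1)r}$ by part 3 of Lemma~\ref{lemma:properties-of-p-n}, which is stronger than needed. On $[1,2^n]$, $B_{n,r}(z) = v^r - u^r$ with $0 \le u \le v/4$ and $v \ge 2^{n(n+1)}$, so $u^r \le v^r 4^{-r} \le v^r/(4m+1)$ and therefore $B_{n,r}(z) \ge v^r\bigl(1 - \tfrac{1}{4m+1}\bigr) \ge \bigl(1 - \tfrac{1}{4m+1}\bigr)2^{n(n+1)r}$.

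I do not expect a genuine obstacle: every estimate bottoms out in Lemma~\ref{lemma:properties-of-p-n} together with the single inequality $2^{2r} \ge 4m+1$. The only thing requiring care is sign bookkeeping — since $r$ is odd, $(P_n(z))^r$ inherits the sign of $P_n(z)$, which is nonpositive on $[0,1]$ but nonnegative on $[1,2^n]$ — so the ratio $t$ must be introduced and bounded piecewise, and the denominators $v^r - u^r$ and $b^r + a^r$ must be checked strictly positive before dividing. Keeping the even/odd symmetry of $B_{n,r}$ and $S_{n,r}$ explicit is what prevents this from splintering into many separate cases.
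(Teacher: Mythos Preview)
Your proposal is correct and follows essentially the same approach as the paper: both exploit that $r$ is odd to write $S_{n,r}(z) = (1+c)/(1-c)$ with $c = \bigl(P_n(z)/(-P_n(-z))\bigr)^r$ (your $t^r$), use the odd/even symmetry of $S_{n,r}$ and $B_{n,r}$ to reduce to $z\ge 0$, and then split into $[0,1]$ and $[1,2^n]$ invoking Lemma~\ref{lemma:properties-of-p-n} together with the inequality $4^r \ge 4m+1$. Your sign bookkeeping and positivity checks on the denominators are exactly the care points the paper also handles implicitly.
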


\begin{proof}
Note that $B_{n,r}(z)$ is an even function and $A_{n,r}(z)$ is an odd function.
Therefore, $S_{n,r}(z)$ is odd. Also notice that $r$ is an odd integer.

\begin{enumerate}
\item Observe that $S_{n,r}(z)$ can be written as
$$
S_{n,r}(z) = \frac{\displaystyle 1 + \left( - \frac{P_n(z)}{P_n(-z)}\right)^r}{\displaystyle 1 - \left( - \frac{P_n(z)}{P_n(-z)}\right)^r} = \frac{1 + c}{1 - c}
$$
where $c = \left( - \frac{P_n(z)}{P_n(-z)}\right)^r$. Since $z \in [1,2^n]$, by
part 2 of~\autoref{lemma:properties-of-p-n}, $c \in [0,\frac{1}{4^r}]$. Since
$r \ge \frac{1}{2} \log_2(4m+1)$, this means that $c \in [0,\frac{1}{4m+1}]$. Thus,
$S_{n,r}(z) = \frac{1+c}{1-c} \in [1,1 + \frac{1}{2m}]$.

\item Since $S_{n,r}(z)$ is odd, the statement follows from part 1.

\item Recall that $S_{n,r}(z)$ can be written as
$$
S_{n,r}(z) = \frac{1 + c}{1 - c}
$$
where $c = \left( - \frac{P_n(z)}{P_n(-z)}\right)^r$. If $z \in [0,1]$, by part
1 of~\autoref{lemma:properties-of-p-n} and the fact that $r$ is odd, $c
\in [-1,0]$, and thus, $S_{n,r}(z) = \frac{1+c}{1-c} \in [0,1]$. Since
$S_{n,r}(z)$ is odd, for $z \in [-1,0]$, $S_{n,r}(z) \in [-1,0]$.

\item Since $B_{n,r}(z)$ is even, we can without loss generality assume that $z \ge
0$. We consider two cases.

Case $z \in [0,1]$. Since $r$ is odd and $P_n(z)$ is non-positive,
\begin{align*}
B_{n,r}(z)
& = - (P_n(z))^r + \left(- P_{n}(-z)\right)^r \\
& \ge \left(- P_{n}(-z)\right)^r \ge 2^{n(n+1)r}  \\
& \ge 2^{n(n+1)r} \left( 1 - \frac{1}{4m+1} \right) \; .
\end{align*}
where the second last inequality follows from part 3 of \autoref{lemma:properties-of-p-n}.

Case $z \in [1,2^n]$. Since $r$ is odd,
\begin{align*}
B_{n,r}(z)
& = \left(- P_{n}(-z)\right)^r \left(1 - \left( - \frac{P_n(z)}{P_n(-z)}\right)^r \right) \\
& = \left(- P_{n}(-z)\right)^r (1 - c)
\end{align*}
where $c = \left( - \frac{P_n(z)}{P_n(-z)}\right)^r$. Since $z \in [1,2^n]$, by
part 2 of~\autoref{lemma:properties-of-p-n}, $c \in [0,\frac{1}{4^r}]$. By
the definition of $r$ that means that $c \in [0,\frac{1}{4m+1}]$. Thus,
\begin{align*}
B_{n,r}(z)
& \ge \left(- P_{n}(-z)\right)^r \left( 1 - \frac{1}{4m+1} \right) \\
& \ge 2^{n(n+1)r} \left( 1 - \frac{1}{4m+1} \right) \; .
\end{align*}
where the last inequality follows from part 3 of \autoref{lemma:properties-of-p-n}.
\end{enumerate}
\end{proof}

\begin{lemma}[Properties of $Q(x)$]
\label{lemma:properties-of-q}
The rational function $Q(x)$ satisfies
\begin{enumerate}
\item $Q(x) \ge \frac{1}{2}$ for all $\displaystyle x \in \bigcap_{i=1}^m \left\{ x \in \R^d ~:~ \norm{x} \le 1, \ \ip{v_i}{x} \ge \gamma \right\}$,
\item $Q(x) \le -\frac{1}{2}$ for all $\displaystyle x \in \bigcup_{i=1}^m \left\{ x \in \R^d ~:~ \norm{x} \le 1, \ \ip{v_i}{x} \le - \gamma \right\}$.
\end{enumerate}
\end{lemma}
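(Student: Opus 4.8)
The plan is to substitute the bounds on $S_{s,r}$ from \autoref{lemma:properties-of-s-n-r} directly into the definition
\[
Q(x) = \left( \sum_{i=1}^m S_{s,r}\!\left( \frac{\ip{v_i}{x}}{\gamma} \right) \right) - \left(m - \frac{1}{2}\right),
\]
after first noting that for every $i$ the argument $z_i := \ip{v_i}{x}/\gamma$ lies in the interval $[-2^s, 2^s]$ on which \autoref{lemma:properties-of-s-n-r} is stated. This containment follows from Cauchy--Schwarz: since $\norm{v_i} \le 1$ and $\norm{x} \le 1$ we get $|\ip{v_i}{x}| \le 1$, hence $|z_i| \le 1/\gamma \le 2^s$ because $s = \lceil \log_2(1/\gamma) \rceil$.

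For the first part, suppose $x \in \bigcap_{i=1}^m \cbr{x \in \R^d ~:~ \norm{x}\le 1, \ip{v_i}{x} \ge \gamma}$. Then $z_i \in [1, 2^s]$ for every $i$, so part~1 of \autoref{lemma:properties-of-s-n-r} gives $S_{s,r}(z_i) \ge 1$; summing over $i$ yields $\sum_{i=1}^m S_{s,r}(z_i) \ge m$, and therefore $Q(x) \ge m - (m - \tfrac12) = \tfrac12$.

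For the second part, suppose $x \in \bigcup_{i=1}^m \cbr{x \in \R^d ~:~ \norm{x}\le 1, \ip{v_i}{x} \le -\gamma}$, and fix an index $i_0$ with $\ip{v_{i_0}}{x} \le -\gamma$, so that $z_{i_0} \in [-2^s, -1]$; part~2 of \autoref{lemma:properties-of-s-n-r} then gives $S_{s,r}(z_{i_0}) \le -1$. For every other index $j$, I split $z_j \in [-2^s, 2^s]$ into the three subintervals $[1, 2^s]$, $[-1,1]$, $[-2^s, -1]$ and invoke parts 1, 3, and 2 of \autoref{lemma:properties-of-s-n-r} respectively; in each case $S_{s,r}(z_j) \le 1 + \tfrac{1}{2m}$. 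Summing over all $i$ gives $\sum_{i=1}^m S_{s,r}(z_i) \le -1 + (m-1)\bigl(1 + \tfrac{1}{2m}\bigr)$, hence
\[
Q(x) \le -1 + (m-1)\bigl(1 + \tfrac{1}{2m}\bigr) - \bigl(m - \tfrac12\bigr) = -1 - \tfrac{1}{2m} \le -\tfrac12.
\]

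The whole argument is essentially bookkeeping once \autoref{lemma:properties-of-s-n-r} is in hand, so I do not expect a real obstacle; the only point that needs care is keeping every argument inside $[-2^s, 2^s]$ so that \autoref{lemma:properties-of-s-n-r} applies, which is exactly where the choice $s = \lceil \log_2(1/\gamma) \rceil$ enters, together with the uniform bound $S_{s,r}(z_j) \le 1 + \tfrac{1}{2m}$ that holds on the \emph{entire} range and lets the $m-1$ ``wrong'' terms be controlled regardless of the sign of $\ip{v_j}{x}$.
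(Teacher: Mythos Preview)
Your proof is correct and follows essentially the same approach as the paper: both bound each argument $z_i=\ip{v_i}{x}/\gamma$ inside $[-2^s,2^s]$, then invoke parts 1--3 of \autoref{lemma:properties-of-s-n-r} in exactly the same way, with the same final arithmetic. Your write-up is in fact slightly more explicit about why $|z_i|\le 2^s$ and about the value $-1-\tfrac{1}{2m}$, but there is no substantive difference.
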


\begin{proof}
To prove part 1, consider any $x \in \bigcap_{i=1}^m \left\{ x \in \R^d ~:~ \norm{x} \le 1, \
\ip{v_i}{x} \ge \gamma \right\}$. Then, $\frac{\ip{v_i}{x}}{\gamma} \in [1,
\frac{1}{\gamma}]$. By part 1 of \autoref{lemma:properties-of-s-n-r},
$S_{s,r}\left(\frac{\ip{v_i}{x}}{\gamma}\right) \in [1, 1 + \frac{1}{2m}]$ and
in particular $S_{s,r}\left(\frac{\ip{v_i}{x}}{\gamma}\right) \ge 1$. Thus,
\begin{align*}
Q(x)
& = \left( \sum_{i=1}^m S_{s,r}\left(\frac{\ip{v_i}{x}}{\gamma}\right) \right) - (m - 1/2) \\
& \ge m - (m - 1/2) \\
& = 1/2 \; .
\end{align*}

To prove part 2, consider any $x \in \bigcup_{i=1}^m \left\{ x \in \R^d ~:~
\norm{x} \le 1, \ \ip{v_i}{x} \le - \gamma \right\}$. Observe that
$\frac{\ip{v_i}{x}}{\gamma} \in [-\frac{1}{\gamma}, \frac{1}{\gamma}]$. Consider
$S_{s,r}\left(\frac{\ip{v_i}{x}}{\gamma}\right)$ for any $i \in
\{1,2,\dots,m\}$. Parts 1,2, and 3 of \autoref{lemma:properties-of-s-n-r}
and the fact $1/\gamma \le 2^s$ imply that
$S_{s,r}\left(\frac{\ip{v_i}{x}}{\gamma}\right) \le 1 +
\frac{1}{2m}$ for all $i \in \{1,2,\dots,m\}$. By the choice of $x$, there
exists $j \in \{1,2,\dots,m\}$ such that $\ip{v_j}{x} \le - \gamma$. Part 2 of
\autoref{lemma:properties-of-s-n-r} implies that
$S_{s,r}\left(\frac{\ip{v_j}{x}}{\gamma}\right) \in [-1-\frac{1}{2m},-1]$. Thus,
\begin{align*}
Q(x)
& = \left( \sum_{i=1}^m S_{s,r}\left( \frac{\ip{v_i}{x}}{\gamma} \right) \right) - \left(m - \frac{1}{2} \right) \\
& = S_{s,r}\left( \frac{\ip{v_j}{x}}{\gamma} \right) + \left( \sum_{\substack{i ~:~ 1 \le i \le m \\ i \neq j}} S_{s,r}\left( \frac{\ip{v_i}{x}}{\gamma} \right) \right) - \left(m - \frac{1}{2} \right) \\
& \le -1 + (m-1) \left( 1 + \frac{1}{2m} \right) - \left(m - \frac{1}{2} \right) \\
& \le -1/2 \; .
\qedhere
\end{align*}
\end{proof}

To prove parts 1 and 2 of \autoref{theorem:polynomial-approximation-2} first
note that part 4 of \autoref{lemma:properties-of-s-n-r} implies that for any $x$
such that $\norm{x} \le 1$, $B_{s,r}\left( \frac{\ip{v_i}{x}}{\gamma} \right)$
is positive. Thus $p(x)$ and $Q(x)$ have the same sign on the unit ball.
Consider any $x$ in either
$\displaystyle \bigcap_{i=1}^m \left\{ x \in \R^d ~:~ \norm{x} \le 1, \ \ip{v_i}{x} \ge \gamma \right\}$
or in
$\displaystyle \bigcup_{i=1}^m \left\{ x \in \R^d ~:~ \norm{x} \le 1, \ \ip{v_i}{x} \le - \gamma \right\}$.
\autoref{lemma:properties-of-q} states that $|Q(x)| \ge 1/2$ and the sign
depends on which of the two sets $x$ lies in. Since signs of $Q(x)$ and $p(x)$
are the same, it remains to show that $|p(x)| \ge \frac{1}{4} \cdot 2^{s(s+1)rm}$.
Indeed,
\begin{align*}
|p(x)|
& = 2^{-s(s+1)rm+1} \cdot |Q(x)| \prod_{j=1}^m B_{s,r} \left( \frac{\ip{v_j}{x}}{\gamma} \right) \\
& \ge 2^{-s(s+1)rm+1} \cdot |Q(x)| \left( 2^{s(s+1)r} \left( 1 - \frac{1}{4m+1} \right) \right)^m \\
& \ge |Q(x)| \ge \frac{1}{2} \quad \text{(\autoref{lemma:properties-of-q})} \; .
\end{align*}
where we used that $\left(1-\frac{1}{4m+1}\right)^m \ge e^{-\frac 1 4} \ge 1/2$.


To prove part 3 of \autoref{theorem:polynomial-approximation-2} note that
$\deg(P_s) = 2s+1$. Thus, $\deg(A_{s,r})$ and $\deg(B_{s,r})$ are at most
$(2s+1)r$. Therefore, $\deg(p) \le (2s+1) rm$.

It remains to prove part 4 of \autoref{theorem:polynomial-approximation-2}.
For any $i \in \{0,1,2,\dots,s\}$ and any $v \in \R^d$ such that $\norm{v} \le 1$
define multivariate polynomials
\begin{align*}
f_{i,v}(x) & = \frac{\ip{v}{x}}{\gamma} - 2^i \; , \\
q_v(x) & = P_s \left( \frac{\ip{v}{x}}{\gamma} \right) \; , \\
a_v(x) & = A_{s,r} \left( \frac{\ip{v}{x}}{\gamma} \right) \; , \\
b_v(x) & = B_{s,r} \left( \frac{\ip{v}{x}}{\gamma} \right) \; .
\end{align*}
Note that
$$
q(x) = \left[ \sum_{i=1}^m a_{v_i}(x) \prod_{\substack{j ~:~ 1 \le j \le m \\ j \neq i}} b_{v_j}(x) \right] - \left(m - \frac{1}{2} \right) \prod_{j=1}^n b_{v_j}(x) \; .
$$
We bound the norms of these polynomials. We have
$$
\norm{f_{i,v}}^2 = \norm{v}^2/\gamma^2 + 2^{2i} \le 2 \cdot 2^{2s} \; .
$$
where we used that $1/\gamma \le 2^s$ and $\norm{v} \le 1$.
Since $q_v(x) = f_{i,v}(\frac{\ip{v}{x}}{\gamma}) \prod_{i=1}^s \left(f_{i,v}(\frac{\ip{v}{x}}{\gamma})\right)^2$,
using part 1 of \autoref{lemma:properties-of-norm-of-polynomials} we upper bound the norm of $q_v$
as
\begin{align*}
\norm{q_v}^2
& \le (2s+1)^{2s+1} \norm{f_{0,v}}^2 \prod_{i=1}^s \norm{f_{i,v}}^4 \\
& \le  (2s+1)^{2s+1} (2 \cdot 2^{2s})^{2s + 1} \; .
\end{align*}
Using parts 3 and 2 of \autoref{lemma:properties-of-norm-of-polynomials} we upper bound the norm of $a_v$ as
\begin{align*}
\norm{a_v}^2
& \le 2\norm{(q_v)^r}^2 + 2\norm{(q_{-v})^r}^2 \\
& \le 2 r^{r(2s+1)} (\norm{q_v}^{2})^r + 2 r^{r(2s+1)} (\norm{q_{-v}}^{2})^r \\
& \le 4 r^{r(2s+1)} \left((2s+1)^{2s+1} (2 \cdot 2^s)^{2s + 1} \right)^{r} \\
& = 4 \left(2^{2s} r (4s+2) \right)^{(2s+1)r} \; .
\end{align*}
The same upper bound holds for $\norm{b_v}^2$. Therefore,
\begin{align*}
\norm{q}
& \le \left[ \sum_{i=1}^m \norm{a_{v_i} \prod_{\substack{j ~:~ 1 \le j \le m \\ j \neq i}} b_{v_j}} \right] + \left(m - \frac{1}{2} \right) \norm{\prod_{j=1}^m b_{v_j}} \\
& \le \left[ \sum_{i=1}^m m^{(s+1/2)rm} \norm{a_{v_i}} \prod_{\substack{j ~:~ 1 \le j \le m \\ j \neq i}} \norm{b_{v_j}} \right] \\
& \qquad + \left(m - \frac{1}{2} \right) m^{(s+1/2)rm} \prod_{j=1}^m \norm{b_{v_j}} \\
& \le (2m-1/2) m^{(s+1/2)rm} \left(4 \left(2^{2s} r (4s+2) \right)^{(2s+1)r} \right)^{m/2} \\
& = (2m-1/2) 2^m \cdot \left(2^{2s} rm (4s+2) \right)^{(s+1/2)rm} \; .
\end{align*}
Finally,
$\norm{p} = 2^{-s(s+1)rm+1} \norm{q} \leq (4m-1) 2^m \cdot \left(2^s rm (4s+2) \right)^{(s+1/2)rm}$. The theorem follows.

\section{Proof of Theorem~\ref{theorem:margin-transformation}}
\label{section:proof-of-theorem-margin-transformation}

\begin{proof}[Proof of Theorem~\ref{theorem:margin-transformation}]
Since the examples $(x_1, y_1)$, $(x_2, y_2)$, $\dots$, $(x_T, y_T)$ are weakly
linearly separable with margin $\gamma$,, there are vectors $w_1, w_2, \dots, w_K$
satisfying \eqref{equation:weak-linear-separability-1} and
\eqref{equation:weak-linear-separability-2}.

Fix any $i \in \{1,2,\dots,K\}$. Consider the $K-1$ vectors $(w_i - w_j)/2$ for
$j \in \{1,2,\dots,K\} \setminus \{i\}$. Note that the vectors have norm at most
$1$. We consider two cases regarding the relationship between $\gamma_1$ and
$\gamma_2$.

\paragraph{Case 1: $\gamma_1 \geq \gamma_2$.} In this case, Theorem~\ref{theorem:polynomial-approximation-1}
implies that there exist a multivariate polynomial $p_i:\R^d \to \R$,
\begin{align*}
\deg(p_i) & = \lceil \log_2(2K-2) \rceil \cdot \left\lceil \sqrt{\frac{2}{\gamma}} \right\rceil \; ,
\end{align*}
such that all examples $x$ in $R_i^+$ (resp. $R_i^-$) satisfy $p_i(x) \geq 1/2$
(resp. $p_i(x) \leq -1/2$).
Therefore, for all $t=1,2,\dots,T$, if $y_t = i$ then $p_i(x_t) \ge 1/2$,
 and if $y_t \neq i$ then $p_i(x_t) \le -1/2$, and
\[
\norm{p_i} \le
\left(188 \lceil \log_2(2K-2) \rceil \cdot \left \lceil \sqrt{\frac{2}{\gamma}} \right \rceil \right)^{\frac{1}{2} \lceil \log_2(2K-2) \rceil
\cdot \left \lceil \sqrt{\frac{2}{\gamma}} \right \rceil} \; .
\]
By \autoref{lemma:norm-bound}, there exists $c_i \in \ell_2$ such that
$\ip{c_i}{\phi(x)} = p_i(x)$, and
\[
\norm{c_i}_{\ell_2} \le
\left(376 \lceil \log_2(2K-2) \rceil \cdot \left \lceil \sqrt{\frac{2}{\gamma}} \right \rceil \right)^{\frac{1}{2} \lceil \log_2(2K-2) \rceil
\cdot \left \lceil \sqrt{\frac{2}{\gamma}} \right \rceil} \; .
\]
Define vectors $u_i \in \ell_2$ as
\[
u_i = \frac{1}{\sqrt{K}}
\cdot \frac{c_i}{\left(376 \lceil \log_2(2K-2) \rceil \cdot \left \lceil \sqrt{\frac{2}{\gamma}} \right \rceil \right)^{\frac{1}{2} \lceil \log_2(2K-2) \rceil
\cdot \left \lceil \sqrt{\frac{2}{\gamma}} \right \rceil}} \; .
\]

Then, $\norm{u_1}^2 + \norm{u_2}^2 + \dots + \norm{u_K}^2 \le 1$.
Furthermore, for all $t=1,2,\dots,T$, $\ip{u_{y_t}}{\phi(x_t)} \ge \gamma_1$
and for all $j \in \{1,2,\dots,K\} \setminus \{y_t\}$,
$\ip{u_j}{\phi(x_t)} \le - \gamma_1$. In other words,
$(\phi(x_1), y_1), (\phi(x_2), y_2), \dots, (\phi(x_T), y_T)$ are
strongly linearly separable with margin $\gamma_1 = \max\{\gamma_1, \gamma_2\}$.

\paragraph{Case 2: $\gamma_1 < \gamma_2$.} In this case, Theorem~\ref{theorem:polynomial-approximation-2}
implies that there exist a multivariate polynomial $q_i:\R^d \to \R$,
\begin{align*}
\deg(q_i) & = (2s+1) r(K-1) \; ,
\end{align*}
such that all examples $x$ in $R_i^+$ (resp. $R_i^-$) satisfy $q_i(x) \geq 1/2$
(resp. $q_i(x) \leq -1/2$), and
\begin{align*}
  \norm{q_i} \le (4K-5) 2^{K-1} \cdot \left(2^{s} r(K-1) (4s+2)\right)^{(s+1/2)r(K-1)} \; .
\end{align*}
Recall that here,
\[
r = 2 \left\lceil \frac{1}{4} \log_2(4K - 3) \right\rceil + 1 \quad \text{and} \quad s = \left \lceil \log_2(1/\gamma) \right \rceil \; .
\]

Therefore, for all $t=1,2,\dots,T$, if $y_t = i$ then $q_i(x_t) \ge 1/2$,
 and if $y_t \neq i$ then $q_i(x_t) \le -1/2$.

By \autoref{lemma:norm-bound}, there exists $c'_i \in \ell_2$ such that
$\ip{c'_i}{\phi(x)} = p_i(x)$, and
\[
  \norm{c'_i}_{\ell_2} \le (4K-5) 2^{K-1} \cdot \left(2^{s+1} r(K-1) (4s+2) \right)^{(s+1/2)r(K-1)} \; .
\]
Define vectors $u_i' \in \ell_2$ as
\begin{align*}
u_i' = \frac{c_i'  \cdot \left(2^{s+1} r(K-1) (4s+2) \right)^{-(s+1/2)r(K-1)}}{\sqrt{K} (4K-5) 2^{K-1}} \; .
\end{align*}
Then,
$\norm{u_1'}^2 + \norm{u_2'}^2 + \dots + \norm{u_K'}^2 \le 1$.
Furthermore, for all $t=1,2,\dots,T$,
$\ip{u'_{y_t}}{\phi(x_t)} \ge \gamma_2$
and for all $j \in \{1,2,\dots,K\} \setminus \{y_t\}$,
$\ip{u'_j}{\phi(x_t)} \le - \gamma_2$. In other words,
$(\phi(x_1), y_1), (\phi(x_2), y_2), \dots, (\phi(x_T), y_T)$ are
strongly linearly separable with margin $\gamma_2 = \max\{\gamma_1, \gamma_2\}$.

In summary, the examples are strongly
linearly separable with margin $\gamma' = \max\{\gamma_1, \gamma_2\}$.
Finally, observe that for any $t=1,2,\dots,T$,
\[
k(x_t,x_t) = \frac{1}{1 - \frac{1}{2} \norm{x_t}^2} \le 2 \; .
\qedhere
\]
\end{proof}

\section{Supplementary Materials for Section~\ref{section:experiments}}
\label{section:supp-to-experiment}

Figures~\ref{fig:banditron-points},~\ref{fig:linearova-points}, and~\ref{fig:rationalova-points} show the final decision boundaries learned by each algorithm on the two datasets (Figures~\ref{figure:number-of-mistakes-strongly-separable-dataset} and~\ref{figure:number-of-mistakes-weakly-separable-dataset}), 
after $T = 5 \times 10^6$ rounds. We used the version of Banditron
with exploration rate of 0.02, which explores the most.

\begin{figure}[h!]
    \centering
    \begin{subfigure}[b]{0.25\textwidth}
        \captionsetup{justification=centering}
        \begin{center}
        \includegraphics[width=\textwidth, trim={0, 0cm, 0, 0}, clip]{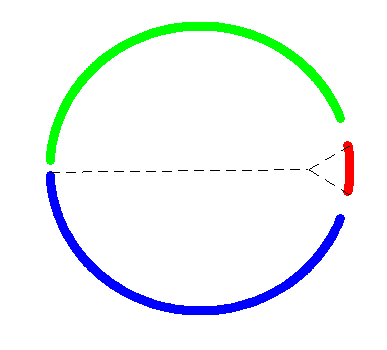}
        \caption{Strongly separable case}
        \end{center}
    \end{subfigure}
    \begin{subfigure}[b]{0.25\textwidth}
        \captionsetup{justification=centering}
        \centering
        \includegraphics[width=\textwidth, trim={0, 0cm, 0, 0}, clip]{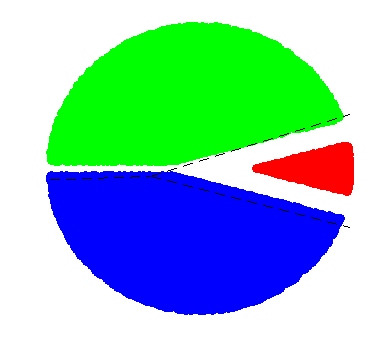}
         \caption{Weakly separable case}
    \end{subfigure}
    \captionsetup{justification=centering}
    \caption{\textsc{Banditron}'s final decision boundaries}
     \label{fig:banditron-points}
\end{figure}

\begin{figure}[h!]
    \centering
    \begin{subfigure}[b]{0.25\textwidth}
        \captionsetup{justification=centering}
        \begin{center}
        \includegraphics[width=\textwidth, trim={0, 0cm, 0, 0}, clip]{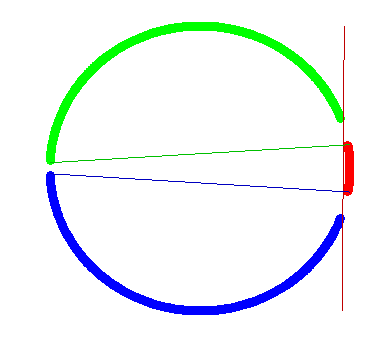}
        \caption{Strongly separable case}
        \end{center}
    \end{subfigure}
    \begin{subfigure}[b]{0.25\textwidth}
        \captionsetup{justification=centering}
        \centering
        \includegraphics[width=\textwidth, trim={0, 0cm, 0, 0}, clip]{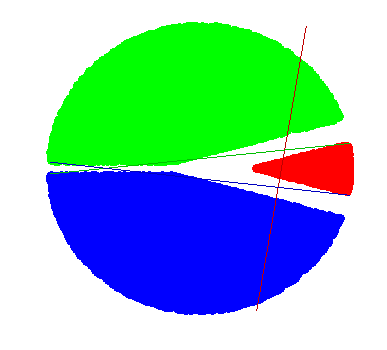}
         \caption{Weakly separable case}
    \end{subfigure}
    \captionsetup{justification=centering}
    \caption{Algorithm~\ref{algorithm:algorithm-for-strongly-linearly-separable-examples}'s final decision boundaries}
    \label{fig:linearova-points}
\end{figure}

\begin{figure}[h!]
    \centering
    \begin{subfigure}[b]{0.25\textwidth}
        \captionsetup{justification=centering}
        \begin{center}
        \includegraphics[width=\textwidth, trim={0, 0cm, 0, 0}, clip]{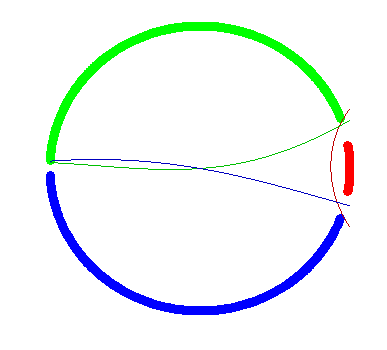}
        \caption{Strongly separable case}
        \end{center}
    \end{subfigure}
    \begin{subfigure}[b]{0.25\textwidth}
        \captionsetup{justification=centering}
        \centering
        \includegraphics[width=\textwidth, trim={0, 0cm, 0, 0}, clip]{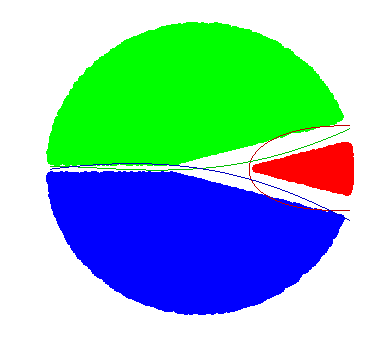}
         \caption{Weakly separable case}
    \end{subfigure}
    \captionsetup{justification=centering}
    \caption{Algorithm~\ref{algorithm:kernelized} (with rational kernel)'s final decision boundaries}
    \label{fig:rationalova-points}
\end{figure}

\section{Nearest neighbor algorithm}
\label{section:nearest-neighbor-algorithm}

\begin{algorithm}[H]
\SetAlgoLined
\LinesNumbered
\setcounter{AlgoLine}{0}
\caption{\textsc{Nearest-Neighbor Algorithm}
\label{algorithm:nearest-neighbor}
}
\textbf{Require:} Number of classes $K$, number of rounds $T$. \\
\textbf{Require:} Inner product space $(V,\ip{\cdot}{\cdot})$. \\
\nl Initialize $S \gets \emptyset$ \\
\nl \For{$t=1,2,\ldots,T$:}{
\nl  \If{$\min_{(x,y) \in S} \norm{x_t - x} \le \gamma$}{
\nl    Find nearest neighbor \\
       \qquad \qquad $(\widetilde{x}, \widetilde{y}) = \argmin_{(x,y) \in S} \norm{x_t - x}$ \\
\nl	  Predict $\widehat{y}_t = \widetilde{y}$
  }
\nl \Else{
\nl	  Predict $\widehat y_t \sim \text{Uniform}(\{1,2,\dots,K\})$
      \label{line:nearest-neighbor-explore} \\
\nl	  Receive feedback $z_t = \indicator{\widehat{y}_t \neq y_t}$ \\
\nl	  \If{$z_t = 0$}{
\nl      $S \gets S \cup \cbr{(x_t, \widehat{y}_t)}$
    }
  }
}
\end{algorithm}

In this section we analyze \textsc{Nearest-Neighbor Algorithm} shown as
Algorithm~\ref{algorithm:nearest-neighbor}. The algorithm is based on the obvious
idea that, under the weak linear separability assumption, two examples that are close to each
other must have the same label. The lemma below formalizes this intuition.

\begin{lemma}[Non-separation lemma]
\label{lemma:non-separation-lemma}
Let $(V,\ip{\cdot}{\cdot})$ be a vector space, $K$ be a positive integer and let
$\gamma$ be a positive real number. Suppose $(x_1,y_1), (x_2,y_2), \dots, (x_T,
y_T) \in V \times \{1,2,\dots,K\}$ are labeled examples that are weakly linearly
separable with margin $\gamma$. For $i$, $j$ in $\cbr{1,2,\dots,T}$, if
$\norm{x_i - x_j}_2 \le \gamma$ then $y_i = y_j$.
\end{lemma}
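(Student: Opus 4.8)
The plan is to argue by contradiction. Suppose $y_i \neq y_j$, and let $w_1, w_2, \dots, w_K \in V$ be vectors witnessing weak linear separability with margin $\gamma$, i.e. satisfying \eqref{equation:weak-linear-separability-1} and \eqref{equation:weak-linear-separability-2}. I would invoke the margin inequality \eqref{equation:weak-linear-separability-2} twice: once with $t = i$ and competing class $y_j$ (legitimate since $y_j \in [K] \setminus \{y_i\}$), giving $\ip{x_i}{w_{y_i}} \ge \ip{x_i}{w_{y_j}} + \gamma$; and once with $t = j$ and competing class $y_i$, giving $\ip{x_j}{w_{y_j}} \ge \ip{x_j}{w_{y_i}} + \gamma$. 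Adding these and collecting terms yields $\ip{x_i - x_j}{w_{y_i} - w_{y_j}} \ge 2\gamma$.

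Next I would bound the left-hand side from above. Cauchy--Schwarz gives $\ip{x_i - x_j}{w_{y_i} - w_{y_j}} \le \norm{x_i - x_j} \cdot \norm{w_{y_i} - w_{y_j}}$, and since $\norm{w_{y_i} - w_{y_j}}^2 \le 2\norm{w_{y_i}}^2 + 2\norm{w_{y_j}}^2 \le 2\sum_{k=1}^K \norm{w_k}^2 \le 2$ by \eqref{equation:weak-linear-separability-1}, we get $\norm{w_{y_i} - w_{y_j}} \le \sqrt{2}$. Chaining the inequalities, $2\gamma \le \sqrt{2}\,\norm{x_i - x_j}$, so $\norm{x_i - x_j} \ge \sqrt{2}\,\gamma$. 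Since $\gamma > 0$, this strictly exceeds $\gamma$, contradicting the hypothesis $\norm{x_i - x_j}_2 \le \gamma$. Hence $y_i = y_j$.

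This argument is short and essentially a two-line computation; the only point requiring care is bookkeeping of indices when applying \eqref{equation:weak-linear-separability-2} (each of $x_i, x_j$ plays the role of the ``true'' example with the other's label as competitor), and remembering to bound $\norm{w_{y_i}}^2 + \norm{w_{y_j}}^2$ by the full sum $\sum_{k=1}^K \norm{w_k}^2$ rather than by something weaker. No genuine obstacle is anticipated.
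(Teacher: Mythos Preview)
Your proposal is correct and mirrors the paper's proof essentially line for line: contradiction assumption, two applications of \eqref{equation:weak-linear-separability-2}, summing to obtain $\ip{x_i-x_j}{w_{y_i}-w_{y_j}}\ge 2\gamma$, then Cauchy--Schwarz together with $\norm{w_{y_i}-w_{y_j}}\le\sqrt{2}$ from \eqref{equation:weak-linear-separability-1} to force $\norm{x_i-x_j}\ge\sqrt{2}\,\gamma>\gamma$. The only cosmetic difference is that the paper phrases the contradiction as $2\gamma\le\sqrt{2}\,\gamma$ directly, whereas you first isolate $\norm{x_i-x_j}$.
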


\begin{proof}
Suppose for the sake on contradiction that $y_i \neq y_j$. By
Definition~\ref{definition:linear-separability}, there exists
vectors $w_1, \ldots, w_K$ such that
conditions~\eqref{equation:weak-linear-separability-1}
and~\eqref{equation:weak-linear-separability-2} are satisfied.

Specifically,
\begin{align*}
\ip{w_{y_i} - w_{y_j}}{x_i} & \ge \gamma \; , \\
\ip{w_{y_j} - w_{y_i}}{x_j} & \ge \gamma \; .
\end{align*}
This implies that
$$
\ip{w_{y_i} - w_{y_j}}{x_i - x_j} \ge 2\gamma \; .
$$

On the other hand,
$$
\ip{w_{y_i} - w_{y_j}}{x_i - x_j} \le \norm{w_{y_i} - w_{y_j}} \cdot \norm{x_i - x_j} \le \sqrt{2} \gamma
$$
where the first inequality is from Cauchy-Schwartz inequality, the second
inequality is from that $\norm{w_{y_i} - w_{y_j}} \le \sqrt{2(\norm{w_{y_i}}^2 +
\norm{w_{y_j}}^2)} \leq \sqrt{2}$ and our assumption on $x_i$ and $x_j$.
Therefore, we reach a contradiction.
\end{proof}

We also need to define several notions. A subset $S \subseteq \R^d$ is called a
$\gamma$-packing if for any $x,x' \in S$ such that $x \neq x'$ we have $\norm{x -
x'} > \gamma$. The following lemma is standard. Also recall that $\B(x,R) = \{ x'
\in \R^d ~:~ \norm{x' - x} \le R \}$ denotes the closed ball of radius $R$
centered a point $x$. For set $S \subseteq \R^d$, denote by $\Vol(S)$ the volume of
$S$.

\begin{lemma}[Size of $\gamma$-packing]
\label{lemma:size-of-packing}
Let $\gamma$ and $R$ be positive real numbers.
If $S \subseteq \B(\zero,R) \subseteq \R^d$ is a $\gamma$-packing then
$$
|S| \le \left( \frac{2R}{\gamma} + 1 \right)^d \; .
$$
\end{lemma}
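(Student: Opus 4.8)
The plan is to use the standard volume-packing argument. Around each point $x \in S$, consider the closed ball $\B(x, \gamma/2)$. First I would check that these balls are pairwise disjoint: if some $z$ belonged to both $\B(x, \gamma/2)$ and $\B(x', \gamma/2)$ for distinct $x, x' \in S$, then by the triangle inequality $\norm{x - x'} \le \norm{x - z} + \norm{z - x'} \le \gamma/2 + \gamma/2 = \gamma$, contradicting the defining property of a $\gamma$-packing that $\norm{x - x'} > \gamma$.

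Next I would observe that every such ball is contained in the slightly larger ball $\B(\zero, R + \gamma/2)$: if $x \in S \subseteq \B(\zero, R)$ and $z \in \B(x, \gamma/2)$, then $\norm{z} \le \norm{x} + \norm{z - x} \le R + \gamma/2$. Hence the disjoint union $\bigcup_{x \in S} \B(x, \gamma/2)$ sits inside $\B(\zero, R + \gamma/2)$, and comparing volumes gives
$$
|S| \cdot \Vol\big(\B(\zero, \gamma/2)\big) = \sum_{x \in S} \Vol\big(\B(x, \gamma/2)\big) = \Vol\Big( \bigcup_{x \in S} \B(x, \gamma/2) \Big) \le \Vol\big(\B(\zero, R + \gamma/2)\big).
$$

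Finally I would use that the volume of a Euclidean ball in $\R^d$ of radius $\rho$ equals $\rho^d$ times the volume of the unit ball, so the ratio $\Vol(\B(\zero, R + \gamma/2)) / \Vol(\B(\zero, \gamma/2))$ equals $\big((R + \gamma/2)/(\gamma/2)\big)^d = (2R/\gamma + 1)^d$. Rearranging the displayed inequality yields $|S| \le (2R/\gamma + 1)^d$, as claimed. There is no real obstacle here; the only points worth stating explicitly are the disjointness of the packing balls and the scaling of ball volumes with the $d$-th power of the radius.
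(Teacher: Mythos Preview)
Your proposal is correct and follows essentially the same approach as the paper: the paper also places balls of radius $\gamma/2$ around each point of $S$, observes they are disjoint and contained in $\B(\zero, R+\gamma/2)$, and compares volumes using the $d$-th power scaling. Your version simply spells out the disjointness and containment via the triangle inequality, which the paper asserts without detail.
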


\begin{proof}
If $S$ is a $\gamma$-packing then $\{ \B(x,\gamma/2) ~:~ x \in S \}$
is a collection of disjoint balls of radius $\gamma$ that fit into $\B(\zero,R + \gamma/2)$.
Thus,
$$
|S| \cdot \Vol(\B(\zero, \gamma/2)) \le \Vol(\B(\zero,R + \gamma/2))
$$
Hence,
\begin{align*}
|S|
 \le \frac{\Vol(\B(\zero,R + \gamma/2))}{\Vol(\B(\zero, \gamma/2))} 
 = \left( \frac{R + \gamma/2}{\gamma/2} \right)^d
 = \left( \frac{2R}{\gamma} + 1 \right)^d \; .
\end{align*}
\end{proof}

\begin{theorem}[Mistake upper bound for \textsc{Nearest-Neighbor Algorithm}]
\label{theorem:mistake-bound-for-nearest-neighbor-algorithm}
Let $K$ and $d$ be positive integers and let $\gamma,R$ be a positive real
numbers. Suppose $(x_1,y_1), \ldots, (x_T, y_T) \in
\R^d \times \{1,2,\dots,K\}$ are labeled examples that are weakly linearly
separable with margin $\gamma$ and satisfy $\norm{x_1}, \norm{x_2}, \dots,
\norm{x_T} \le R$. Then, the expected number of mistakes made by
Algorithm~\ref{algorithm:nearest-neighbor} is at most
$$
(K-1) \left( \frac{2R}{\gamma} + 1\right)^d \; .
$$
\end{theorem}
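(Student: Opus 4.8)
The plan is to follow the same template as the analysis of Algorithm~\ref{algorithm:algorithm-for-strongly-linearly-separable-examples} in the proof of \autoref{theorem:strongly-separable-examples-mistake-upper-bound}: split rounds according to the test in line~3, show that the algorithm \emph{never} errs when the test succeeds, show that the number of ``useful'' exploration rounds is bounded \emph{pathwise} by the packing lemma, and then relate the expected number of mistakes to that quantity.

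\textbf{Step 1 (near rounds are never mistakes).} First I would show that whenever $\min_{(x,y)\in S}\norm{x_t-x}\le\gamma$, the prediction $\widehat y_t=\widetilde y$ is correct. Every pair $(\widetilde x,\widetilde y)\in S$ was inserted in an earlier exploration round $s$ with $z_s=0$, so $\widetilde x=x_s$ and $\widetilde y=y_s$ is the true label of $x_s$. Since $\norm{x_s-x_t}=\norm{\widetilde x-x_t}\le\gamma$ and both $(x_s,y_s)$ and $(x_t,y_t)$ lie in the weakly separable sequence, \autoref{lemma:non-separation-lemma} gives $y_s=y_t$, i.e.\ $\widehat y_t=y_t$. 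Hence every mistake occurs in an exploration round (the \textbf{else} branch), and the total number of mistakes equals $\sum_{t=1}^T A_t\,\one{\widehat y_t\neq y_t}$, where $A_t$ is the indicator that round $t$ is an exploration round.

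\textbf{Step 2 (packing bound on useful exploration rounds).} Next I would observe that $S$ is at all times a $\gamma$-packing contained in $\B(\zero,R)$: a pair is added only in an exploration round, where by definition $x_t$ is at distance $>\gamma$ from every point currently in $S$, and $\norm{x_t}\le R$. So by \autoref{lemma:size-of-packing}, $|S|\le (2R/\gamma+1)^d$ along \emph{every} sample path. Since each exploration round with $z_t=0$ adds exactly one element to $S$ and nothing is ever removed, the count $C:=\sum_{t=1}^T A_t\,\one{z_t=0}$ satisfies $C\le (2R/\gamma+1)^d$ with probability one.

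\textbf{Step 3 (expectation identity) and conclusion.} Finally I would condition on the history $\mathcal F'_{t-1}$ available just before the algorithm predicts in round $t$ (this determines $S$, the adversary's choice $(x_t,y_t)$, and hence $A_t$). On an exploration round, $\widehat y_t$ is uniform on $\{1,\dots,K\}$ and independent of $\mathcal F'_{t-1}$, so $\Exp[\one{\widehat y_t\neq y_t}\mid\mathcal F'_{t-1},A_t=1]=\tfrac{K-1}{K}$ and $\Exp[\one{z_t=0}\mid\mathcal F'_{t-1},A_t=1]=\tfrac1K$; therefore $\Exp[A_t\,\one{\widehat y_t\neq y_t}\mid\mathcal F'_{t-1}]=(K-1)\,\Exp[A_t\,\one{z_t=0}\mid\mathcal F'_{t-1}]$. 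Summing over $t$ and taking expectations gives $\Exp\big[\sum_t A_t\,\one{\widehat y_t\neq y_t}\big]=(K-1)\,\Exp[C]$, and combining with Step~1 and $C\le(2R/\gamma+1)^d$ from Step~2 yields $\Exp[\#\text{mistakes}]\le (K-1)(2R/\gamma+1)^d$. The main point requiring care is the conditioning in Step~3: one must argue that $\Exp[\one{z_t=1}\mid\text{round }t\text{ explores}]=\tfrac{K-1}{K}$ \emph{exactly}, even against an adaptive adversary, which holds because $y_t$ and $A_t$ are measurable with respect to the pre-prediction history against which the exploration draw $\widehat y_t$ is independent and uniform. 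A secondary subtlety is that $C\le(2R/\gamma+1)^d$ must hold pathwise (not merely in expectation) for this identity to close the argument; this is precisely what \autoref{lemma:size-of-packing} supplies, and it is also why the final bound has no dependence on the horizon $T$.
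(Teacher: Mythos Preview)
Your proposal is correct and follows essentially the same approach as the paper: the paper also shows that near rounds are never mistakes via \autoref{lemma:non-separation-lemma}, bounds the number of successful exploration rounds pathwise by the $\gamma$-packing bound of \autoref{lemma:size-of-packing}, and then uses the uniform exploration probability to relate $\Exp[\text{mistakes}]$ to $(K-1)$ times the expected number of successful explorations. Your treatment of the conditioning in Step~3 is in fact more explicit than the paper's, which simply asserts $\Exp[z_t \mid b_t=1]=(K-1)/K$ without discussing the filtration.
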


\begin{proof}
Let $M$ be the number of mistakes made by the algorithm. Let $b_t$ be the
indicator that line~\ref{line:nearest-neighbor-explore} is executed at time step
$t$, i.e. we fall into the ``else'' case. Note that if $b_t = 0$, then by
Lemma~\ref{lemma:non-separation-lemma}, the prediction $\widehat{y}_t$ must equal
$y_t$, i.e. $z_t = 0$. Therefore, $M = \sum_{t=1}^T z_t = \sum_{t=1}^T b_t z_t$.
Let $U = \sum_{t=1}^T b_t (1-z_t)$. Clearly, $|S| = U$. Since $S \subseteq \B(\zero, R)$
is a $\gamma$-packing, $U = |S| \le (\frac{2R}{\gamma} + 1)^d$.

Note that when $b_t = 1$, $\widehat{y}_t$ is chosen uniformly at random, we have
$$
\Exp[ z_t ~|~ b_t = 1] = \frac{K-1}{K} \; .
$$
Therefore,
$$
\Exp[M] = \Exp \left[ \sum_{t=1}^T b_t z_t \right] = \frac{K-1}{K} \Exp \left[ \sum_{t=1}^T b_t \right] \; .
$$
On the other hand,
$$
\Exp[U] = \Exp \left[ \sum_{t=1}^T b_t (1-z_t) \right] = \frac 1 K \Exp \left[\sum_{t=1}^T b_t \right] \; .
$$
Therefore,
$$
\Exp[M] = (K-1) \Exp[U] \leq (K-1) \left(\frac{2R}{\gamma} + 1 \right)^d \; .
$$
\end{proof}

\section{NP-hardness of the weak labeling problem}
\label{section:np-hardness-of-weak-labeling-problem}

Any algorithm for the bandit setting collects information in the form of so
called \emph{strongly labeled} and \emph{weakly labeled} examples.
Strongly-labeled examples are those for which we know the class label. Weakly
labeled example is an example for which we know that class label can be anything
except for a particular one class.

A natural strategy for each round is to find vectors $w_1, w_2, \dots, w_K$ that
linearly separate the examples seen in the previous rounds and use the vectors
to predict the label in the next round. More precisely, we want to find both the
vectors $w_1, w_2, \dots, w_K$ and label for each example consistent with its
weak and/or strong labels such that $w_1, w_2, \dots, w_K$ linearly separate the
labeled examples. We show this problem is NP-hard even for $K=3$.

Clearly, the problem is at least as hard as the decision version of the problem
where the goal is to determine if such vectors and labeling exist. We show that
this problem is NP-complete.

We use symbols $[K]=\{1,2,\dots,K\}$ for strong labels and $[\overline{K}]
=\{\overline{1},
\overline{2}, \dots, \overline{K}\}$ for weak labels. Formally, the weak
labeling problem can be described as below:
\begin{figure}[H]
\begin{framed}
\begin{center}
    \textbf{Weak Labeling}
\end{center}
\textbf{Given:} Feature-label pairs $(x_1, y_1)$, $(x_2, y_2)$, \dots, $(x_T, y_T)$ in $\{0,1\}^d \times \{1,2,\dots, K, \overline{1}, \overline{2}, \dots, \overline{K}\}$. \\
\textbf{Question:} Do there exist $ w_1, w_2, \dots, w_K \in \R^d$ such that for all $t=1,2,\dots,T$,
\begin{align*}
& y_t \in [K] \Longrightarrow \forall i \in [K] \setminus \{y_t\} \quad \ip{w_{y_t}}{x_t}  > \ip{w_i}{x_t} \; , \\
& \text{and} \\
& y_t \in [\overline{K}] \Longrightarrow \exists i \in [K] \quad \ip{w_i}{x_t} > \ip{w_{\overline{y_t}}}{x_t} \; ?
\end{align*}
\end{framed}
\end{figure}

The hardness proof is based on a reduction from the set splitting problem, which
is proven to be NP-complete by Lov\'asz \cite{Garey-Johnson-1979}, to our weak
labeling problem. The reduction is adapted from \cite{Blum-Rivest-1993}.
\begin{figure}[H]
\begin{framed}
\begin{center}
    \textbf{Set Splitting}
\end{center}
\textbf{Given:} A finite set $S$ and a collection $C$ of subsets $c_i$ of $S$. \\
\textbf{Question:} Do there exist disjoint sets $S_1$ and $S_2$ such that $S_1 \cup S_2 = S$ and $\forall i, c_i\not\subseteq S_1$ and $c_i\not\subseteq S_2$?
\end{framed}
\end{figure}

Below we show the reduction. Suppose we are given an instance of the set
splitting problem
\begin{align*}
S = \{1, 2, \dots, N\} \; ,
C = \{c_1, c_2, \dots, c_M\} \; .
\end{align*}

We create the weak labeling instance as follows. Let $d=N+1$ and $K=3$.
Define $\zero$ as the zero vector $(0,\dots,0)\in \R^N$ and $\e_i$ as the
$i$-th standard vector $(0,\dots, 1, \dots, 0)\in \R^N$. Then we include all
the following feature-label pairs:
\begin{itemize}
\item Type 1: $(x,y)=((\zero,1), 3)$,
\item Type 2: $(x,y)=((\e_i,1), \overline{3})$ for all $i \in \{1,2,\dots,N\}$,
\item Type 3: $(x,y)=\left(\left(\sum_{i\in c_j} \e_i, 1\right), 3\right)$ for all $j \in \{1,2,\dots,M\}$.
\end{itemize}

For example, if we have $S=\{1,2,3\}$, $C=\{c_1, c_2\}$, $c_1 = \{1,2\}$,
$c_2=\{2,3\}$, then we create the weak labeling sample set as:
\[
\{
((0,0,0,1),3), ((1,0,0,1),\overline{3}), ((0,1,0,1),\overline{3}),
((0,0,1,1),\overline{3}), ((1,1,0,1),3), ((0,1,1,1),3)
\} \; .
\]
The following lemma shows that answering this weak labeling problem is
equivalent to answering the original set splitting problem.

\begin{lemma}
Any instance of the set splitting problem is a YES instance if and only if the
corresponding instance of the weak labeling problem (as described above) is a
YES instance.
\end{lemma}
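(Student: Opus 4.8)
The plan is to prove both implications by translating the weak-labeling constraints into a question about two affine halfspaces. Given candidate weights $w_1, w_2, w_3 \in \R^{N+1}$, I would pass to the difference vectors $u = w_1 - w_3$ and $v = w_2 - w_3$ and write $u = (u', a)$, $v = (v', b)$ with $u', v' \in \R^N$ and $a, b \in \R$; since every feature vector ends in a $1$-coordinate, the last coordinate plays the role of a bias. The weak-labeling conditions then read: every Type~1 and Type~3 example $x$ must satisfy $\ip{u}{x} < 0$ and $\ip{v}{x} < 0$, while every Type~2 example $(\e_i, 1)$ must satisfy $\ip{u}{(\e_i,1)} > 0$ or $\ip{v}{(\e_i,1)} > 0$ (for a $\overline 3$-labeled example the witnessing index cannot be $3$, so it lies in $\{1,2\}$, which is what gives the disjunction). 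Spelled out coordinatewise this is: $a < 0$, $b < 0$, $\sum_{i \in c_j} u'_i + a < 0$ and $\sum_{i \in c_j} v'_i + b < 0$ for all $j$, and $u'_i + a > 0$ or $v'_i + b > 0$ for all $i \in S$. I would assume, as is standard, that each $c_j$ is nonempty (a YES instance of set splitting automatically has $|c_j| \ge 2$).

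For the direction ``set splitting YES $\Rightarrow$ weak labeling YES'', given a valid partition $S = S_1 \cup S_2$ I would exhibit explicit difference vectors: set $a = b = -1$, and for $i \in S_1$ let $u'_i = 2$, $v'_i = -2N$, while for $i \in S_2$ let $u'_i = -2N$, $v'_i = 2$. Then the Type~1 constraint is $a = -1 < 0$; for $i \in S_1$ the index $1$ witnesses the Type~2 disjunction since $u'_i + a = 1 > 0$, and for $i \in S_2$ the index $2$ witnesses it since $v'_i + b = 1 > 0$; and since $c_j \not\subseteq S_1$ forces $|c_j \cap S_2| \ge 1$, we get $\sum_{i \in c_j} u'_i + a \le 2|c_j \cap S_1| - 2N|c_j \cap S_2| - 1 \le 2N - 2N - 1 < 0$, and symmetrically $\sum_{i\in c_j} v'_i + b < 0$ using $c_j \not\subseteq S_2$. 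Recovering $w_3 = \zero$, $w_1 = u$, $w_2 = v$ then yields a YES instance of weak labeling.

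For the converse, given weights $w_1, w_2, w_3$ witnessing weak labeling, I would form $u, v, a, b$ as above and set $S_1 = \{ i \in S : u'_i + a > 0 \}$ and $S_2 = S \setminus S_1$, which are disjoint and cover $S$. For $i \in S_2$ we have $u'_i + a \le 0$, so the Type~2 disjunction forces $v'_i + b > 0$. To see that no $c_j$ is monochromatic, suppose $c_j \subseteq S_1$: then $u'_i > -a$ for each of the $|c_j| \ge 1$ indices $i \in c_j$, so summing the strict inequalities gives $\sum_{i \in c_j} u'_i + a > (-a)(|c_j| - 1) \ge 0$ (using $a < 0$), contradicting the Type~3 inequality $\sum_{i \in c_j} u'_i + a < 0$; the case $c_j \subseteq S_2$ is identical with $v, b$ replacing $u, a$. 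Hence $(S_1, S_2)$ is a valid split of $S$.

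The routine parts — checking the Type~2 disjunction and rewriting inner products coordinatewise — are straightforward bookkeeping. The main obstacle, modest as it is, is getting the quantitative choices right: in the forward direction the ``large'' coefficient (here $2N$) must be big enough that any $c_j$ meeting $S_2$ is pushed strictly below zero while any $c_j$ contained in $S_1$ stays strictly above, and in the backward direction one must notice that strict negativity of the bias $a$ (forced by the all-zero Type~1 example) together with $|c_j| \ge 1$ is precisely what rules out a monochromatic $c_j$.
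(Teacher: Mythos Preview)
Your proposal is correct and follows essentially the same approach as the paper's proof: both arguments work with the difference vectors $w_1-w_3$ and $w_2-w_3$, construct explicit weights in the forward direction (your constants $2,-2N,-1$ are just twice the paper's $1,-N,-\tfrac12$), and in the backward direction define $S_1$ via the sign of $\ip{w_1-w_3}{(\e_i,1)}$ and derive a contradiction from a monochromatic $c_j$ by summing the Type~2 inequalities and combining with the Type~1 bias constraint. The only cosmetic differences are that the paper defines $S_2=\{i:\ip{w_2-w_3}{(\e_i,1)}>0\}\setminus S_1$ rather than your $S\setminus S_1$ (equivalent, since Type~2 forces $\ip{w_2-w_3}{(\e_i,1)}>0$ on that complement), and that you make explicit the nonemptiness assumption on each $c_j$, which the paper leaves implicit.
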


\begin{proof}
$(\Longrightarrow)$ Let $S_1, S_2$ be the solution of the set splitting problem. Define
$$
w_1 = \left(a_1, a_2, \cdots, a_N, -\frac{1}{2}\right),
$$
where for all $i \in \{1,2,\dots,N\}$, $a_i=1$ if $i\in S_1$ and $a_i=-N$ if
$i\notin S_1$. Similarly, define
$$
w_2 = \left(b_1, b_2, \cdots, b_N, -\frac{1}{2}\right),
$$
where for all $i \in \{1,2,\dots,N\}$, $b_i=1$ if $i \in S_2$ and $b_i=-N$ if
$i\notin S_2$. Finally, define
$$
w_3 = (0,0,\cdots, 0),
$$
the zero vector. To see this is a solution for the weak labeling problem, we
verify separately for Type 1-3 samples defined above. For Type 1 sample, we have
$$
\ip{w_3}{x} = 0 > -\frac{1}{2} = \ip{w_1}{x}=\ip{w_2}{x}.
$$
For a Type 2 sample that corresponds to index $i$, we have either $i\in S_1$ or
$i\in S_2$ because $S_1\cup S_2 = \{1,2,\dots,N\}$ is guaranteed. Thus, either
$a_i=1$ or $b_i=1$. If $a_i=1$ is the case, then
$$
\ip{w_1}{x} = a_i - \frac{1}{2} = \frac{1}{2} > 0 = \ip{w_3}{x};
$$
similarly if $b_i=1$, we have $\ip{w_2}{x}>\ip{w_3}{x}$. \\ For a Type 3 sample
that corresponds to index $j$, Since $c_j \not\subset S_1$, there exists some
$i'\in c_j$ and $i'\notin S_1$. Thus we have $x_{i'}=1$, $a_{i'}=-N$, and
therefore
\begin{align*}
\ip{w_1}{x}
& = a_{i'}x_{i'} + \sum_{i\in \{1,2,\dots,N\} \setminus \{i'\}} a_ix_i - \frac{1}{2} \\
& \le -N + (N-1)-\frac{1}{2} < 0 = \ip{w_3}{x} \; .
\end{align*}
Because $c_j \not\subset S_2$ also holds, we also have
$\ip{w_2}{x}<\ip{w_3}{x}$. This direction is therefore proved. \\
\ \\
$(\Longleftarrow)$ Given the solution $w_1, w_2, w_3$ of the weak labeling problem, we define
\begin{align*}
S_1 &= \left\{i \in \{1,2,\dots,n\} ~:~ \ip{w_1-w_3}{(\e_i, 1)} > 0 \right\}, \\
S_2 &= \left\{i \in \{1,2,\dots,n\} ~:~ \ip{w_2-w_3}{(\e_i, 1)} > 0 \text{\ and\ } i \notin S_1 \right\}.
\end{align*}
It is not hard to see $S_1 \cap S_2 = \emptyset$ and $S_1\cup S_2 =
\{1,2,\dots,N\}$. The former is because $S_2$ only includes elements that are
not in $S_1$. For the latter, note that $(\e_i, 1)$ is the feature vector for
Type 2 samples. Because Type 2 samples all have label $\overline{3}$, for any $i
\in \{1,2,\dots,N\}$, one of the following must hold: $\ip{w_1-w_3}{(\e_i,
1)}>0$ or $\ip{w_2-w_3}{(\e_i, 1)}>0$. This implies $i\in S_1$ or $i\in S_2$.

Now we show $\forall j$, $c_j \not\subset S_1$ and $c_j \not\subset S_2$ by
contradiction. Assume there exists some $j$ such that $c_j \subset S_1$. By our
definition of $S_1$, we have $\ip{w_1-w_3}{(\e_i, 1)} > 0$ for all $i\in c_j$.
Therefore,
\begin{align*}
\sum_{i\in c_j} \ip{w_1-w_3}{\left(\e_i, 1\right)}
 = \ip{w_1-w_3}{\left(\sum_{i\in c_j} \e_i, |c_j|\right)} > 0.
\end{align*}
Because Type 1 sample has label $3$, we also have
$$
\ip{w_1-w_3}{\left(\zero, 1\right)} < 0.
$$
Combining the above two inequalities, we get
\begin{align*}
& \ip{w_1-w_3}{\left(\sum_{i\in c_j}\e_i, 1\right)}
 = \ip{w_1-w_3}{\left(\sum_{i\in c_j}\e_i, |c_j|\right)} - (|c_j|-1)\ip{w_1-w_3}{\left(\zero, 1\right)}  > 0 \; .
\end{align*}
Note that $\left(\sum_{i\in c_j}\e_i, 1\right)$ is a feature vector for Type 3
samples. Thus the above inequality contradicts that Type 3 samples have label 3.
Therefore, $c_j \not\subset S_1$. If we assume there exists some $c_j \subset
S_2$, same arguments apply and also lead to contradiction.
\end{proof}

\section{Mistake lower bound for ignorant algorithms}
\label{section:mistake-lower-bound-for-ignorant-algorithms}

In this section, we prove a mistake lower bound for a family of algorithms
called \textit{ignorant algorithms}. Ignorant algorithms ignore the examples on
which they make mistakes. This assumption seems strong, but as we will explain
below, it is actually natural, and several recently proposed bandit linear
classification algorithms that achieve $\sqrt{T}$ regret bounds belong to this
family, e.g., SOBA~\citep{Beygelzimer-Orabona-Zhang-2017},
OBAMA~\citep{Foster-Kale-Luo-Mohri-Sridharan-2018}. Also,
\textsc{Nearest-Neighbor Algorithm} (Algorithm~\ref{algorithm:nearest-neighbor})
presented in Appendix~\ref{section:nearest-neighbor-algorithm} is an ignorant
algorithm.

Under the assumption that the examples lie in in the unit ball of $\R^d$ and are
weakly linearly separable with margin $\gamma$, we show that any ignorant
algorithm must make at least $\Omega \left( \left(\frac{1}{160
\gamma}\right)^{(d-2)/4} \right)$ mistakes in the worst case. In other words, an
algorithm that achieves a better mistake bound cannot ignore examples on which
it makes a mistake and it must make a meaningful update on such examples.

To formally define ignorant algorithms, we define the conditional distribution
from which an algorithm draws its predictions. Formally, given an algorithm
$\calA$ and an adversarial strategy, we define
\[
p_t(y|x) =
\Pr[y_t = y ~|~ (x_1, y_1), (x_2, y_2) \dots, (x_{t-1}, y_{t-1}), x_t = x] \; .
\]
In other words, in any round $t$, conditioned on the past $t-1$ rounds, the
algorithm $\calA$ chooses $y_t$ from probability distribution $p_t(\cdot|x_t)$.
Formally, $p_t$ is a function $p:\{1,2,\dots,K\} \times \R^d \to [0,1]$
such that $\sum_{y=1}^K p_t(y|x) = 1$ for any $x \in \R^d$.

\begin{definition}[Ignorant algorithm]
An algorithm $\calA$ for \textsc{Online Multiclass Linear Classification with
Bandit Feedback} is called \emph{ignorant} if for every $t=1,2,\dots,T$,
$p_t$ is determined solely by the sequence
$(x_{a_1}, y_{a_1})$,$(x_{a_2}, y_{a_2})$, $\dots$, $(x_{a_n}, y_{a_n})$
of labeled examples
from the rounds $1 \le a_1 < a_2 < \dots < a_n < t$ in which
the algorithm makes a correct prediction.
\end{definition}

An equivalent definition of an ignorant algorithm is that the memory state of
the algorithm does not change after it makes a mistake. Equivalently,
the memory state of an ignorant algorithm is completely determined
by the sequence of labeled examples on which it made correct prediction.

To explain the definition, consider an ignorant algorithm $\calA$. Suppose that
on a sequence of examples $(x_1, y_1)$, $(x_2, y_2)$, $\dots$, $(x_{t-1}, y_{t-1})$
generated by some adversary the algorithm $\calA$ makes correct predictions in
rounds $a_1, a_2, \dots, a_n$ where $1 \le a_1 < a_2 < \dots < a_n < t$ and
errors on rounds $\{1,2,\dots,t-1\} \setminus \{a_1, a_2, \dots, a_n\}$. Suppose
that on another sequence of examples $(x_1', y_1'), (x_2', y_2'), \dots,
(x_{s-1}', y_{s-1}')$ generated by another adversary the algorithm $\calA$ makes
correct predictions in rounds $b_1, b_2, \dots, b_n$ where $1 \le b_1 < b_2 <
\dots < b_n < s$ and errors on rounds $\{1,2,\dots,s-1\} \setminus \{b_1, b_2,
\dots, b_n\}$. Futhermore, suppose
\begin{align*}
(x_{a_1}, y_{a_1}) &= (x'_{b_1}, y'_{b_1}) \; , \\
(x_{a_2}, y_{a_2}) &= (x'_{b_2}, y'_{b_2}) \; , \\
\vdots \\
(x_{a_n}, y_{a_n}) &= (x'_{b_2}, y'_{b_n}) \; .
\end{align*}
Then, as $\calA$ is ignorant,
$$
\Pr[y_t = y ~|~ (x_1, y_1), (x_2, y_2) \dots, (x_{t-1}, y_{t-1}), x_t = x] =
\Pr[y_t' = y ~|~ (x_1', y_1'), (x_2', y_2') \dots, (x_{t-1}', y_{t-1}'), x_t' = x].
$$
Note that the sequences $(x_1, y_1)$, $(x_2, y_2)$, $\dots$, $(x_{t-1},
y_{t-1})$ and $(x_1', y_1')$, $(x_2', y_2')$, $\dots$, $(x_{s-1}', y_{s-1}')$
might have different lengths and and $\calA$ might error in different sets of
rounds. As a special case, if an ignorant algorithm makes a mistake in round $t$
then $p_{t+1}=p_t$.

Our main result is the following lower bound on the expected number of mistakes
for ignorant algorithms.

\begin{theorem}[Mistake lower bound for ignorant algorithms]
\label{theorem:ignorant-lower-bound}
Let $\gamma \in (0,1)$ and let $d$ be a positive integer. Suppose $\calA$ is an
ignorant algorithm for \textsc{Online Multiclass Linear Classification with
Bandit Feedback}. There exists $T$ and an adversary that sequentially chooses
labeled examples $(x_1, y_1), (x_2, y_2), \dots, (x_T, y_T) \in \R^d\times
\{1,2\}$ such that the examples are strongly linearly separable with magin
$\gamma$ and $\norm{x_1}, \norm{x_2}, \dots, \norm{x_T} \le 1$, and the expected
number of mistakes made by $\calA$ is at least
$$
\frac{1}{10} \left(\frac{1}{160\gamma}\right)^{\frac{d-2}{4}} \; .
$$
\end{theorem}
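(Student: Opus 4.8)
The plan is to show that ignorance forces the learner into a dilemma on a carefully designed spherical code: to do well it would have to either probe many distinct points or probe a single point many times, and either way it incurs $\Omega(\sqrt N)$ mistakes, where $N$ is the size of a margin-$\gamma$ code in $\R^d$, namely $N\gtrsim\gamma^{-(d-2)/2}$.

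\textbf{Geometric construction.} Fix $\rho=c=1/\sqrt2$ and embed points as unit vectors $x=(v,c)\in\R^{d-1}\times\R=\R^d$, with $v$ ranging over a maximal $\epsilon$-packing $\{v_1,\dots,v_N\}$ of the sphere of radius $\rho$ in $\R^{d-1}$, where $\epsilon\asymp\sqrt\gamma$. Standard sphere-covering estimates (a maximal packing is also a net) give $N\gtrsim(\rho/\epsilon)^{d-2}\gtrsim\gamma^{-(d-2)/2}$. The crucial property is that for every index $i_0$ the ``one-vs-rest'' labeling $y_i=1\iff i=i_0$ is strongly linearly separable with margin $\gamma$: for $K=2$ it suffices to produce a homogeneous $w=(u,b)$ with $\|u\|^2+b^2\le1$, $\langle w,x_{i_0}\rangle\ge\gamma$ and $\langle w,x_i\rangle\le-\gamma$ for $i\ne i_0$ (then $(w_1,w_2)=(w/2,-w/2)$ satisfies Definition~\ref{definition:linear-separability}). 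Take $u=t\,v_{i_0}/\rho$ and choose $b$ to center the threshold; since the packing points are $\epsilon$-separated, $\langle v_{i_0}/\rho,\,v_i/\rho\rangle\le 1-\epsilon^2/(2\rho^2)$ for $i\ne i_0$, which produces a separation of at least $2\gamma$ once $t\asymp\gamma/\epsilon^2$, and the norm budget is respected for this $t$ precisely because $\rho=c$ (so that $\|u\|^2+b^2=t^2/c^2\le1$ reads $t\le c$). Each $x_i$ has unit norm.

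\textbf{The adversary, and how ignorance is used.} Draw $i^*$ uniformly from $\{1,\dots,N\}$ and set $y_i=1$ iff $i=i^*$. Process ``blocks'' $i=1,2,\dots,N$; in block $i$, present $x_i$ (with label $y_i$) repeatedly until $\calA$ predicts correctly, but at most $L:=\lceil\sqrt N\rceil$ times, then move on. Because $\calA$ is ignorant, its prediction distribution on $x_i$ does not change across the failed repetitions within a block, and, crucially, while $\calA$ is still in a state in which all previously resolved points carry label $2$ (in particular throughout blocks $1,\dots,i^*$) its first-prediction probability of label $1$ on $x_i$ is a number $q_i$ that does \emph{not} depend on whether $i=i^*$: the algorithm cannot peek at $y_i$ before predicting, and its memory records nothing from earlier mistakes. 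A geometric-series computation gives the expected number of mistakes in block $i$: if $i<i^*$ (so $y_i=2$) it is $q_i(1-q_i^L)/(1-q_i)$, and if $i=i^*$ (so $y_i=1$) it is $(1-q_{i^*})\bigl(1-(1-q_{i^*})^L\bigr)/q_{i^*}$ (with the natural values at $q_i\in\{0,1\}$). Averaging over $i^*$, the total expected number of mistakes is at least
\[
\frac1N\sum_{i=1}^{N}\left[(N-i)\,\frac{q_i(1-q_i^L)}{1-q_i}+\frac{(1-q_i)\bigl(1-(1-q_i)^L\bigr)}{q_i}\right].
\]
For each $i\le N/2$ the bracket is, up to constants, at least $\tfrac N2 q_i+\min\!\bigl(L,\tfrac1{q_i}\bigr)(1-q_i)$; since $L\ge\sqrt N$ this is $\gtrsim\sqrt N$ for every $q_i\in[0,1]$ --- large $q_i$ is punished by the ``many distinct class-$2$ points'' term, tiny $q_i$ ($q_i\lesssim1/L$) by the ``loop on $x_{i^*}$'' term, and the intermediate range $q_i\asymp1/\sqrt N$ by AM--GM applied to $\tfrac N2 q_i$ and $\tfrac1{2q_i}$. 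Summing over $i\le N/2$ gives total expected mistakes $\gtrsim\frac1N\cdot\frac N2\cdot\sqrt N=\Omega(\sqrt N)$, hence $\Omega(\gamma^{-(d-2)/4})$; tracking the packing constant and the factors of $2$ yields the explicit $\tfrac1{10}\bigl(1/(160\gamma)\bigr)^{(d-2)/4}$. Derandomizing $i^*$ is immediate since some fixed choice does at least as well as the average.

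\textbf{Main obstacle.} The heart of the argument --- and the only place ignorance enters --- is the claim that $q_i$ is literally the same random variable whether or not $i=i^*$: one must argue that a run of mistakes on blocks $1,\dots,i-1$ leaves $\calA$'s state depending only on the $i-1$ correctly-resolved labels (all equal to $2$ for $i\le i^*$), and that within block $i$ no update precedes the prediction. One must also dispatch the low-probability event that some block exhausts all $L$ repetitions without a correct prediction; this only helps the bound (it forces $\ge L=\lceil\sqrt N\rceil$ mistakes in that block), but it must be folded cleanly into the geometric-series bookkeeping. The geometric construction, though computational, is routine once the radius split $\rho=c=1/\sqrt2$ is chosen to balance the norm budget between the ``direction'' part $u$ and the ``bias'' part $b$ of the separating hyperplane.
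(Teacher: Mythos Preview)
Your approach is correct and reaches the same $\Omega(\sqrt N)$ bound with $N\asymp\gamma^{-(d-2)/2}$, but the adversary is genuinely different from the paper's. The geometric ingredient is essentially identical (a spherical packing in $\R^{d-1}$ lifted with a bias coordinate; the paper packages this as a lemma producing vectors $u_i,v_j$ with $\langle u_i,v_j\rangle\ge\gamma$ iff $i=j$ and $\le-\gamma$ otherwise, from which $w_1=-u_{i_0}/2$, $w_2=u_{i_0}/2$ witness strong separability). Where the two proofs diverge is the adversarial strategy. The paper uses an \emph{adaptive} two-phase adversary: it presents $v_1,v_2,\ldots$ each once with label $1$ while monitoring the algorithm's conditional probability $p_t(1\mid v_t)$; at the first round $\tau$ where this probability exceeds $1-1/\sqrt N$ it flips to label $2$ and simply repeats $v_\tau$ for the remaining rounds. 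The analysis is then a short two-case split on whether $\tau\gtrless T/2+1$, using ignorance only to argue that $s-\tau$ (time to first correct prediction in phase~2) is truncated-geometric with parameter $p_\tau(2\mid v_\tau)\le 1/\sqrt N$. Your adversary is instead \emph{randomized and block-structured}: a uniform hidden index $i^*$ fixes all labels in advance, every point is replayed up to $L=\lceil\sqrt N\rceil$ times, and the bound follows from the pointwise inequality $(N-i)A(q)+B(q)\gtrsim\sqrt N$ combined with the observation that the law of $q_i$ is identical under $\{i^*=i\}$ and $\{i^*>i\}$. Both proofs invoke ignorance solely to freeze the prediction distribution across a run of mistakes. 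The paper's route is shorter and sidesteps the resolved/unresolved block bookkeeping you flag as the main obstacle; your route has the minor conceptual advantage that the adversary never needs access to the algorithm's internal probabilities $p_t$, only its realized predictions, and after derandomizing $i^*$ yields a label sequence fixed in advance.
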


Before proving the theorem, we need the following lemma.

\begin{lemma}
\label{lemma:embed_d_gamma}
Let $\gamma \in (0,\frac{1}{160})$, let $d$ be a positive integer and let $N = (\frac{1}{2\sqrt{40\gamma}})^{d-2}$.
There exist vectors $u_1, u_2, \dots, u_N$, $v_1, v_2, \dots, v_N$ in $\R^d$ such that for all $i, j \in \{1,2,\dots,N\}$,
\begin{align*}
\norm{u_i} & \le 1 \; , \\
\norm{v_j} & \le 1 \; , \\
\ip{u_i}{v_j} & \ge \gamma, \quad \text{if $i=j$,} \\
\ip{u_i}{v_j} & \le -\gamma, \quad \text{if $i \neq j$.}
\end{align*}
\end{lemma}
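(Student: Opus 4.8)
The plan is to realize the configuration by taking a well-separated spherical code in $\R^{d-1}$ and appending one extra coordinate whose sole role is to shift every inner product uniformly downward. This shift is what lets the off-diagonal inner products be negative and bounded away from $0$; without it the requirement would amount to asking for $N$ vectors with pairwise negative inner products, which is impossible once $N > d+1$.

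First I would fix a separation parameter $\delta$ of order $\sqrt{\gamma}$ — concretely $\delta = 4\sqrt{\gamma}$, which is at most $1$ since $\gamma < \tfrac1{160}$ — and invoke the standard fact that the unit sphere $S^{d-2} \subseteq \R^{d-1}$ admits a $\delta$-separated set $\{z_1,\dots,z_N\}$ of size at least $(1/\delta)^{d-2}$ (a maximal $\delta$-packing is a $\delta$-covering, and a volume comparison bounds from above the number of spherical caps of radius $\delta$ needed to cover $S^{d-2}$). Since $\delta = 4\sqrt{\gamma} \le 4\sqrt{10\gamma} = 2\sqrt{40\gamma}$, this already gives $N \ge \bigl(\tfrac{1}{2\sqrt{40\gamma}}\bigr)^{d-2}$, the discrepancy between $4\sqrt{\gamma}$ and $2\sqrt{40\gamma}$ being exactly the slack that absorbs the constant in the packing bound. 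Because the $z_i$ are unit vectors, $\norm{z_i - z_j} \ge \delta$ is equivalent to $\ip{z_i}{z_j} \le 1 - \delta^2/2$ whenever $i \ne j$.

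Next I would define, for a shift $c \in (0,1)$ chosen below,
\[
u_i = \tfrac{1}{\sqrt 2}\,(z_i,\, 1) \in \R^d, \qquad v_j = \tfrac{1}{\sqrt 2}\,(z_j,\, -c) \in \R^d .
\]
Then $\norm{u_i}^2 = \tfrac12(1+1) = 1$ and $\norm{v_j}^2 = \tfrac12(1+c^2) \le 1$, while $\ip{u_i}{v_j} = \tfrac12\bigl(\ip{z_i}{z_j} - c\bigr)$, which equals $\tfrac12(1-c)$ when $i = j$ and is at most $\tfrac12\bigl(1 - \tfrac{\delta^2}{2} - c\bigr)$ when $i \ne j$. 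The diagonal value is $\ge \gamma$ exactly when $c \le 1 - 2\gamma$, and the off-diagonal bound is $\le -\gamma$ exactly when $c \ge 1 - \tfrac{\delta^2}{2} + 2\gamma$; with $\delta^2 = 16\gamma$ these leave the interval $[\,1 - 6\gamma,\ 1 - 2\gamma\,]$, so I would take $c = 1 - 4\gamma \in (0,1)$ (using $\gamma < \tfrac14$), which in particular confirms $\norm{v_j} \le 1$. All four claimed inequalities then hold.

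The only place that needs care is the bookkeeping of constants: $\delta$ must be large enough that $\delta^2/2$ beats $2\gamma$ with a margin (so the shift can push the off-diagonal inner products below $-\gamma$), yet small enough that the $\delta$-packing of $S^{d-2}$ still contains at least $N$ points, and $c$ must land in the resulting window while staying below $1$ for the norm bound on the $v_j$. Since there is a multiplicative constant of room in each of these, I do not anticipate a real obstacle; the whole substance of the argument is the shift-coordinate trick, which circumvents the $d+1$ ceiling on vectors with pairwise negative inner products. (I would also include the short volumetric justification of the spherical packing lower bound, since the ambient paper records only an upper bound on packing sizes.)
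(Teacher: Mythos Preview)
Your proposal is correct and follows essentially the same construction as the paper: both take a well-separated spherical code $z_1,\dots,z_N$ on the unit sphere in $\R^{d-1}$ with pairwise inner products at most $1-8\gamma$, then append an extra coordinate (with value roughly $1$ on one side and $-(1-4\gamma)$ on the other) to shift all inner products down by the right amount. The only difference is that the paper obtains the spherical code by citing Lemma~6 of \citet{Long-1995} (phrased in terms of angular separation $\sqrt{40\gamma}$, then converted via $\cos\theta \le 1-\theta^2/5$), whereas you propose to derive it directly from a volumetric packing bound; note that your one-line sketch has the direction reversed (you want to \emph{lower}-bound the number of caps needed to cover $S^{d-2}$, which then lower-bounds the size of a maximal $\delta$-packing), so be sure to state it correctly when you write it out.
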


\begin{proof}
By Lemma 6 of~\citet{Long-1995}, there exists vectors $z_1, z_2, \dots, z_N \in
\R^{d-1}$ such that $\norm{z_1} = \norm{z_2} = \dots = \norm{z_N} = 1$ and the
angle between the vectors is $\measuredangle(z_i, z_j) \ge \sqrt{40 \gamma}$ for
$i \neq j$, $i,j \in \{1,2,\dots,N\}$. Since $\cos\theta \le 1-\theta^2/5$ for
any $\theta \in [-\pi,\pi]$, this implies that
\begin{align*}
\ip{z_i}{z_j} &= 1, \quad \text{if $i = j$,} \\
\ip{z_i}{z_j} &\le 1 - 8\gamma, \quad \text{if $i \neq j$.}
\end{align*}

Define $v_i = (\frac{1}{2} z_i, \frac{1}{2})$, and $u_i = (\frac{1}{2} z_i,
-\frac{1}{2}(1-4\gamma))$ for all $i \in \{1,2,\dots,N\}$. It can be easily
checked that for all $i$, $\norm{v_i} \le 1$ and $\norm{u_i} \le 1$.
Additionally,
$$
\ip{u_i}{v_j} = \frac{1}{4} \ip{z_i}{z_j} - \frac {1-4\gamma} 4 \; .
$$
Thus,
\begin{align*}
\ip{u_i}{v_j} &\ge \gamma, \quad \text{if $i=j$,} \\
\ip{u_i}{v_j} &\le -\gamma, \quad \text{if $i \neq j$.}
\end{align*}
\end{proof}

\begin{proof}[Proof of \autoref{theorem:ignorant-lower-bound}]
We consider the strategy for the adversary described in
Algorithm~\ref{algorithm:adversary-strategy}.

\begin{algorithm}
\caption{\textsc{Adversary's strategy}}
\label{algorithm:adversary-strategy}
\textbf{Define} $T=N$ and $v_1, v_2, \dots, v_N$ as in Lemma~\ref{lemma:embed_d_gamma}.\\
\textbf{Define} $q_0=\frac{1}{\sqrt{T}}$. \\
\textbf{Initialize} $\textsc{phase}= 1$. \\
\For{$t=1,2,\dots,T$}{
    \If{$\textsc{phase}=1$}{
       \If{$p_t(1|v_t) < 1-q_0$}{
          $(x_t, y_t)\leftarrow (v_t, 1)$
        }
       \Else{
          $(x_t, y_t)\leftarrow (v_t, 2)$ \\
          $\textsc{phase}\leftarrow 2$
       }
    }
    \Else{
         $(x_t, y_t)\leftarrow (x_{t-1}, y_{t-1})$
    }
}
\end{algorithm}

Let $\tau$ be the time step $t$ in which the adversary sets $\textsc{phase}\leftarrow 2$.
If the adversary never sets $\textsc{phase}\leftarrow 2$, we define $\tau = T + 1$.
Then,
\begin{align*}
 \Exp \left[\sum_{t=1}^T \indicator{\widehat y_t\neq y_t}\right]
 \ge \Exp\left[\sum_{t=1}^{\tau - 1} \indicator{\widehat y_t\neq y_t}\right]
+ \Exp\left[\sum_{t=\tau}^T \indicator{\widehat y_t\neq y_t}\right] \; .
\end{align*}
We upper bound each of last two terms separately.

In rounds $1,2,\dots,\tau-1$, the algorithm predicts the incorrect class $2$
with probability at least $q_0$. Thus,
\begin{equation}
\Exp\left[\sum_{t=1}^{\tau - 1} \indicator{\widehat y_t\neq y_t}\right] = q_0 \Exp[(\tau - 1)] \; .
\label{eqn:err-phase1}
\end{equation}
In rounds $\tau, \tau+1, \dots, T$, all the examples are the same and are equal
to $(v_\tau, 2)$. Let $s$ be the first time step $t$ such that $t \ge \tau$
and the algorithm makes a correct prediction. If the algorithm makes mistakes
in all rounds $\tau, \tau+1, \dots, T$, we define $s = T+1$.
By definition the algorithm makes mistakes in rounds $\tau, \tau+1, \dots, s-1$.
Therefore,
\begin{equation}
\Exp\left[\sum_{t=\tau}^T \indicator{\widehat y_t\neq y_t}\right] \ge \Exp[s-\tau].
\label{eqn:err-phase2}
\end{equation}
Since the algorithm is ignorant, conditioned on $\tau$ and $q \triangleq p_\tau(2|v_\tau)$, $s-\tau$ follows a truncated geometric distribution with parameter $q$
(i.e., $s-\tau$ is $0$ with probability $q$, $1$ with probability $(1-q)q$, $2$ with probability $(1-q)^2q, \ldots$). Its conditional expectation can be calculated as follows:
\begin{align*}
\Exp[s-\tau~|~\tau, q]
&= \sum_{i=1}^{T+1-\tau} i \times \Pr[s-\tau = i |~\tau, q ] \\
&= \sum_{j=1}^{T+1-\tau}\Pr[s-\tau\geq j | ~\tau, q] \\
& = \sum_{j=1}^{T+1-\tau}(1-q)^j \geq  \sum_{j=1}^{T+1-\tau}(1-q_0)^j \\
& = \frac{1-q_0}{q_0}\left( 1-(1-q_0)^{T-\tau+1} \right).
\end{align*}

Therefore, by the tower property of conditional expectation,
\[ \Exp[s-\tau~|~\tau] = \Exp\left[ \Exp\left[s-\tau~\middle|~\tau, q\right]~ \middle| ~\tau\right] \geq \frac{1-q_0}{q_0}\left( 1-(1-q_0)^{T-\tau+1}\right). \]



Combining this fact with Equations~\eqref{eqn:err-phase1} and~\eqref{eqn:err-phase2}, we have that
\begin{align*}
 \Exp \left[\sum_{t=1}^T \indicator{\widehat y_t\neq y_t}\right]
& \ge q_0 \Exp[\tau - 1] + \Exp \left[\frac{1-q_0}{q_0}\left(1-(1-q_0)^{T-\tau+1}\right)\right] \\
& =  \Exp \left[  q_0 (\tau - 1) + \frac{1-q_0}{q_0}\left(1-(1-q_0)^{T-\tau+1}\right)  \right] \; .
\end{align*}


We lower bound the last expression by considering two cases for $\tau$.
If $\tau \ge \frac{1}{2}T + 1$, then the last expression is lower bounded by
$\frac{1}{2}q_0 T = \frac{1}{2} \sqrt{T}$. If
$\tau < \frac{1}{2}T+1$, it is lower bounded by
\begin{align*}
& \frac{1-q_0}{q_0}\left(1-(1-q_0)^{\frac{1}{2}T}\right) \\
& = \frac{1-q_0}{q_0}\left(1-(1-q_0)^{\frac{1}{2q_0^2}}\right) \\
& \ge \frac{1-\frac{1}{\sqrt{2}}}{q_0}\left(1-\frac{1}{\sqrt{e}}\right) \\
& \ge \frac{1}{10} \sqrt{T} \; .
\end{align*}

Observe that in phase 1, the labels are equal to $1$ and in phase 2 the labels
are equal to $2$. Note that $(x_\tau, y_\tau)=(x_{\tau+1}, y_{\tau+1})= \dots =
(x_T, y_T) = (v_\tau, 2)$. Consider the vectors $u_1, u_2, \dots, u_N$ as
defined in Lemma~\ref{lemma:embed_d_gamma}. We claim that $w_1=-u_\tau/2$ and
$w_2=u_\tau/2$ satisfy the conditions of strong linear separability.

Clearly $\norm{w_1}^2 + \norm{w_2}^2 \le (\norm{w_1} + \norm{w_2})^2 \le
(\frac{1}{2} + \frac{1}{2})^2 \le 1$. By Lemma~\ref{lemma:embed_d_gamma}, we
have $\ip{w_2/2}{x_t} = \ip{u_\tau/2}{v_\tau} \ge \gamma/2, \forall t \ge \tau$ and
$\ip{w_2/2}{x_t} = \ip{u_\tau/2}{v_t} \le - \gamma/2$ for all $t < \tau$. Similarly,
$\ip{w_1/2}{x_t} \le -\gamma/2$ for all $t \ge \tau$ and $\ip{w_1/2}{x_t} \ge
\gamma/2$ for all $t < \tau$. Thus, the examples are strongly linearly
separable with margin $\gamma$.
\end{proof}



\end{document}